\definecolor{LightCyan}{rgb}{0.8, 0.9, 1}
\newcommand{\xmark}{\ding{56}}
\renewcommand*{\backrefalt}[4]{%
    \ifcase #1 \footnotesize{(Not cited.)}%
    \or        \footnotesize{(Cited on page~#2.)}%
    \else      \footnotesize{(Cited on pages~#2.)}%
    \fi}
\newcommand{\ea}{\end{array}}
\newcommand{\red}{\color{red}}
\newcommand{\est}{\text{est}}
\newcommand{\oper}{\text{op}}
\newcommand{\hed}{\text{FE}}
\renewcommand{\norm}[1]{\left\|#1\right\|_2}
\def\red#1{}\def\pb{}
\definecolor{LightCyan}{rgb}{0.8, 0.9, 1}
\newcommand{\yholder}{\mathbf{y}}
\begin{document}

\begin{center}

{\bf{\LARGE{A General Framework for Sample-Efficient Function Approximation in Reinforcement Learning}}}

\vspace*{.2in}

{\large{
\begin{tabular}{ccccc}
Zixiang Chen$^{\ddagger *}$ 
&
Chris Junchi Li$^{\diamond *}$
&
Angela Yuan$^{\ddagger *}$ 
&
Quanquan Gu$^{\ddagger}$
&
Michael I.~Jordan$^{\diamond,\dagger}$
\end{tabular}
}}

\vspace*{.2in}

\begin{tabular}{c}
Department of Computer Sciences,
University of California, Los Angeles$^\ddagger$
\\
Department of Electrical Engineering and Computer Sciences,
University of California, Berkeley$^\diamond$
\\
Department of Statistics,
University of California, Berkeley$^\dagger$
\end{tabular}

\vspace*{.2in}

\today

\vspace*{.2in}

\end{center}

\begin{abstract}
With the increasing need for handling large state and action spaces, general function approximation has become a key technique in reinforcement learning (RL). In this paper, we propose a general framework that unifies model-based and model-free RL, and an  Admissible Bellman Characterization (ABC) class that subsumes nearly all Markov Decision Process (MDP) models in the literature for tractable RL. We propose a novel estimation function with decomposable structural properties for optimization-based exploration and the functional eluder dimension as a complexity measure of the ABC class. Under our framework, a new sample-efficient algorithm namely OPtimization-based ExploRation with Approximation (OPERA) is proposed, achieving regret bounds that match or improve over the best-known results for a variety of MDP models. In particular, for MDPs with low Witness rank, under a slightly stronger assumption, OPERA improves the state-of-the-art sample complexity results by a factor of $dH$. Our framework provides a generic interface to design and analyze new RL models and algorithms.
\end{abstract}

\pb\section{Introduction}\label{sec:intro}
Reinforcement learning (RL) is a decision-making process that seeks to maximize the expected reward when an agent interacts with the environment~\citep{sutton2018reinforcement}. Over the past decade, RL has gained increasing attention due to its successes in a wide range of domains, including Atari games~\citep{mnih2013playing}, Go game~\citep{silver2016mastering}, autonomous driving~\citep{yurtsever2020survey}, Robotics~\citep{kober2013reinforcement}, etc. Existing RL algorithms can be categorized into value-based algorithms such as Q-learning~\citep{watkins1989learning} and policy-based algorithms such as policy gradient~\citep{sutton1999policy}. They can also be categorized as a model-free approach where one directly models the value function classes, or alternatively, a model-based approach where one needs to estimate the transition probability.

Due to the intractably large state and action spaces that are used to model the real-world complex environment, function approximation in RL has become prominent in both algorithm design and theoretical analysis.
It is a pressing challenge to design sample-efficient RL algorithms with general function approximations. In the special case where the underlying Markov Decision Processes (MDPs) enjoy certain linear structures, several lines of works have achieved polynomial sample complexity and/or $\sqrt{T}$ regret guarantees under either model-free or model-based RL settings.
For linear MDPs where the transition probability and the reward function admit linear structure,~\citet{yang2019sample} developed a variant of $Q$-learning when granted access to a generative model, \citet{jin2020provably} proposed an LSVI-UCB algorithm with a $\tilde\cO(\sqrt{d^3 H^3 T})$ regret bound and~\citet{zanette2020learning} further extended the MDP model and improved the regret to $\tilde\cO(d H \sqrt{T})$. Another line of work considers linear mixture MDPs~\citet{yang2020reinforcement, modi2020sample, jia2020model, zhou2021nearly}, where the transition probability can be represented by a mixture of base models. In~\citet{zhou2021nearly}, an $\tilde\cO(d H \sqrt{T})$ minimax optimal regret was achieved with weighted linear regression and a Bernstein-type bonus. Other structural MDP models include the block MDPs~\citep{du2019provably} and FLAMBE~\citep{agarwal2020flambe}, to mention a few.

In a more general setting, however, there is still a gap between the plethora of MDP models and sample-efficient RL algorithms that can learn the MDP model with function approximation. The question remains open as to what constitutes minimal structural assumptions that admit sample-efficient reinforcement learning. To answer this question, there are several lines of work along this direction. \citet{russo2013eluder,osband2014model} proposed an structural condition named eluder dimension, and \citet{wang2020reinforcement} extended the LSVI-UCB for general linear function classes with small eluder dimension. Another line of works proposed low-rank structural conditions, including Bellman rank \citep{jiang2017contextual, dong2020root} and Witness rank \citep{sun2019model}. Recently, \citet{jin2021bellman} proposed a complexity called Bellman eluder (BE) dimension, which unifies low Bellman rank and low eluder dimension. Concurrently, \citet{du2021bilinear} proposed Bilinear Classes, which can be applied to a variety of loss estimators beyond vanilla Bellman error. Very recently, \citet{foster2021statistical} proposed Decision-Estimation Coefficient (DEC), which is a necessary and sufficient condition for sample-efficient interactive learning. To apply DEC to RL, they proposed a RL class named Bellman Representability, which can be viewed as a generalization of the Bilinear Class. Nevertheless,~\citet{sun2019model} is limited to model-based RL, and ~\citet{jin2021bellman} is restricted to model-free RL. The only frameworks that can unify both model-based and model-free RL are~\citet{du2021bilinear} and~\citet{foster2021statistical}, but their sample complexity results when restricted to special MDP instances do not always match the best-known results. Viewing the above gap, we aim to answer the following question: 
\begin{center}
\emph{Is there a unified framework that includes all model-free and model-based RL classes while maintaining sharp sample efficiency?}
\end{center}

\begin{figure}[!t]
\centering
\includegraphics[scale=0.5]{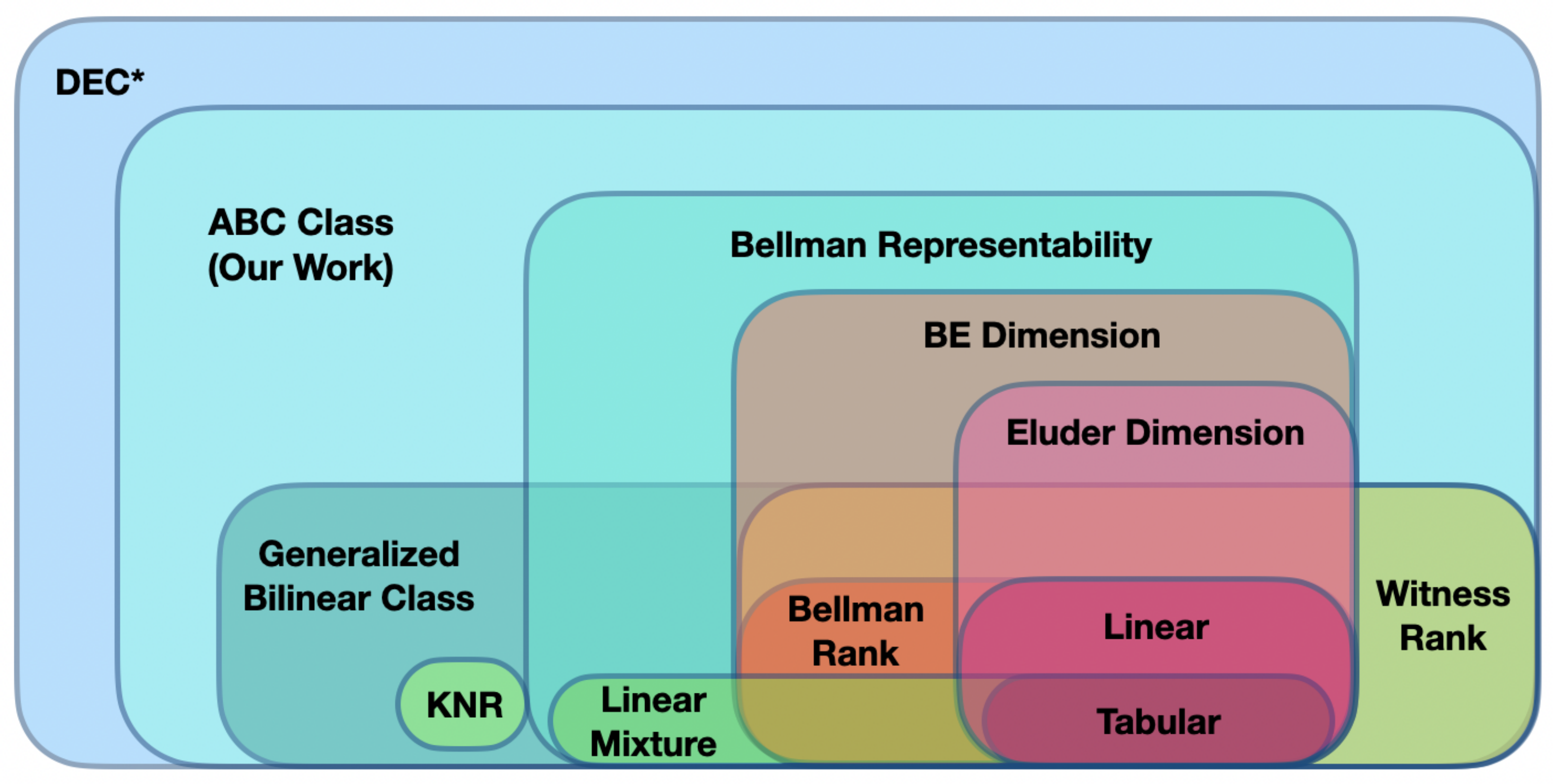}
\caption{
\textbf{Venn-Diagram Visualization of Prevailing Sample-Efficient RL Classes.}
As by far the richest concept, the DEC framework is both a necessary and sufficient condition for sample-efficient interactive learning.
BE dimension is a rich class that subsumes both low Bellman rank and low eluder dimension and addresses almost all model-free RL classes.
The generalized Bilinear Class captures model-based RL settings including KNRs, linear mixture MDPs and low Witness rank MDPs, yet precludes some eluder-dimension based models.
Bellman Representability is another unified framework that subsumes the vanilla bilinear classes but fails to capture KNRs and low Witness rank MDPs.
Our ABC class encloses both generalized Bilinear Class and Bellman Representability and subsumes almost all known solvable MDP cases, with the exception of the $Q^*$ state-action aggregation and deterministic linear $Q^*$ MDP models, which neither Bilinear Class nor our ABC class captures.
}
\label{fig:Demo_2dl}
\end{figure}

In this paper, we tackle this challenging question and give a \emph{nearly} affirmative answer to it. We summarize our contributions as follows:

\begin{itemize}[leftmargin=*]
\item 
We propose a general framework called Admissible Bellman Characterization (ABC) that covers a wide set of structural assumptions in both model-free and model-based RL, such as linear MDPs, FLAMBE, linear mixture MDPs, kernelized nonlinear regulator~\citep{kakade2020information}, etc.
Furthermore, our framework encompasses comparative structural frameworks such as the low Bellman eluder dimension and low Witness rank.

\item
Under our ABC framework, we design a novel algorithm, OPtimization-based ExploRation with Approximation (OPERA), based on maximizing the value function while constrained in a small confidence region around the model minimizing the estimation function.

\item
We apply our framework to several specific examples that are known to be not sample-efficient with value-based algorithms. For the kernelized nonlinear regulator (KNR), our framework is the first general framework to derive a $\sqrt{T}$ regret-bound result.
For the witness rank, our framework yields a sharper sample complexity with a mild additional assumption compared to prior works.
\end{itemize}
We visualize and compare prevailing sample-efficient RL frameworks and ours in Figure \ref{fig:Demo_2dl}.
We can see that both the general Bilinear Class and our ABC frameworks capture most existing MDP classes, including the low Witness rank and the KNR models.
Also in Table~\ref{table:11}, we compare our ABC framework with other structural RL frameworks in terms of the model coverage and sample complexity.

\newcolumntype{g}{>{\columncolor{LightCyan}}c}
\begin{table*}[!tb]
\centering
\resizebox{\textwidth}{!}{%
\begin{tabular}{ccccg}
\toprule
   & Bilinear  &  Low BE       & DEC and Bellman    &  ABC Class (with
\\
   & Class     &  Dimension    & Representability   & Low FE Dimension)
\\ 
\midrule
Linear MDPs & & & &\\
\small{\citep{yang2019sample,jin2020provably}}  &\multirow{-2}{*}{$d^{3}H^{4}/\epsilon^{2}$}&\multirow{-2}{*}{$d^{2}H^{2}/\epsilon^{2}$}&\multirow{-2}{*}{$d^{3}H^{3}/\epsilon^{2}$}& \multirow{-2}{*}{$d^{2}H^{2}/\epsilon^{2}$}\\
\midrule
Linear Mixture MDPs & & & &\\
\small{\citep{modi2020sample}}  & \multirow{-2}{*}{$d^{3}H^{4}/\epsilon^{2}$} &\multirow{-2}{*}{\ding{56}}& \multirow{-2}{*}{$d^{3}H^{3}/\epsilon^{2}$}& \multirow{-2}{*}{$d^{2}H^{2}/\epsilon^{2}$}\\
\midrule Bellman Rank & & & &\\
\small{\citep{jiang2017contextual}}  & \multirow{-2}{*}{$d^2 H^5|\cA|/\epsilon^2$} &\multirow{-2}{*}{$d H^2|\cA|/\epsilon^2$}&\multirow{-2}{*}{$d^{2} H^3|\cA|/\epsilon^2$}& \multirow{-2}{*}{$d H^2|\cA|/\epsilon^2$}\\
\midrule 
Eluder Dimension & & & &\\
\small{\citep{wang2020reinforcement}}  & \multirow{-2}{*}{\ding{56}} &\multirow{-2}{*}{$\dim_{\text{E}}H^2/\epsilon^2$}&\multirow{-2}{*}{$\dim_{\text{E}}^{2}H^3/\epsilon^2$}& \multirow{-2}{*}{$\dim_{\text{E}}H^2/\epsilon^2$}\\
\midrule 
Witness Rank & & & &\\
\small{\citep{sun2019model}}  & \multirow{-2}{*}{---} &\multirow{-2}{*}{\ding{56}}& \multirow{-2}{*}{---} & \multirow{-2}{*}{$W_{\kappa}H^2|\cA|/\epsilon^2$}\\
\midrule 
Low Occupancy Complexity & & & &\\
\small{\citep{du2021bilinear}}  & \multirow{-2}{*}{$d^{3}H^{4}/\epsilon^{2}$} &\multirow{-2}{*}{$d^{2}H^{2}/\epsilon^{2}$}&\multirow{-2}{*}{$d^{3}H^{3}/\epsilon^{2}$}& \multirow{-2}{*}{$d^{2}H^{2}/\epsilon^{2}$}\\
\midrule 
Kernelized Nonlinear Regulator & & & &\\
\small{\citep{kakade2020information}}  & \multirow{-2}{*}{---} &\multirow{-2}{*}{\ding{56}}& \multirow{-2}{*}{---} & \multirow{-2}{*}{   $d_{\phi}^2 d_s H^4/\epsilon^2$
    }\\
\midrule 
Linear $Q^{*}/V^{*}$ & & & &\\
\small{\citep{du2021bilinear}}  & \multirow{-2}{*}{$d^{3}H^{4}/\epsilon^{2}$} &\multirow{-2}{*}{$d^{2}H^{2}/\epsilon^{2}$}&\multirow{-2}{*}{$d^{3}H^{3}/\epsilon^{2}$}& \multirow{-2}{*}{$d^{2}H^{2}/\epsilon^{2}$}\\
\bottomrule
\end{tabular}
}
\caption{%
Comparison of sample complexity for different MDP models under different RL frameworks. ``---'' indicates that the original work of framework does not provide an explicit sample complexity result for that model (although can be computed in principle), ``$ $\xmark$ $'' indicates the model is not included in the framework for complexity analysis.
For models with the linear structure on a $d$-dimensional space, we present the sample complexity in terms of $d$.
For models with their own complexity measures, we use $W_{\kappa}$ to denote the witness rank, $\dim_{\text{E}}$ the eluder dimension, $d_{\phi}$ the dimension of $\cH$ in KNR and $d_s$ the dimension number of the state space of KNR.
The dependency on $\rho$-covering number is deliberately ignored for Bellman rank, eluder dimension, and the witness rank.
}\label{table:11}
\end{table*}

\paragraph{Organization.}
The rest of this work is organized as follows.
\S\ref{sec:prelim} introduces the preliminaries.
\S\ref{sec:def} formally introduces the admissible Bellman characterization framework.
\S\ref{sec:alg} presents OPERA algorithm and main regret bound results.
\S\ref{sec:conclude} concludes this work with future directions.
Due to space limit, a comprehensive review of related work and detailed proofs are deferred to the appendix.

\paragraph{Notation.}
For a state-action sequence $s_1, a_1, \ldots, s_H$ in our given context, we use $\cJ_h := \sigma(s_1, a_1, \ldots, s_h)$ to denote the $\sigma$-algebra generated by trajectories up to step $h\in [H]$. Let $\pi_f$ denote the policy of following the max-$Q$ strategy induced by hypothesis $f$. When $f = f^i$ we write $\pi_{f^i}$ as $\pi^i$ for notational simplicity. We write $s_h \sim \pi$ to indicate the state-action sequence are generated by step $h \in [H]$ by following policy $\pi(\cdot \mid s)$ and transition probabilities $\PP(\cdot \mid s, a)$ of the underlying MDP model $M$. We also write $a_h \sim \pi$ to mean $a_h \sim \pi(\cdot \mid s_h)$ for the $h$th step. Let $\|\cdot\|_2$ denote the $\ell_2$-norm and $\|\cdot\|_{\infty}$ the $\ell_{\infty}$-norm of a given vector. Other notations will be explained at their first appearances.

\pb\section{Preliminaries}\label{sec:prelim}

We consider a finite-horizon, episodic Markov Decision Process (MDP) defined by the tuple $M=(\cS, \cA, \PP, r, H)$, where $\cS$ is the space of feasible states, $\cA$ is the action space. $H$ is the horizon in each episode defined by the number of action steps in one episode, and $\PP := \{ \PP_h\}_{h \in [H]}$ is defined for every $h \in [H]$ as the transition probability from the current state-action pair $(s_h, a_h) \in \cS \times \cA$ to the next state $s_{h+1} \in \cS$. We use $r_h(s, a) \geq 0$ to denote the reward received at step $h \in [H]$ when taking action $a$ at state $s$ and assume throughout this paper that for any possible trajectories, $\sum_{h = 1}^H r_h(s_h, a_h) \in [0, 1]$.

A deterministic policy $\pi$ is a sequence of functions $\left\{ \pi_h: \cS \mapsto \cA \right\}_{h \in [H]}$, where each $\pi_h$ specifies a strategy at step $h$.
Given a policy $\pi$, the action-value function is defined to be the expected cumulative rewards where the expectation is taken over the trajectory distribution generated by $\left\{\left(\PP_h(\cdot\mid s_h, a_h), \pi_h(\cdot \mid s_h)\right)\right\}_{h \in [H]}$ as
\begin{align*}
Q_h^{\pi}(s, a) 
	&:= 
\EE_{\pi} \left[\sum_{{h'} = h}^H r_{h'}(s_{h'}, a_{h'})\,\bigg|\, s_h = s, a_h = a\right]
.
\end{align*}
Similarly, we define the state-value function for policy $\pi$ as the expected cumulative rewards as
\begin{align*}
V_h^{\pi}(s) 
	:= 
 \EE_{\pi} \left[\sum_{h' = h}^H r_{h'}(s_{h'}, a_{h'})\,\bigg|\, s_h = s\right]
.
\end{align*}
We use $\pi^*$ to denote the optimal policy that satisfies $V_h^{\pi^*}(s) = \max_{\pi} V_h^\pi(s)$ for all $s \in \cS$~\citep{puterman2014markov}.
For simplicity, we abbreviate $V_h^{\pi^*}$ as $V_h^*$ and $Q_h^{\pi^*}$ as $Q_h^*$.
Moreover, for a sequence of value functions $\{Q_h\}_{h \in [H]}$, the Bellman operator at step $h$ is defined as:
\begin{align*}
    \left(\cT_h Q_{h+1}\right)(s, a)
        &=
    r_h(s, a) +
    \EE_{s' \sim \PP_h(\cdot \mid s, a)} \max_{a' \in \cA} Q_{h+1}(s', a')
.
\end{align*}
We also call $Q_h - (\cT_h Q_{h+1})$ the Bellman error (or Bellman residual).
The goal of an RL algorithm is to find an $\epsilon$-optimal policy such that $V_1^\pi(s_1) - V_1^*(s_1) \leq \epsilon$. For an RL algorithm that updates the policy $\pi^t$ for $T$ iterations, the cumulative regret is defined as
\begin{align*}
    \text{Regret}(T)
        :=
    \sum_{t = 1}^T
    \left[ 
        V_1^{\pi^t}(s_1) - V_1^*(s_1)    
    \right]
,
\end{align*}

\paragraph{Hypothesis Classes.} Following~\citet{du2021bilinear}, we define the hypothesis class for both model-free and model-based RL.
Generally speaking, a hypothesis class is a set of functions that are used to estimate the value functions (for model-free RL) or the transitional probability and reward (for model-based RL).
Specifically, a hypothesis class $\cF$ on a finite-horizon MDP is the Cartesian product of $H$ hypothesis classes $\cF := \cF_1 \times \ldots \times \cF_H$ in which each hypothesis $f = \{f_h\}_{h \in [H]} \in \cF$ can be identified by a pair of value functions $\{Q_f, V_f\} = \{Q_{h, f}, V_{h, f}\}_{h \in [H]}$.
Based on the value function pair, it is natural to introduce the corresponding policy of a hypothesis $\pi_{f}(s) = \arg\max_{\pi} \EE_{a \sim \pi} \left[  Q_{h, f}(s, a)  \right]
$ which simply takes action $\pi_{h, f}(s) = \arg\max_{a\in \cA} Q_{h, f}(s, a)$ at each step $h\in [H]$.

An example of a model-free hypothesis class is defined by a sequence of action-value function $\{Q_{h, f}\}_{h \in [H]}$. The corresponding state-value function is given by:
\begin{align*}
V_{h, f}(s) = \EE_{a \sim \pi_{h, f}} \left[ Q_{h, f} (s, a)\right]
.
\end{align*}
In another example that falls under the model-based RL setting, where for each hypothesis $f \in \cF$ we have the knowledge of the transition matrix $\PP_f$ and the reward function $r_f$.
We define the value function $Q_{h, f}$ corresponding to hypothesis $f$ as the optimal value function following $M_f := (\PP_f, r_f)$:
\begin{align*}
Q_{h, f}(s, a) = Q^*_{h, M_f}(s, a)
\qquad \text{and} \quad
V_{h, f}(s) = V^*_{h, M_f}(s)
.
\end{align*}

We also need the following realizability assumption that requires the true model $M_{f^{*}}$ (model-based RL) or the optimal value function $f^*$ (model-free RL) to belong to the hypothesis class $\cF$. 

\begin{assumption}[Realizability]\label{assu:realizability}
For an MDP model $M$ and a hypothesis class $\cF$, we say that the hypothesis class $\cF$ is \emph{realizable with respect to $M$} if there exists a $f^* \in \cF$ such that for any $h \in [H]$, $Q^*_h(s, a) = Q_{h, f^*}(s, a)$. We call such $f^*$ an \emph{optimal hypothesis}.
\end{assumption}

This assumption has also been made in the Bilinear Classes \citep{du2021bilinear} and low Bellman eluder dimension frameworks \citep{jin2021bellman}.
We also define the \emph{$\epsilon$-covering number} of $\cF$ under a well-defined metric $\rho$ of a hypothesis class $\cF$:%
\footnote{For example for model-free cases where $f, g$ are value functions, $\rho(f, g) = \max_{h \in [H]} \|f_h - g_h\|_{\infty}$. For model-based RL where $f, g$ are transition probabilities, we adopt $\rho(\PP, \QQ) = \max_{h  \in [H]} \int (\sqrt{d\PP_h} - \sqrt{d\QQ_h})^2$ which is the maximal (squared) Hellinger distance between two probability distribution sequences.}

\begin{definition}[$\epsilon$-covering Number of Hypothesis Class]
For any $\epsilon > 0$ and a hypothesis class $\cF$, we use $N_{\cF}(\epsilon)$ to denote the $\epsilon$-covering number, which is the smallest possible cardinality of (an $\epsilon$-cover) $\cF_{\epsilon}$ such that for any $f \in \cF$ there exists a $f' \in \cF_{\epsilon}$ such that $\rho(f, f') \leq \epsilon$.
\end{definition}

\paragraph{Functional Eluder Dimension.}
We proceed to introduce our new complexity measure, \emph{functional eluder dimension}, which generalizes the concept of \emph{eluder dimension} firstly proposed in bandit literature~\citep{russo2013eluder,russo2014learning}.
It has since become a widely used complexity measure for function approximations in RL~\citep{wang2020reinforcement,ayoub2020model,jin2021bellman,foster2021statistical}.
Here we revisit its definition:

\begin{definition}[Eluder Dimension]\label{def:eluder}
For a given space $\cX$ and a class $\cF$ of functions defined on $\cX$,
the \emph{eluder dimension} $\dim_{\cE}(\cF, \epsilon)$ is the length of the existing longest sequence $x_1, \ldots, x_n \in \cX$ satisfying for some $\epsilon' \geq \epsilon$ and any $2 \leq t \leq n$, there exist $f_1, f_2 \in \cF$ such that
$\sqrt{\sum_{i = 1}^{t - 1} \left(f_1(x_i) - f_2(x_i) \right)^2} \leq \epsilon'$ while $|f_1(x_t) - f_2(x_t)| > \epsilon'$.
\end{definition}

The eluder dimension is usually applied to the state-action space $\cX = \cS \times \cA$ and the corresponding value function class $\cF: \cS \times \cA \rightarrow \RR$~\citep{jin2021bellman, wang2020reinforcement}. 
We extend the concept of eluder dimension as a complexity measure of the hypothesis class, namely, the \emph{functional eluder dimension}, which is formally defined as follows.

\begin{definition}[Functional Eluder Dimension]\label{def:hef}
For a given hypothesis class $\cF$ and a function $G$ defined on $\cF \times \cF$,
the \emph{functional eluder dimension (FE dimension)} $\dim_{\hed}(\cF, G, \epsilon)$ is the length of the existing longest sequence $f_1, \ldots, f_n \in \cF$ satisfying for some $\epsilon' \geq \epsilon$ and any $2 \leq t \leq n$, there exists $g \in \cF$ such that
$\sqrt{\sum_{i = 1}^{t - 1} \left(G(g, f_i) \right)^2} \leq \epsilon'$ while $|G(g, f_t)| > \epsilon'$.
Function $G$ is dubbed as the \emph{coupling function}.
\end{definition}

The notion of functional eluder dimension introduced in Definition \ref{def:hef} is generalizable in a straightforward fashion to a sequence $G := \{G_h\}_{h \in [H]}$ of coupling functions: we simply set $\dim_{\hed}(\cF, G, \epsilon) = \max_{h \in [H]} \dim_{\hed}(\cF, G_h, \epsilon)$ to denote the FE dimension of $\{G_h\}_{h \in [H]}$.
The Bellman eluder (BE) dimension recently proposed by~\citep{jin2021bellman} is in fact a special case of FE dimension with a specific choice of coupling function sequence.%
\footnote{Indeed, when the coupling function is chosen as the expected Bellman error $G_h(g, f) := \EE_{\pi_{h, f}}(Q_{h, g} - \cT_h Q_{g, h+1})$ where $\cT_h$ denotes the Bellman operator, we recover the definition of BE dimension~\citep{jin2021bellman}, i.e.~$\dim_{\hed}(\cF, G, \epsilon) = \dim_{\text{BE}}(\cF, G, \epsilon)$.}
As will be shown later, our framework based on FE dimension with respect to the corresponding coupling function captures many specific MDP instances such as the kernelized nonlinear regulator (KNR) \citep{kakade2020information} and the generalized linear Bellman complete model \citep{wang2019optimism},
which are not captured by the framework of low BE dimension.
As we will see in later sections, introducing the concept of FE dimension allows the coverage of a strictly wider range of MDP models and hypothesis classes.


\pb\section{Admissible Bellman Characterization Framework}\label{sec:def}
In this section, we first introduce the Admissible Bellman Characterization (ABC) class which covers a wide range of MDPs in \S\ref{sec:def_abc}, and then introduce the notion of Decomposable Estimation Function (DEF) which extends the Bellman error.
We discuss MDP instances that belong to the ABC class with low FE dimension in \S\ref{sec:examples}.

\pb\subsection{Admissible Bellman Characterization}\label{sec:def_abc}
Given an MDP $M$, a sequence of states and actions $s_1, a_1, \ldots, s_H$, two hypothesis classes $\cF$ and $\cG$ satisfying the realizability assumption (Assumption \ref{assu:realizability}),%
\footnote{We assume $\cF \subseteq \cG$ throughout this paper and in the general case where $\cF \not\subseteq \cG$, we overload $\cG := \cF \cup \cG$.}
and a \emph{discriminator function class} $\cV = \{v(s, a, s'): \cS \times \cA \times \cS \rightarrow \RR \}$, the \emph{estimation function} $\ell = \{\ell_{h, f'}\}_{h \in [H], f' \in \cF}$ is an $\RR^{d_s}$-valued function defined on the set consisting of $o_h := (s_h, a_h, s_{h+1}) \in \cS \times \cA \times \cS$, $f \in \cF$, $g \in \cG$ and $v \in \cV$ and serves as a surrogate loss function of the Bellman error.
Note that our estimation function is a vector-valued function, and is more general than the scalar-valued estimation function (or discrepancy function) used in \citet{foster2021statistical,du2021bilinear}.
The \emph{discriminator} $v$ originates from the function class the Integral Probability Metrics (IPM)~\citep{muller1997integral} is taken with respect to (as a metric between two distributions), and is also used in the definition of Witness rank \citep{sun2019model}.

We use a coupling function $G_{h, f^*}(f, g)$ defined on $\cF \times \cF$ to characterize the interaction between two hypotheses $f,g \in \cF$. The subscript $f^*$ is an indicator of the \emph{true model} and is by default unchanged throughout the context. When the two hypotheses coincide, our characterization of the coupling function reduces to the Bellman error.

\begin{definition}[Admissible Bellman Characterization]\label{def:abc}
Given an MDP $M$, two hypothesis classes $\cF, \cG$ satisfying the realizability assumption (Assumption \ref{assu:realizability}) and $\cF \subset \cG$, an estimation function $\ell_{h, f'}: (\cS \times \cA \times \cS) \times \cF \times \cG \times \cV \rightarrow \RR^{d_s}$, an operation policy $\pi_{\oper}$ and a constant $\kappa \in (0, 1]$, we say that $G$ is an \emph{admissible Bellman characterization} of $(M, \cF, \cG, \ell)$ if the following conditions hold:
\begin{enumerate}[label=(\roman*),leftmargin=5mm]
\item\textbf{(Dominating Average Estimation Function)}\label{geq}
For any $f, g \in \cF$ 
\begin{align*}
 \max_{v \in \cV}\EE_{s_h \sim \pi_g, a_h \sim \pi_{\oper}}
 \norm{\EE_{s_{h+1}} \left[\ell_{h, g}(o_h, f_{h+1}, f_h, v) \mid s_h, a_h \right]}^2
	\geq 
\left(G_{h, f^*}(f, g)\right)^2
.
\end{align*}
\item\textbf{(Bellman Dominance)}\label{leq}
For any $(h, f)\in [H] \times \cF$,
$$
  \kappa \cdot \left|\EE_{s_h, a_h \sim \pi_f} \left[Q_{h, f}(s_h, a_h) - r(s_h, a_h) - V_{h+1, f}(s_{h+1})\right]\right| \leq \left|G_{h, f^*}(f, f)\right|
  .
$$
\end{enumerate}
We further say $(M, \cF, \cG, \ell, G)$ is an \emph{ABC class} if $G$ is an admissible Bellman characterization of $(M, \cF, \cG, \ell)$. 
\end{definition}

In Definition \ref{def:abc}, one can choose either $\pi_{\oper} = \pi_g$ or $\pi_{\oper} = \pi_f$.
We refer readers to~\S\ref{app:v-type} for further explanations on $\pi_{\oper}$.
The ABC class is quite general and de facto covers many existing MDP models; see \S\ref{sec:examples} for more details.

\paragraph{Comparison with Existing MDP Classes.} Here we compare our ABC class with three recently proposed MDP structural classes: Bilinear Classes~\citep{du2021bilinear}, low Bellman eluder dimension~\citep{jin2021bellman}, and Bellman Representability~\citep{foster2021statistical}.

\begin{itemize}[leftmargin=*]
\item \emph{Bilinear Classes.} Compared to the structural framework of Bilinear Class in~\citet[Definition 4.3]{du2021bilinear}, Definition~\ref{def:abc} of Admissible Bellman Characterization does not require a bilinear structure and recovers the Bilinear Class when we set $
G_{h, f^*}(f, g) = \left\langle 
W_h(g) - W_h(f^*), X_h(f)
\right\rangle 
$. Our ABC class is strictly broader than the Bilinear Class since the latter does not capture low eluder dimension models, and our ABC class does. In addition, the ABC class admits an estimation function that is \emph{vector-valued}, and the corresponding algorithm achieves a $\sqrt{T}$-regret for KNR case while the BiLin-UCB algorithm for Bilinear Classes~\citep{du2021bilinear} does not.

\item \emph{Low Bellman Eluder Dimension.} Definition~\ref{def:abc} subsumes the MDP class of low BE dimension when $\ell_{h, f'}(o_h, f_{h+1}, g_h, v) := Q_{h, g}(s_h, a_h) - r_h - V_{h+1, f}(s_{h+1})$. Moreover, our definition unifies the $V$-type and $Q$-type problems under the same framework by the notion of $\pi_{\oper}$. We will provide a more detailed discussion on this in \S\ref{sec:examples}.
Our extension from the concept of the Bellman error to estimation function (i.e.~the surrogate of the Bellman error) enables us to accommodate model-based RL for linear mixture MDPs, KNR model, and low Witness rank.

\item \emph{Bellman Representability.}~\citet{foster2021statistical} proposed DEC framework which is another MDP class that unifies both the Bilinear Class and the low BE dimension.
Indeed, our ABC framework introduced in Definition~\ref{def:abc} shares similar spirits with the Bellman Representability Definition F.1 in \citet{foster2021statistical}.
Nevertheless, our framework and theirs bifurcate from the base point: our work studies an optimization-based exploration instead of the posterior sampling-based exploration in~\citet{foster2021statistical}.
Structurally different from their DEC framework, our ABC requires estimation functions to be vector-valued, introduces the discriminator function $v$, and imposes the weaker Bellman dominance property \ref{geq} in Definition~\ref{def:abc} than the corresponding one as in~\citet[Eq.~(166)]{foster2021statistical}.
In total, this allows broader choices of coupling function $G$ as well as our ABC class (with low FE dimension) to include as special instances both low Witness rank and KNR models, which are not captured in \citet{foster2021statistical}.
\end{itemize}

\paragraph{Decomposable Estimation Function.}
Now we introduce the concept of \emph{decomposable estimation function}, which generalizes the Bellman error in earlier literature and plays a pivotal role in our algorithm design and analysis.

\begin{definition}[Decomposable Estimation Function]\label{def:def}
A \emph{decomposable estimation function} $\ell: (\cS \times \cA \times \cS) \times \cF \times \cG \times \cV \rightarrow \RR^{d_s}$ is a function with bounded $\ell_2$-norm such that the following two conditions hold:
\begin{enumerate}[label=(\roman*),leftmargin=5mm]
\item\textbf{(Decomposability)}\label{decomp}
There exists an operator that maps between two hypothesis classes $\cT(\cdot): \cF \rightarrow \cG$%
\footnote{
The decomposability item \ref{decomp} in Definition \ref{def:def} directly implies that a Generalized Completeness condition similar to Assumption 14 of~\citet{jin2021bellman} holds.
}
such that for any $f \in \cF$, $(h, f', g, v) \in [H] \times \cF \times \cG \times \cV$ and all possible $o_h$
\begin{align*}
\ell_{h, f'}(o_h, f_{h+1}, g_h, v) - \EE_{s_{h+1}} \left[
\ell_{h, f'}(o_h, f_{h+1}, g_h, v) \mid s_h, a_h
\right]
	=
 \ell_{h, f'}(o_h, f_{h+1}, \cT(f)_h, v)
.
\end{align*}
Moreover, if $f = f^*$, then $\cT(f) = f^*$ holds.

\item\textbf{(Global Discriminator Optimality)}\label{opt}
For any $f \in \cF$ there exists a global maximum $v_h^*(f) \in \cV$ such that for any $(h, f', g, v) \in [H] \times \cF \times \cG \times \cV$ and all possible $o_h$
\begin{align*}
\norm{\EE_{s_{h+1}}\left[ 
\ell_{h, f'}(o_h, f_{h+1}, f_h, v_h^*(f)) \mid s_h, a_h
\right]}
	\geq 
\norm{\EE_{s_{h+1}}\left[ 
\ell_{h, f'}(o_h, f_{h+1}, f_h, v) \mid s_h, a_h
\right]}
.
\end{align*}
\end{enumerate}
\end{definition}

Compared with the discrepancy function or estimation function used in prior work~\citep{du2021bilinear,foster2021statistical}, our estimation function (EF) admits the unique properties listed as follows:

\begin{enumerate}[label=(\alph*),leftmargin=5mm]
\item
Our EF enjoys a decomposable property inherited from the Bellman error --- intuitively speaking, the decomposability can be seen as a property shared by all functions in the form of the difference of a $\cJ_h$-measurable function and a $\cJ_{h+1}$-measurable function;

\item
Our EF involves a discriminator class and assumes the global optimality of the discriminator on all $(s_h, a_h)$ pairs; 

\item
Our EF is a vector-valued function which is more general than a scalar-valued estimation function (or the discrepancy function).

\end{enumerate}

We remark that when $f = g$, $\EE_{s_{h+1}} \left[\ell_{h, f'}(o_h, f_{h+1}, f_h, v) \mid s_h, a_h \right]$ measures the discrepancy in optimality between $f$ and $f^*$.
In particular, when $f = f^*$, $
\EE_{s_{h+1}} \left[
\ell_{h, f'}(o_h, f^*_{h+1}, f^*_h, v) \mid s_h, a_h
\right] = 0
$.
Consider a special case when $\ell_{h, f'}(o_h, f_{h+1}, g_h, v) := Q_{h, g}(s_h, a_h) - r(s_h, a_h) - V_{h+1, f}(s_{h+1})$.
Then the decomposability~\ref{decomp} in Definition~\ref{def:def} reduces to 
\begin{align*}
    &
    \left[Q_{h, g}(s_h, a_h) - r(s_h, a_h) - V_{h+1, f}(s_{h+1})\right]
        -
     \left[Q_{h, g}(s_h, a_h) - (\cT_h V_{h+1})(s_h, a_h)\right]
        \\&=
    (\cT_h V_{h+1})(s_h, a_h) -  r(s_h, a_h) - V_{h+1, f}(s_{h+1})
.
\end{align*}

In addition, we make the following Lipschitz continuity assumption on the estimation function.

\begin{assumption}[Lipschitz Estimation Function]\label{assu:lipschitz}
There exists a $L > 0$ such that for any $(h, f', f, g, v) \in [H] \times \cF \times \cF \times  \cG \times \cV$, $(\tilde{f}, \tilde{g}, \tilde{v}, \tilde{f}') \in \cF \times \cG \times \cV \times \cF$ and all possible $o_h$,
\begin{align*}
&\left\|
        \ell_{h, f'}(\cdot, f, g, v) 
            -
        \ell_{h, f'}(\cdot, \tilde{f}, g, v)
    \right\|_{\infty}
        \leq 
        L  
        \rho(f, \tilde{f})
        ,
&
\left\|
        \ell_{h, f'}(\cdot, f, g, v) 
            -
        \ell_{h, f'}(\cdot, f, \tilde{g}, v)
    \right\|_{\infty}
        \leq 
        L 
        \rho(g, \tilde{g})
        ,
        \\
&        \left\|
        \ell_{h, f'}(\cdot, f, g, v) 
            -
        \ell_{h, f'}(\cdot, f, g, \tilde{v})
    \right\|_{\infty}
        \leq 
        L  \left\| v - \tilde{v} \right\|_{\infty}
        ,
&
\left\|
        \ell_{h, f'}(\cdot, f, g, v) 
            -
        \ell_{h, \tilde{f}'}(\cdot, f, g, v)
    \right\|_{\infty}
        \leq 
        L  \rho( f', \tilde{f}')
        .
\end{align*}
\end{assumption}
Note that we have omitted the subscript $h$ of hypotheses in Assumption~\ref{assu:lipschitz} for notational simplicity. We further define the induced estimation function class as $\cL = \{\ell_{h, f'}(\cdot, f,g,v): (h, f', f, g, v) \in [H] \times \cF \times \cF \times \cG \times \cV\}$.
We can show that under Assumption~\ref{assu:lipschitz}, the covering number of the induced estimation function class $\cL$ can be upper bounded as $N_{\cL}(\epsilon) \leq N_{\cF}^2(\frac{\epsilon}{4L}) N_{\cG}(\frac{\epsilon}{4L}) N_{\cV}(\frac{\epsilon}{4L})$, where $N_{\cF}(\epsilon), N_{\cG}(\epsilon), N_{\cF}(\epsilon)$ are the $\epsilon$-covering number of $\cF$, $\cG$ and $\cV$, respectively. Later in our theoretical analysis in \S\ref{sec:alg}, our regret upper bound will depend on the growth rate of the covering number or the \emph{metric entropy}, $\log N_{\cF}(\epsilon)$.

\pb\subsection{MDP Instances in the ABC Class}\label{sec:examples}
In this subsection, we present a number of MDP instances that belong to ABC class with low FE dimension.
As we have mentioned before, for all special cases with $
\ell_{h, f'}(o_h, f_{h+1}, g_h, v)
    :=
Q_{h, g}(s_h, a_h) - r_h - V_{h+1, f}(s_{h+1})
$, both conditions in Definition~\ref{def:abc} are satisfied automatically with $
G_{h, f^*}(f, g)
=
\EE_{s_h\sim \pi_g, a_h \sim \pi_{\oper}}
\left[ 
    Q_{h, f}(s_h, a_h) - r_h - V_{h+1, f}(s_{h+1})  
\right]
$.
The FE dimension under this setting recovers the the BE dimension. Thus, all model-free RL models with low BE dimension~\citep{jin2021bellman} belong to our ABC class with low FE dimension.
In the rest of this subsection, our focus shifts to the model-based RLs that belong to the ABC class: linear mixture MDPs, low Witness rank, and kernelized nonlinear regulator.

\paragraph{Linear Mixture MDPs.}
We start with a model-based RL with a linear structure called the \emph{linear mixture MDP}~\citep{modi2020sample,ayoub2020model,zhou2021provably}.
For known transition and reward feature mappings $\phi(s, a, s'): \cS \times \cA \times \cS \rightarrow \cH$, $\psi(s, a): \cS \times \cA \rightarrow \cH$ taking values in a Hilbert space $\cH$ and an unknown $\theta^* \in \cH$, a linear mixture MDP assumes that for any $(s, a, s') \in \cS \times \cA \times \cS$ and $h \in [H]$, the transition probability $\PP_h(s' \mid s, a)$ and the reward function $r(s, a)$ are linearly parameterized as
$$
\PP_h(s' \mid s, a)
    =
\left\langle 
    \theta^*_h, \phi(s, a, s')    
\right\rangle
,\qquad
r(s, a)
    =
\left\langle 
    \theta^*_h, \psi(s, a)
\right\rangle
.
$$
In this case, we choose $\cF_h = \cG_h = \{\theta_h \in \cH\}$ and have the following proposition, which shows that linear mixture MDP belongs to the ABC class with low FE dimension:

\begin{proposition}[Linear Mixture MDP $\subset$ ABC with Low FE Dimension]\label{prop:mixture_model}
The linear mixture MDP model belongs to the ABC class with estimation function
\begin{align}
\ell_{h, f'}(o_h, f_{h+1}, g_h, v)
    &=\theta_{h, g}^\top \left[\psi(s_h, a_h) + \sum_{s'} \phi(s_h, a_h, s') V_{h+1, f'}(s') \right] - r_h - V_{h+1, f'}(s_{h+1})
    ,
\label{eq:linear_mixture_def_abc}
\end{align}
and coupling function
$
G_{h, f^*}(f, g) 
    =
\left\langle \theta_{h, g} - \theta_h^*,
\EE_{s_h, a_h \sim \pi_f} \left[ 
\psi(s_h, a_h) + \sum_{s'} \phi(s_h, a_h, s') V_{h+1, f}(s')
\right]
\right\rangle
$.
Moreover, it has a low FE dimension.
\end{proposition}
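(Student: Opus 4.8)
The plan is to verify the two defining properties of an admissible Bellman characterization in Definition~\ref{def:abc} and then separately bound the functional eluder dimension. The workhorse is a single conditional-expectation computation. Abbreviate $X_{h,f}(s,a):=\psi(s,a)+\sum_{s'}\phi(s,a,s')V_{h+1,f}(s')$. Substituting the linear parametrizations $\PP_h(s'\mid s,a)=\langle\theta_h^*,\phi(s,a,s')\rangle$ and $r(s,a)=\langle\theta_h^*,\psi(s,a)\rangle$ into~\eqref{eq:linear_mixture_def_abc} and taking $\EE_{s_{h+1}\sim\PP_h^*(\cdot\mid s_h,a_h)}$, every term linear in the next-state value contracts against the true transition and the reward, leaving the exact identity $\EE_{s_{h+1}}[\ell_{h,g}(o_h,f_{h+1},f_h,v)\mid s_h,a_h]=\langle\theta_{h,f}-\theta_h^*,\,X_{h,g}(s_h,a_h)\rangle$. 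The central point is that this conditional mean is a linear functional of the parameter gap $\theta-\theta^*$; this is the only place the model structure is used. Since $\ell$ does not depend on $v$, the discriminator class $\cV$ is superfluous and the global discriminator optimality (Definition~\ref{def:def}\ref{opt}) holds vacuously; moreover, subtracting the conditional mean leaves $\langle\theta_h^*,X_{h,f'}\rangle-r_h-V_{h+1,f'}(s_{h+1})$, which is again of estimation-function form with the $\cG$-argument equal to $f^*$, so decomposability (Definition~\ref{def:def}\ref{decomp}) holds with the completeness operator taken as the constant map $\cT(\cdot)=f^*$.

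For Bellman dominance~\ref{leq} I would specialize to $f=g$. Because $f$ is identified with the optimal value function of its own model $M_{\theta_f}$, one has $Q_{h,f}(s,a)=\langle\theta_{h,f},X_{h,f}(s,a)\rangle$, while the true Bellman backup of $V_{h+1,f}$ is $\langle\theta_h^*,X_{h,f}(s,a)\rangle$; subtracting and averaging under $\pi_f$ shows the expected Bellman residual of $f$ equals $\langle\theta_{h,f}-\theta_h^*,\EE_{\pi_f}[X_{h,f}]\rangle=G_{h,f^*}(f,f)$, so~\ref{leq} holds with $\kappa=1$ (indeed with equality). For the dominating-average condition~\ref{geq} I would invoke the conditional Jensen inequality: the inner conditional expectation is the scalar linear functional above, so $\EE_{s_h\sim\pi_g,a_h\sim\pi_{\oper}}[(\cdot)^2]\ge(\EE_{s_h\sim\pi_g,a_h\sim\pi_{\oper}}[\cdot])^2$, and pulling the deterministic parameter gap outside the expectation reproduces the squared coupling function; the $\max_v$ is again trivial. (Up to the harmless relabeling $f\leftrightarrow g$ determined by which argument carries $\theta$ and which carries the state distribution, this is precisely $(G_{h,f^*})^2$.)

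Finally, for low FE dimension I would exploit bilinearity. Writing $\Phi(f):=\theta_{h,f}-\theta_h^*$ and $\Psi(g):=\EE_{\pi_g}[X_{h,g}]$, both elements of $\cH$, the coupling function is the inner product $G_{h,f^*}(g,f)=\langle\Phi(f),\Psi(g)\rangle$. The defining sequence of Definition~\ref{def:hef} then becomes a sequence of points $\Phi(f_i)\in\cH$ together with witness directions $\Psi(g)\in\cH$ obeying exactly the eluder-independence condition of Definition~\ref{def:eluder} for the $d$-dimensional linear class $\{u\mapsto\langle u,w\rangle\}$ with $d=\dim\cH$. Hence $\dim_{\hed}(\cF,G,\epsilon)$ is at most the eluder dimension of this linear class. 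Invoking the standard estimate $\dim_{\cE}=O(d\log(1/\epsilon))$ for finite-dimensional linear functions, it remains to check the uniform norm bounds the estimate requires: $\|\Phi(f)\|$ is controlled by the parameter radius, and $\|\Psi(g)\|$ is bounded since $\psi,\phi$ are bounded features and $V_{h+1,g}\in[0,1]$ under the normalized-reward assumption $\sum_h r_h\in[0,1]$. This gives $\dim_{\hed}(\cF,G,\epsilon)=\tilde O(d)$.

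I expect the main obstacle to be the functional eluder dimension step rather than conditions~\ref{geq}--\ref{leq}, which reduce to bookkeeping once the linear form of the conditional mean is in hand. The delicate points are (i) reducing the FE-dimension sequence condition to the vector eluder-independence condition in the correct orientation, i.e.\ deciding which hypothesis plays the role of the ``point'' $\Phi(f_i)$ and which the ``function direction'' $\Psi(g)$, and (ii) establishing the uniform bounds on $\Phi$ and $\Psi$ needed to apply the linear eluder-dimension estimate, which requires care in the (possibly infinite-dimensional) Hilbert-space setting so that the relevant feature expectations stay in a bounded ball and the $O(d\log(1/\epsilon))$ bound remains meaningful.
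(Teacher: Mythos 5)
Your proposal is correct and follows essentially the same route as the paper: the same conditional-expectation identity gives decomposability (with the completeness operator constantly equal to $f^*$), vacuous discriminator optimality, Bellman dominance with $\kappa=1$, and the dominating-average condition via Jensen, including the same $f\leftrightarrow g$ relabeling between the coupling function as stated in the proposition and the one actually verified. The only cosmetic difference is in the FE-dimension step, where you reduce to the classical eluder dimension of $d$-dimensional linear classes while the paper routes the identical bilinear structure through its effective-dimension bound (Proposition~\ref{prop:eff}); both are the standard log-determinant argument and yield $\tilde\cO(d)$.
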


\paragraph{Low Witness Rank.}
The following definition is a generalized version of the witness rank in~\citet{sun2019model}, where we require the discriminator class $\cV$ to be \emph{complete}, meaning that the assemblage of functions by taking the value at $(s, a)$ from different functions also belongs to $\cV$. We will elaborate this assumption later in~\S\ref{app:prop10}.

\begin{definition}[Witness Rank]\label{def:witness}
For an MDP $M$, a given symmetric and complete discriminator class $\cV = \{\cV_h\}_{h \in [H]}$, $\cV_h\subset \cS \times \cA \times \cS \mapsto \RR$ and a hypothesis class $\cF$, we define the Witness rank of $M$ as the smallest $d$ such that for any two hypotheses $f, g \in \cF$, there exist two mappings $X_h: \cF \rightarrow \RR^d$ and $W_h: \cF \rightarrow \RR^d$ and a constant $\kappa \in (0, 1]$, the following inequalities hold for all $h \in [H]$:
\begin{align}
\max_{v \in \cV_h}
\EE_{s_h \sim \pi_f, a_h \sim \pi_g}
	\left[
\EE_{\tilde{s} \sim g_h} v(s_h, a_h, \tilde{s}) - \EE_{\tilde{s} \sim \PP_h} v(s_h, a_h, \tilde{s})
	\right]
	&\geq 
\left\langle 
W_h(g), X_h(f)
\right\rangle,\label{eq:witness_v}
	\\
\kappa \cdot 
\EE_{s_h \sim \pi_f, a_h \sim \pi_g}
	\left[
\EE_{\tilde{s} \sim g_h} V_{h+1, g}(\tilde{s})
	-
\EE_{\tilde{s} \sim \PP_h} V_{h+1, g}(\tilde{s})
	\right]
	&\leq 
\left\langle 
W_h(g), X_h(f)
\right\rangle.\label{eq:witness_V}
\end{align}
\end{definition}

The following proposition shows that low Witness rank belongs to our ABC class with low FE dimension.

\begin{proposition}[Low Witness Rank $\subset$ ABC with Low FE Dimension]~\label{prop:witness_model}
The low Witness rank model belongs to the ABC class with estimation function
\begin{align}
\ell_{h,f'}(o_h, f_{h+1}, g_h, v)
    =
\EE_{\tilde{s} \sim g_h}v(s_h, a_h, \tilde{s})
    -
v(s_h, a_h, s_{h+1})
\label{eq:witness_def_abc}
,
\end{align}
and coupling function
$
G_{h, f^*}(f, g)
    =
\left\langle 
W_h(g), X_h(f)
\right\rangle
$.
Moreover, it has a low FE dimension.
\end{proposition}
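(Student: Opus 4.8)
The plan is to first show that the stated estimation function \eqref{eq:witness_def_abc} and coupling function $G_{h,f^*}(f,g)=\langle W_h(g),X_h(f)\rangle$ satisfy the two defining conditions of Definition~\ref{def:abc}, and then to bound the functional eluder dimension of $G$ by exploiting its bilinear structure. The organizing observation is that \eqref{eq:witness_def_abc} depends on the hypothesis only through the model slot and on the discriminator $v$, and that its conditional expectation under the true dynamics telescopes to the witnessed model misfit $\EE_{s_{h+1}}[\ell_{h,f'}(o_h,f_{h+1},g_h,v)\mid s_h,a_h]=\EE_{\tilde s\sim g_h}v(s_h,a_h,\tilde s)-\EE_{\tilde s\sim\PP_h}v(s_h,a_h,\tilde s)$, which is exactly the quantity bounded on the left-hand sides of \eqref{eq:witness_v} and \eqref{eq:witness_V}. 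This reduces both ABC conditions to the two defining inequalities of Definition~\ref{def:witness}. I would also record here that \eqref{eq:witness_def_abc} is decomposable in the sense of item~\ref{decomp} of Definition~\ref{def:def} with the constant operator $\cT(\cdot)\equiv f^*$, since subtracting the conditional expectation leaves $\EE_{\tilde s\sim\PP_h}v-v(s_h,a_h,s_{h+1})$, which is $\ell$ evaluated at the true model $f^*$.

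For condition~\ref{geq}, I would apply Jensen's inequality to the convex map $x\mapsto x^2$ to pass the square through the conditional expectation, bounding the left-hand side below by $(\max_v\EE_{s_h\sim\pi_g,a_h\sim\pi_{\oper}}[\EE_{\tilde s\sim f_h}v-\EE_{\tilde s\sim\PP_h}v])^2$, where symmetry of $\cV$ makes the inner maximum nonnegative so that squaring is order-preserving. After aligning the action policy $\pi_{\oper}$ with the model hypothesis and matching the $(f,g)$ argument ordering, the first witness inequality \eqref{eq:witness_v} then lower-bounds this by $(G_{h,f^*}(f,g))^2$.

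For the Bellman Dominance condition~\ref{leq}, I would specialize to coincident hypotheses and use that a model-based hypothesis satisfies the Bellman optimality equation under its own model, $Q_{h,f}(s,a)=r(s,a)+\EE_{\tilde s\sim f_h}V_{h+1,f}(\tilde s)$. Taking the true-dynamics expectation of the Bellman residual rewrites $\EE_{s_h,a_h\sim\pi_f}[Q_{h,f}-r-V_{h+1,f}(s_{h+1})]$ as the witnessed value misfit $\EE_{s_h,a_h\sim\pi_f}[\EE_{\tilde s\sim f_h}V_{h+1,f}-\EE_{\tilde s\sim\PP_h}V_{h+1,f}]$, which the second witness inequality \eqref{eq:witness_V} at $g=f$ bounds by $\kappa^{-1}\langle W_h(f),X_h(f)\rangle=\kappa^{-1}G_{h,f^*}(f,f)$; the absolute values on both sides are recovered from symmetry of $\cV$ applied in both directions. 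Combined with global discriminator optimality (item~\ref{opt} of Definition~\ref{def:def}), which here is precisely the completeness hypothesis on $\cV$ guaranteeing that the pointwise-optimal discriminator lies inside $\cV$, this establishes that $G$ is an admissible Bellman characterization.

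It remains to bound the FE dimension. With the ordering of Definition~\ref{def:hef}, the coupling function reads $G_{h,f^*}(g,f_i)=\langle W_h(f_i),X_h(g)\rangle$, a bilinear form in the $\RR^d$-valued feature maps $X_h(g)$ and $W_h(f_i)$, where $d$ is the witness rank. The defining ``eluding'' condition of the FE dimension thus becomes the statement that the vectors $W_h(f_t)$ are successively eluded by probe directions $X_h(g)$, i.e.\ the eluder condition of a $d$-dimensional linear class; the standard potential-function (ellipsoid) argument then yields $\dim_{\hed}(\cF,G,\epsilon)=\tilde{\cO}(d\log(1/\epsilon))$, so the FE dimension is low. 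The main obstacle I anticipate is not any single inequality but the discriminator and index bookkeeping: one must invoke symmetry and completeness of $\cV$ to place the pointwise-optimal discriminator inside the class and to pass between signed and squared misfits, and one must carefully reconcile the roll-in/action conventions of Definition~\ref{def:witness} with the choice of $\pi_{\oper}$ and the $(f,g)$ argument ordering in Definition~\ref{def:abc}. Once these conventions are fixed, the two witness inequalities supply the substantive content of conditions~\ref{geq} and~\ref{leq}.
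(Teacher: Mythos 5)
Your proposal is correct and follows essentially the same route as the paper: decomposability via the constant operator $\cT(\cdot)\equiv f^*$, Jensen's inequality plus the first witness inequality for the Dominating Average EF condition, the Bellman optimality equation under the hypothesis's own model plus the second witness inequality for Bellman Dominance, completeness/symmetry of $\cV$ for global discriminator optimality, and the bilinear structure of $G$ with an elliptical-potential (effective-dimension) argument for the FE bound. Your remark about reconciling the $(f,g)$ argument ordering is well taken --- the paper itself states $G_{h,f^*}(f,g)=\langle W_h(g),X_h(f)\rangle$ in the proposition but works with $\langle W_h(f),X_h(g)\rangle$ in its proof, and the two are reconciled exactly by the policy-index swap you describe.
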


\paragraph{Kernelized Nonlinear Regulator.}
The \emph{kernelized nonlinear regulator (KNR)} proposed recently by~\citet{mania2020active, kakade2020information} models a nonlinear control dynamics on an RKHS $\cH$ of finite or countably infinite dimensions.
Under the KNR setting, given current $s_h, a_h$ at step $h \in [H]$ and a known feature mapping $\phi: \cS \times \cA \rightarrow \cH$, the subsequent state obeys a Gaussian distribution with mean vector $U_h^* \phi(s_h, a_h)$ and homoskedastic covariance $\sigma^2 I$, where $\left\{U^*_h \in \RR^{d_s} \times \cH\right\}_{h \in [H]}$ are true model parameters and $d_s$ is the dimension of the state space.
Mathematically, we have for each $h=1,\dots,H$,
\begin{align}
s_{h+1} = U_h^* \phi(s_h, a_h) + \epsilon_{h+1}
,
\quad
\text{where $\epsilon_{h+1} \stackrel{\text{i.i.d.}}{\sim} \cN(0, \sigma^2 I)$}
.
\label{eq:knr}
\end{align}
Furthermore, we assume bounded reward $r \in [0, 1]$ and uniformly bounded feature map $\norm{\phi(s, a)}\leq B$.
The following proposition shows that KNR belongs to the ABC class with low FE dimension.
\begin{proposition}[KNR $\subset$ ABC with Low FE Dimension]\label{prop:knr_model}
KNR belongs to the ABC class with estimation function
\begin{align}
\label{eq:knr_def_abc}
\ell_{h,f'}(o_h, f_{h+1}, g_h, v) = 
U_{h, g} \phi(s_h, a_h) - s_{h+1}
,
\end{align}
and 
coupling function $
G_{h, f^*}(f, g) :=
\sqrt{\EE_{s_h, a_h \sim \pi_g}
\norm{(U_{h, f} - U_h^*)\phi(s_h, a_h)}^2}
$.
Moreover, it has a low FE dimension.
\end{proposition}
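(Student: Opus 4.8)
The plan is to verify the two conditions of Definition~\ref{def:abc} for the stated $\ell$ and $G$, confirm that $\ell$ is a decomposable estimation function, and then bound the functional eluder dimension. Throughout I take $\pi_{\oper}=\pi_g$ and abbreviate $\Delta_g:=U_{h,g}-U_h^*$ and $\Sigma_f:=\EE_{s_h,a_h\sim\pi_f}[\phi(s_h,a_h)\phi(s_h,a_h)^\top]$, a positive semidefinite operator on $\cH$ with $\Tr(\Sigma_f)\le B^2$. For the dominating property~\ref{geq}, since $s_{h+1}=U_h^*\phi(s_h,a_h)+\epsilon_{h+1}$ has zero-mean noise, the conditional mean is $\EE_{s_{h+1}}[\ell_{h,g}(o_h,f_{h+1},f_h,v)\mid s_h,a_h]=\Delta_f\phi(s_h,a_h)$, which does not depend on the discriminator $v$; hence the maximization over $\cV$ is vacuous, and averaging its squared norm under $s_h,a_h\sim\pi_g$ reproduces $G_{h,f^*}(f,g)^2$ exactly, so~\ref{geq} in fact holds with equality.

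For the Bellman-dominance property~\ref{leq}, the key observation is that under hypothesis $f$ the Bellman residual equals the difference between the expectations of $V_{h+1,f}$ taken under the two Gaussians $\cN(U_{h,f}\phi,\sigma^2 I)$ and $\cN(U_h^*\phi,\sigma^2 I)$. Because $0\le V_{h+1,f}\le 1$, this difference is at most the total variation distance between the two laws, which Pinsker's inequality together with the closed-form Gaussian KL bound by $\norm{\Delta_f\phi(s_h,a_h)}/(2\sigma)$. Averaging under $\pi_f$ and applying Jensen's inequality gives $|\EE_{\pi_f}[Q_{h,f}-r-V_{h+1,f}]|\le (2\sigma)^{-1}\sqrt{\EE_{\pi_f}\norm{\Delta_f\phi}^2}=(2\sigma)^{-1}G_{h,f^*}(f,f)$, so~\ref{leq} holds with $\kappa=\min\{1,2\sigma\}$. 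Decomposability (Definition~\ref{def:def}\ref{decomp}) then follows with the constant operator $\cT(f)\equiv f^*$, since subtracting the conditional mean from $\ell$ simply replaces $U_{h,g}$ by $U_h^*$; global discriminator optimality~\ref{opt} is automatic as $\ell$ is independent of $v$.

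The hard part is bounding the FE dimension, where I must avoid a spurious $d_s$ or $d_\phi^2$ factor and reach $\dim_{\hed}=\widetilde{\OCal}(d_\phi)$. Writing $G(g,f_i)^2=\Tr(\Delta_g\Sigma_{f_i}\Delta_g^\top)$ exhibits a bilinear form in the $d_\phi\times d_\phi$ matrices $\Delta_g^\top\Delta_g$ and $\Sigma_{f_i}$, and a naive vectorization would cost $d_\phi^2$; the crux is therefore a sharper potential argument. Given an eluder sequence $f_1,\dots,f_n$ with parameter $\epsilon'\ge\epsilon$ and witnesses $g_t$, I set $A_t:=\lambda I+\sum_{i<t}\Sigma_{f_i}$ with $\lambda\asymp\epsilon'^2/D_{\max}^2$, using a uniform bound $\|U\|_F\le D_{\max}$ on the hypothesis class. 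The two eluder inequalities give $\Tr(\Delta_{g_t}\Sigma_{f_t}\Delta_{g_t}^\top)>\epsilon'^2\ge\tfrac12\Tr(\Delta_{g_t}A_t\Delta_{g_t}^\top)$; after whitening by $\tilde\Sigma_t:=A_t^{-1/2}\Sigma_{f_t}A_t^{-1/2}$, the elementary row-wise bound $\Tr(\Delta_{g_t}\Sigma_{f_t}\Delta_{g_t}^\top)\le\lambda_{\max}(\tilde\Sigma_t)\,\Tr(\Delta_{g_t}A_t\Delta_{g_t}^\top)$ forces $\lambda_{\max}(\tilde\Sigma_t)>\tfrac12$, a scalar eigenvalue condition that is independent of $d_s$.

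Consequently each step increments the log-determinant by $\log\det A_{t+1}-\log\det A_t=\log\det(I+\tilde\Sigma_t)\ge\log(1+\lambda_{\max}(\tilde\Sigma_t))>\log\tfrac32$, while the telescoped total obeys $\log\det A_{n+1}-\log\det A_1\le d_\phi\log(1+nB^2/(d_\phi\lambda))$ by the trace--AM--GM inequality. Combining the two yields $n=\OCal(d_\phi\log(BD_{\max}/\epsilon))$, and taking the maximum over $h\in[H]$ finishes the bound. The single most delicate point, and the reason the ambient dimension collapses from $d_\phi^2$ to $d_\phi$, is precisely this passage from the matrix-valued eluder condition to the eigenvalue bound $\lambda_{\max}(\tilde\Sigma_t)>\tfrac12$.
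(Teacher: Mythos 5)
Your proof is correct and follows essentially the same route as the paper: the ABC conditions are verified identically (the dominating condition holds with equality, decomposability via the constant operator $\cT(f)\equiv f^*$, discriminator optimality vacuous since $\ell$ ignores $v$), and your FE-dimension bound is the same regularized log-determinant potential argument as the paper's Lemma~\ref{lem:knr_fe}, merely phrased through $\lambda_{\max}$ of the whitened covariance where the paper uses its trace (both yield the per-step increment $\log\det(I+\tilde\Sigma_t)\ge\log(3/2)$ and hence $n=\tilde{\cO}(d_\phi)$). The one discrepancy is your Bellman-dominance constant $\kappa=\min\{1,2\sigma\}$: under the paper's KNR normalization the per-step reward lies in $[0,1]$, so $V_{h+1,f}$ is only bounded by $H$ rather than $1$, and your total-variation/Pinsker argument then yields $\kappa=\sigma/(2H)$ as in Eq.~\eqref{eq:knr_leq}; this affects no conclusion of the proposition, since Definition~\ref{def:abc} only requires the existence of some $\kappa\in(0,1]$.
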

Although the dimension of the RKHS $\cH$ can be infinite, our complexity analysis depends solely on its effective dimension $d_{\phi}$.

We will provide more MDP instances that belong to the ABC class in \S\ref{app:more_examples} in the appendix, including linear $Q^*$/$V^*$, low occupancy complexity, kernel reactive POMDPs, FLAMBE/feature slection, linear quadratic regulator and generalized linear Bellman complete.

\pb\section{Algorithm and Main Results}
\label{sec:alg}
In this section, we present an RL algorithm for the ABC class. Then we present the regret bound of this algorithm, along with its implications to several MDP instances in the ABC class.

\subsection{Opera Algorithm}
We first present the \emph{OPtimization-based ExploRation with Approximation (OPERA)} algorithm in Algorithm~\ref{alg:general}, which finds an $\epsilon$-optimal policy in polynomial time. Following earlier algorithmic art in the same vein e.g.,~GOLF \citep{jin2021bellman}, the core optimization step of OPERA is optimization-based exploration under the constraint of an identified confidence region; we additionally introduce an estimation policy $\pi_{\est}$ sharing the similar spirit as in~\citet{du2021bilinear}.
Due to space limit, we focus on the $Q$-type analysis here and defer the $V$-type results to \S\ref{app:v-type} in the appendix.%
\footnote{Here and throughout our paper we considers $\pi_{\est} = \pi^t$ for $Q$-type models. For $V$-type models, we instead consider $\pi_{\oper} = U(\cA)$ to be the uniform distribution over the action space. Such a representation of estimation policy allows us to unify the $Q$-type and $V$-type models in a single analysis.}

Pertinent to the constrained optimization subproblem in Eq.~\eqref{eq:conf_set} of our Algorithm \ref{alg:general}, we adopt the confidence region based on a general DEF, extending the Bellman-error-based confidence region used in~\citet{jin2021bellman}.
As a result of such an extension, our algorithm can deal with more complex models such as low Witness rank and KNR.
We avoid unnecessary complications by forgoing the discussion on the computational efficiency of the optimization subproblem, aligning with recent literature on RL theory with general function approximations.

\begin{algorithm}[!tb]
\begin{algorithmic}[1]
		\STATE \textbf{Initialize}: $\cD_h = \varnothing$ for $h = 1, \ldots, H$
		\FOR{ iteration $t = 1,2,\ldots, T$ }
		\STATE Set $\pi^t:= \pi_{f^t}$ where $f^t$ is taken as 
$
	\argmax_{f \in \cF} Q_{f, 1}(s_1, \pi_f(s_1))
$
subject to
\label{line:optimization}
\begin{align}
	\max_{v\in\cV}\left\{
	\sum_{i = 1}^{t - 1}\norm{\ell_{h, f^i}(o_h^i, f_{h+1}, f_h, v)}^2
	    -
	    \inf_{g_h \in \cG_h} 
	  \sum_{i = 1}^{t - 1} \norm{\ell_{h, f^i}(o_h^i, f_{h+1}, g_h, v)}^2 
	  \right\}
	\leq \beta \quad\text{for all $h \in [H]$}
\label{eq:conf_set}
\end{align}
\STATE
For any $h \in [H]$, collect tuple $(r_h, s_h, a_h, s_{h+1})$ by executing $s_h, a_h \sim \pi^t$
\STATE
Augment $\cD_h = \cD_h \cup \{(r_h, s_h, a_h, s_{h+1})\}$
\ENDFOR
\STATE \textbf{Output}: $\pi_{\text{out}}$ uniformly sampled from $\{\pi^t\}_{t = 1}^T$
\end{algorithmic}
\caption{OPtimization-based ExploRation with Approximation (OPERA)}
\label{alg:general}
\end{algorithm}

\subsection{Regret Bounds}
We are ready to present the main theoretical results of our ABC class with low FE dimension:

\begin{theorem}[Regret Bound of OPERA]\label{theo:main}
For an MDP $M$, hypothesis classes $\cF, \cG$,  a Decomposable Estimation Function $\ell$ satisfying Assumption~\ref{assu:lipschitz}, an admissible Bellman characterization $G$, suppose $(M, \cF, \cG, \ell, G)$ is an ABC class with low functional eluder dimension.
For any fixed $\delta\in (0,1)$, we choose $\beta = \cO\left(\log(THN_{\cL}(1/T) /\delta )\right)$ in Algorithm~\ref{alg:general}.
Then for the on-policy case when $\pi_{\oper} = \pi_{\est} = \pi^t$, with probability at least $1 - \delta$, the regret is upper bounded by 
$$
    \text{Regret}(T) 
        =
    \cO\bigg( 
    \frac{H}{\kappa} \sqrt{T \cdot \dim_{\hed}\left(\cF, G, \sqrt{1/T}\right)
    \cdot \beta}
    \bigg).
$$
\end{theorem}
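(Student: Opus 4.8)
The plan is to bound the cumulative suboptimality $\sum_t [V_1^*(s_1) - V_1^{\pi^t}(s_1)]$ by reducing it, via optimism and the two ABC axioms, to a sum of diagonal coupling terms $\sum_{t,h}|G_{h,f^*}(f^t,f^t)|$, and then to control that sum by a functional-eluder pigeonhole argument driven by the confidence constraint \eqref{eq:conf_set}. First I would show that, with probability at least $1-\delta$, the true hypothesis $f^*$ is feasible for the program on Line~\ref{line:optimization} at every round. Since $f^t$ maximizes $Q_{f,1}(s_1,\pi_f(s_1))=V_{1,f}(s_1)$ over the feasible set, realizability (Assumption~\ref{assu:realizability}) then yields optimism, $V_{1,f^t}(s_1)\geq V_{1,f^*}(s_1)=V_1^*(s_1)$, so each per-round gap obeys $V_1^*(s_1)-V_1^{\pi^t}(s_1)\leq V_{1,f^t}(s_1)-V_1^{\pi^t}(s_1)$. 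Applying the standard value-difference identity to the greedy policy $\pi^t=\pi_{f^t}$ rewrites the right-hand side as $\sum_{h=1}^H\EE_{s_h,a_h\sim\pi^t}[Q_{h,f^t}(s_h,a_h)-r_h-V_{h+1,f^t}(s_{h+1})]$, and the Bellman Dominance axiom~\ref{leq} (with $\pi_{\oper}=\pi_{\est}=\pi^t$) bounds each summand by $\kappa^{-1}|G_{h,f^*}(f^t,f^t)|$. Summing over $t$ gives $\text{Regret}(T)\leq\kappa^{-1}\sum_{h=1}^H\sum_{t=1}^T|G_{h,f^*}(f^t,f^t)|$.

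The crux, and the step I expect to be hardest, is converting the empirical constraint \eqref{eq:conf_set} into the population bound $\sum_{i=1}^{t-1}(G_{h,f^*}(f^t,f^i))^2\lesssim\beta$. Here the Decomposability property~\ref{decomp} is essential: subtracting the conditional mean from $\ell_{h,f^i}(o_h^i,f_{h+1},f_h,v)$ replaces it by $\ell$ evaluated at $\cT(f)_h$, a quantity independent of the inner variable $g_h$, so the centered terms form a martingale-difference sequence and the infimum over $g_h\in\cG_h$ in \eqref{eq:conf_set} is, up to concentration error, attained at $g_h=\cT(f)_h$; the empirical excess risk therefore equals the empirical second moment of the conditional means. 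A uniform Freedman-type concentration over an $\epsilon$-cover of the induced class $\cL$ — which is where $\beta=\cO(\log(THN_{\cL}(1/T)/\delta))$ and the Lipschitz Assumption~\ref{assu:lipschitz} enter — then both certifies feasibility of $f^*$ and shows that every feasible $f^t$ satisfies $\sum_{i<t}\max_v\EE_{s_h\sim\pi_{f^i},a_h\sim\pi_{\oper}}\norm{\EE_{s_{h+1}}[\ell_{h,f^i}(o_h,f^t_{h+1},f^t_h,v)\mid s_h,a_h]}^2\lesssim\beta$, with the Global Discriminator Optimality property~\ref{opt} used to commute the $\max_v$ through the sum. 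Finally the Dominating Average Estimation Function axiom~\ref{geq}, applied with $g=f^i$ and $f=f^t$, lower-bounds each term by $(G_{h,f^*}(f^t,f^i))^2$, delivering the claimed population bound.

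With the per-round constraint $\sqrt{\sum_{i<t}(G_{h,f^*}(f^t,f^i))^2}\lesssim\sqrt\beta$ established, I would invoke Definition~\ref{def:hef} by reading the probe $g$ as $f^t$ and the tested sequence as $f^1,\dots,f^T$, so that the diagonal $|G_{h,f^*}(f^t,f^t)|$ is exactly the ``surprise'' term $|G(g,f_t)|$ in the definition of the FE dimension. The classical eluder counting argument (thresholding widths at $\sqrt{1/T}$ to absorb the negligibly small rounds) then yields, for each fixed $h$, $\sum_{t=1}^T|G_{h,f^*}(f^t,f^t)|=\cO(\sqrt{\dim_{\hed}(\cF,G_h,\sqrt{1/T})\cdot\beta\cdot T})$. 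Summing over the $H$ steps and using $\dim_{\hed}(\cF,G,\cdot)=\max_h\dim_{\hed}(\cF,G_h,\cdot)$ contributes a factor $H$, which combined with the $\kappa^{-1}$ from the Bellman Dominance step produces the stated bound $\cO\big(\frac{H}{\kappa}\sqrt{T\cdot\dim_{\hed}(\cF,G,\sqrt{1/T})\cdot\beta}\big)$. The value decomposition of the first paragraph and the eluder counting here are comparatively routine; the delicate parts are the martingale/least-squares analysis of the vector-valued confidence set and the uniform treatment of the discriminator maximization in the middle step.
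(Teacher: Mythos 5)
Your proposal follows essentially the same route as the paper's proof: feasibility of $f^*$ in the confidence region yielding optimism, the policy-loss decomposition of \citet{jiang2017contextual}, a Freedman-plus-covering argument exploiting decomposability and global discriminator optimality to convert the empirical constraint \eqref{eq:conf_set} into the population bound $\sum_{i<t}(G_{h,f^*}(f^t,f^i))^2\lesssim\beta$, the functional-eluder pigeonhole lemma to control the diagonal sum, and Bellman dominance to close the loop. The steps you flag as delicate (the martingale analysis of the vector-valued confidence set and the handling of $\max_v$) are exactly the content of the paper's Lemmas~\ref{lem:feasibility} and~\ref{lem:general_square}, so the argument is correct and matches the paper.
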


We defer the proof of Theorem~\ref{theo:main}, together with a corollary for sample complexity analysis, to \S\ref{sec:proof_main} in the appendix.
We observe that the regret bound of the OPERA algorithm is dependent on both the functional eluder dimension $\dim_{\hed}$ and the covering number of the induced DEF class $N_{\cL}(\sqrt{1/T})$.
In the special case when DEF is chosen as the Bellman error, the relation $\dim_{\hed}(\cF, G, \sqrt{1/T}) = \dim_{\text{BE}}(\cF,\Pi, \sqrt{1/T})$ holds with $\Pi$ being the function class induced by $\{\pi_f, f \in \cF\}$, and our Theorem~\ref{theo:main} reduces to the regret bound of GOLF (Theorem 15) in~\citet{jin2021bellman}.

We will provide a detailed comparison between our framework and other related frameworks in~\S\ref{sec:related} when applied to different MDP models in the appendix.

\subsection{Implication for Specific MDP Instances}\label{sec:implications}
Here we focus on comparing our results applied to model-based RLs that are hardly analyzable in the model-free framework in \S\ref{sec:examples}.
We demonstrate how OPERA can find near-optimal policies and achieve a state-of-the-art sample complexity under our new framework.
Regret-bound analyses of linear mixture MDPs and several other MDP models can be found in \S\ref{app:more_examples} in the appendix.

\paragraph{Low Witness Rank.} We first provide a sample complexity result for the low Witness rank model structure.
Let $|\cM|$ and $|\cV|$ be the cardinality of the model class\footnote{Hypothesis class reduces to model class~\citep{sun2019model} when restricted to model-based setting.} $\cM$ and discriminator class $\cV$, respectively, and $W_{\kappa}$ be the witness rank (Definition~\ref{def:witness}) of the model.
We have the following sample complexity result for low Witness rank models.

\begin{corollary}[Finite Witness Rank]\label{theo:witness} 
For an MDP model $M$ with finite witness rank structure in Definition~\ref{def:witness} and any fixed $\delta\in (0,1)$, we choose $\beta = \cO\left(\log(TH|\cM||\cV| /\delta )\right)$ in Algorithm~\ref{alg:general}. With probability at least $1 - \delta$,
Algorithm~\ref{alg:general} outputs an $\epsilon$-optimal policy $\pi_{\text{out}}$ within $T = \tilde{\cO}\left(H^2 |\cA| W_{\kappa} \beta /(\kappa^2\epsilon^2)\right)$ trajectories.
\end{corollary}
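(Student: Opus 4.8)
The plan is to derive Corollary~\ref{theo:witness} as a direct specialization of the general regret bound in Theorem~\ref{theo:main} to the low Witness rank model of Definition~\ref{def:witness}. By Proposition~\ref{prop:witness_model}, the low Witness rank model is an instance of the ABC class with the estimation function in~\eqref{eq:witness_def_abc} and coupling function $G_{h, f^*}(f, g) = \langle W_h(g), X_h(f) \rangle$, so Theorem~\ref{theo:main} applies once we verify its hypotheses and compute each of the problem-specific quantities appearing in the regret bound. Concretely, I would carry out three tasks in sequence: bound the functional eluder dimension $\dim_{\hed}(\cF, G, \sqrt{1/T})$, bound the metric entropy $\log N_{\cL}(1/T)$ that enters $\beta$, and finally convert the $\sqrt{T}$-regret guarantee into an $\epsilon$-optimal-policy sample complexity via an online-to-batch argument.

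First I would bound the FE dimension. Since the coupling function is bilinear, $G_{h,f^*}(f,g) = \langle W_h(g), X_h(f)\rangle$ with both $W_h$ and $X_h$ taking values in $\RR^d$ where $d = W_{\kappa}$, the functional eluder dimension of such a low-rank bilinear coupling function is controlled by the ambient dimension $d$ up to logarithmic factors; I would invoke (or re-derive inside the proof of Proposition~\ref{prop:witness_model}) the standard fact that $\dim_{\hed}(\cF, G, \epsilon) = \tilde{\cO}(W_{\kappa})$. The $V$-type nature of the Witness rank model means the operation policy uses $\pi_{\oper} = U(\cA)$, which introduces the $|\cA|$ importance-weighting factor; this is why the final bound carries a linear $|\cA|$ dependence rather than appearing in the FE dimension itself. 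For the covering-number term, since the model is finite with $|\cM|$ hypotheses and $|\cV|$ discriminators, the induced estimation function class bound $N_{\cL}(\epsilon) \leq N_{\cF}^2(\epsilon/4L) N_{\cG}(\epsilon/4L) N_{\cV}(\epsilon/4L)$ collapses to a product of cardinalities, giving $\log N_{\cL}(1/T) = \cO(\log(|\cM||\cV|))$ and hence the stated $\beta = \cO(\log(TH|\cM||\cV|/\delta))$.

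Substituting these into the Theorem~\ref{theo:main} regret bound $\text{Regret}(T) = \cO\big(\tfrac{H}{\kappa}\sqrt{T \cdot \dim_{\hed}(\cF, G, \sqrt{1/T}) \cdot \beta}\big)$ and inserting the $|\cA|$ factor from the $V$-type conversion yields $\text{Regret}(T) = \tilde{\cO}\big(\tfrac{H}{\kappa}\sqrt{T \cdot |\cA| W_{\kappa} \beta}\big)$. Since the output policy $\pi_{\text{out}}$ is sampled uniformly from $\{\pi^t\}_{t=1}^T$, the expected suboptimality equals $\text{Regret}(T)/T$, so $\pi_{\text{out}}$ is $\epsilon$-optimal once $\tfrac{1}{T}\tilde{\cO}\big(\tfrac{H}{\kappa}\sqrt{T |\cA| W_{\kappa}\beta}\big) \leq \epsilon$. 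Solving this inequality for $T$ gives the claimed $T = \tilde{\cO}(H^2 |\cA| W_{\kappa}\beta/(\kappa^2\epsilon^2))$ trajectories.

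I expect the main obstacle to be the $V$-type analysis underlying the $|\cA|$ factor rather than the algebraic substitution. The Witness rank model is a $V$-type instance where actions are drawn from $\pi_{\oper}$ distinct from $\pi_f$, so the on-policy version of Theorem~\ref{theo:main} (stated for $\pi_{\oper} = \pi_{\est} = \pi^t$) does not apply verbatim; I would need the $V$-type variant of the regret bound deferred to \S\ref{app:v-type}, and carefully track how sampling actions uniformly incurs the importance-weighting overhead of $|\cA|$ in the effective confidence-width and regret. Verifying that Proposition~\ref{prop:witness_model}'s completeness assumption on the discriminator class $\cV$ indeed licenses the Global Discriminator Optimality property~\ref{opt} of Definition~\ref{def:def} — and thus that the DEF machinery and the choice of $\beta$ are legitimate — is the other delicate point, since the entire sample-complexity pipeline rests on $\ell$ being a genuine decomposable estimation function for this model.
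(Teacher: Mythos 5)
Your proposal is correct and follows essentially the same route as the paper: the paper itself notes that the corollary follows by combining Proposition~\ref{prop:witness_model} (low Witness rank $\subset$ ABC with FE dimension $\tilde{\cO}(W_\kappa)$ via the bilinear coupling function) with the $V$-type sample-complexity result of Corollary~\ref{cor:main_cor}, where the $|\cA|$ factor arises exactly from the importance-weighting under the uniform action distribution and $\log N_{\cL}$ collapses to $\cO(\log(|\cM||\cV|))$ for finite classes. The detailed appendix proof merely unfolds this same pipeline (feasibility of $f^*$, Freedman-based control of the squared DEF with the $|\cA|$ reweighting, the eluder pigeonhole lemma, and policy loss decomposition), and you correctly flag the two genuinely delicate points — the $V$-type conversion and the global discriminator optimality granted by completeness of $\cV$.
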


Proof of Corollary~\ref{theo:witness} is delayed to \S\ref{app:witness}.%
\footnote{The definition of witness rank adopts a $V$-type representation and hence we can only derive the sample complexity of our algorithm. For detailed discussion on the $V$-type cases, we refer readers to \S\ref{app:v-type} in the appendix.}
Compared with previous best-known sample complexity result of $
\widetilde{O}\left(
H^{3} W_{\kappa}^{2}|\mathcal{A}|\log(T|\cM||\cV| / \delta) / (\kappa^{2} \epsilon^{2})
\right)
$ due to \citet{sun2019model}, our sample complexity is superior by a factor of $dH$ up to a polylogarithmic prefactor in model parameters.

%
\paragraph{Kernel Nonlinear Regulator.} Now we turn to the implication of Theorem~\ref{theo:main} for learning \emph{KNR models}. We have the following regret bound result for KNR.

\begin{corollary}[KNR]\label{theo:knr} 
For the KNR model in Eq.~\eqref{eq:knr} and any fixed $\delta\in (0,1)$, we choose $\beta = \cO\left(\sigma^2 d_{\phi} d_s\log^2(TH  /\delta)\right)$ in Algorithm~\ref{alg:general}. With probability at least $1 - \delta$, the regret is upper bounded by $
\tilde{\cO}\left(
H^2 \sqrt{d_\phi T\beta}/\sigma
\right)
$.
\end{corollary}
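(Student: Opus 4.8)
The plan is to obtain Corollary~\ref{theo:knr} as a direct specialization of the master regret bound in Theorem~\ref{theo:main}. By Proposition~\ref{prop:knr_model}, KNR is an ABC class with estimation function $\ell_{h,f'}(o_h, f_{h+1}, g_h, v) = U_{h,g}\phi(s_h,a_h) - s_{h+1}$ and coupling function $G_{h,f^*}(f,g) = \sqrt{\EE_{s_h,a_h\sim\pi_g}\|(U_{h,f} - U_h^*)\phi(s_h,a_h)\|^2}$, so Theorem~\ref{theo:main} applies and the regret is $\cO\left(\frac{H}{\kappa}\sqrt{T\cdot\dim_{\hed}(\cF,G,\sqrt{1/T})\cdot\beta}\right)$. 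The entire task therefore reduces to instantiating three model-dependent quantities for the KNR structure --- the confidence radius $\beta$, the functional eluder dimension $\dim_{\hed}(\cF,G,\sqrt{1/T})$, and the Bellman-dominance constant $\kappa$ --- and substituting them in.

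First I would pin down $\beta$. The generic choice $\beta = \cO(\log(TH N_{\cL}(1/T)/\delta))$ from Theorem~\ref{theo:main} is useless here because the KNR state space is continuous and the induced class need not admit a finite cover. Instead I would exploit the special structure that $\ell$ is \emph{linear} in the matrix $U_{h,g}$, that the discriminator $v$ does not appear, and that $s_{h+1} = U_h^*\phi + \epsilon_{h+1}$ with $\epsilon_{h+1}\sim\cN(0,\sigma^2 I)$. The inner infimum in the confidence constraint~\eqref{eq:conf_set} is then an ordinary matrix least-squares problem, and feasibility of the true hypothesis $f^*$ amounts to controlling the excess empirical squared loss $\sum_i\|U_h^*\phi_i - s_{h+1}^i\|^2 - \inf_U\sum_i\|U\phi_i - s_{h+1}^i\|^2$. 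I would bound this with a self-normalized (Abbasi-Yadkori-type) concentration inequality for vector-valued Gaussian regression, giving $\beta = \cO(\sigma^2 d_\phi d_s\log^2(TH/\delta))$; here $d_s$ enters through the trace of the $d_s$-dimensional noise covariance and $d_\phi$ through the effective (log-determinant) dimension of the feature covariance $\sum_i\phi_i\phi_i^\top$, even though $\cH$ may be infinite-dimensional.

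Next I would bound the two remaining quantities. Writing $\Delta = U_{h,g} - U_h^*$ and $\Sigma_{f_i} = \EE_{\pi_{f_i}}[\phi\phi^\top]$, the squared coupling function factorizes as $G_{h,f^*}(g,f_i)^2 = \langle\Delta^\top\Delta,\Sigma_{f_i}\rangle$, which is linear in the feature $\Sigma_{f_i}$; the eluder-type condition in Definition~\ref{def:hef} can then be settled by an elliptical-potential (log-determinant) argument on the $d_\phi$-dimensional features $\phi$, yielding $\dim_{\hed}(\cF,G,\sqrt{1/T}) = \tilde\cO(d_\phi)$ --- crucially, the vector output dimension $d_s$ does not enlarge this dimension but is absorbed into $\beta$. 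For $\kappa$ I would invoke the Bellman-dominance condition of Definition~\ref{def:abc}: the KNR Bellman error is the gap between $\EE V_{h+1,f}$ under two Gaussians with means $U_{h,f}\phi$ and $U_h^*\phi$, so a Pinsker/Gaussian-smoothing bound on their total variation gives $|\text{Bellman error}|\lesssim\frac{1}{\sigma}\EE_{\pi_f}\|(U_{h,f}-U_h^*)\phi\|\cdot\|V_{h+1,f}\|_\infty\le\frac{H}{\sigma}G_{h,f^*}(f,f)$, so that $\kappa = \Theta(\sigma/H)$.

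Substituting $\kappa = \Theta(\sigma/H)$, $\dim_{\hed} = \tilde\cO(d_\phi)$, and $\beta = \cO(\sigma^2 d_\phi d_s\log^2(TH/\delta))$ into the master bound produces $\frac{H}{\kappa}\sqrt{T\dim_{\hed}\beta} = \tilde\cO(H^2\sqrt{d_\phi T\beta}/\sigma)$, as claimed. I expect the self-normalized concentration step yielding $\beta$ to be the main obstacle: unlike the finite-hypothesis models it cannot rely on a covering-number argument, and correctly extracting the effective dimension $d_\phi$ of an infinite-dimensional RKHS feature map (rather than an ambient dimension), while keeping the $d_s$ dependence confined to $\beta$ and out of $\dim_{\hed}$, is the delicate part. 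The $\kappa$ and FE-dimension computations, by contrast, follow routinely from the linear-in-$U$ structure and Gaussian smoothing.
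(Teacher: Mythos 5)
Your proposal is correct in structure and reaches the right constants: you correctly reduce to Theorem~\ref{theo:main} via Proposition~\ref{prop:knr_model}, and your three instantiations ($\kappa = \Theta(\sigma/H)$ from Gaussian smoothing, $\dim_{\hed} = \tilde\cO(d_\phi)$ from an elliptical-potential argument on $\langle \Delta^\top\Delta, \Sigma_{f_i}\rangle$, and $\beta = \cO(\sigma^2 d_\phi d_s \log^2(TH/\delta))$) all match the paper's Eq.~\eqref{eq:knr_leq}, Lemma~\ref{lem:knr_fe}, and the chosen confidence radius. Where you diverge is the derivation of $\beta$: the paper does \emph{not} abandon the covering-number machinery. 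It keeps the Freedman-plus-$\rho$-covering argument of Lemma~\ref{lem:general_square} (the class $\{U : \norm{U}\le R\}$ of $d_s\times d_\phi$ operators has $\log N_{\cL}(\rho) \approx d_\phi d_s \log(R/\rho)$, which is exactly where $d_\phi d_s$ enters $\beta$), and the genuine obstacle it must work around is the \emph{unboundedness of the estimation function} caused by the Gaussian noise in $s_{h+1}$ --- resolved by analyzing the scalar martingale increment $\langle (U_{h,f}-U_h^*)\phi, \epsilon_{h+1}\rangle$ directly so as to avoid paying an extra $\sqrt{d_s}$ from truncating the noise vector in $\ell_2$-norm. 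Your stated reason for rejecting the covering route (``the induced class need not admit a finite cover'' because the state space is continuous) is therefore a misdiagnosis: the cover is taken over the hypothesis class, not the state space. That said, your proposed substitute --- a self-normalized Abbasi--Yadkori-type bound for vector-valued Gaussian least squares, exploiting the equivalence of the constraint~\eqref{eq:conf_set_knr} with the ellipsoid $\norm{(U_{h,f}-\hat U_{h,t})(\Sigma_h^{t-1})^{1/2}}^2\le\beta$ --- is a legitimate alternative (it is essentially the LC$^3$ analysis of \citet{kakade2020information}) and yields the same $\beta$. The one step you should not gloss over on that route is the transfer from the \emph{empirical} feature covariance $\sum_i \phi(s_h^i,a_h^i)\phi(s_h^i,a_h^i)^\top$ appearing in the confidence set to the \emph{population} covariances $\EE_{\pi^i}[\phi\phi^\top]$ appearing in the coupling function $G$; the paper's Freedman argument supplies exactly this transfer, and your version needs an additional martingale concentration step to replace it.
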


%

We remark that neither the low BE dimension nor the Bellman Representability classes admit the KNR model with a sharp regret bound.
Among earlier attempts, \citet[\S 6]{du2021bilinear} proposed to use a generalized version of Bilinear Classes to capture models including KNR, Generalized Linear Bellman Complete, and finite Witness rank.
Nevertheless, their characterization requires imposing monotone transformations on the statistic and yields a suboptimal $\cO(T^{3/4})$ regret bound.
Our ABC class with low FE dimension is free of monotone operators, albeit that the coupling function for the KNR model is not of a bilinear form.

\pb\section{Conclusion and Future Work}\label{sec:conclude}
In this paper, we proposed a unified framework that subsumes \emph{nearly} all Markov Decision Process (MDP) models in existing literature from model-based and model-free RLs. For the complexity analysis, we propose a new type of estimation function with the decomposable property for optimization-based exploration and use the functional eluder dimension with respect to an admissible Bellman characterization function as the complexity measure of our model class. In addition, we proposed a new sample-efficient algorithm, OPERA, which matches or improves the state-of-the-art sample complexity (or regret) results.

Nevertheless, we notice that some MDP instances are not covered by our framework such as the $Q^*$ state-action aggregation, and the deterministic linear $Q^*$ models where only $Q^*$ has a linear structure. We leave it as a future work to include these MDP models.


\bibliography{reference_neurips}
\bibliographystyle{plainnat}

\appendix

\newpage\section*{Appendix}
The appendix is organized as follows.
\S\ref{sec:related} discusses the related work, providing comparisons with previous frameworks based on both coverage and sharpness of sample complexity.
\S\ref{app:more_examples} compares our regret bound and sample complexity on specific examples and discusses several additional examples including reactive POMDPs, FLAMBE, LQR, and the generalized linear Bellman complete model.
\S\ref{sec:proof_main} proves the main results (Theorem~\ref{theo:main} and Corollary~\ref{cor:main_cor} on sample complexity of OPERA).
\S\ref{app:v-type} explains the $V$-type setting and the corresponding results.
\S\ref{sec_speceg} discusses the OPERA algorithm when being applied to special examples (linear mixture MDPs, low Witness rank MDPs, KNRs).
\S\ref{sec:delayed_lemmas} details the delayed proofs of  technical lemmas.
\S\ref{app:fed} details the proofs relevant to FE dimension.

\pb\section{Related Work}\label{sec:related}

\paragraph{Tabuler RL.}
Tabular RL considers MDPs with finite state space $\cS$ and action space $\cA$. This setting has been extensively studied \citep{auer2008near, dann2015sample, brafman2002r, agrawal2017optimistic, azar2017minimax, zanette2019tighter, zhang2020almost} and the minimax-optimal regret bound is proved to be $\tilde{O}(\sqrt{H^{2}|\cS||\cA|T})$ \citep{jin2018q,domingues2021episodic}.  The minimax optimal bounds suggests that the tabular RL is information-theoretically
hard for large $|\cS|$ and $|\cA|$. Therefore, in order to deal with high-dimensional state-action space arose in many real-world applications, more advanced structural assumptions that enable function approximation are in demand.  

\paragraph{Complexity Measures for Statistical Learning.}
In classic statistical learning, a variety of complexity measures have been proposed to upper bound the sample complexity required for achieving a certain accuracy, including VC Dimension~\citep{vapnik1999nature}, covering number~\citep{pollard2012convergence},  Rademacher Complexity~\citep{bartlett2002rademacher}, sequential Rademacher complexity~\citep{rakhlin2010online}
and Littlestone dimension~\citep{littlestone1988learning}. However, for reinforcement learning, it is a major challenge to find such general complexity measures that can be used to analyze the sample complexity under a general framework.

\paragraph{RL with Linear Function Approximation.} 
A line of work studied the MDPs that can be represented as a linear function of some given feature mapping. Under certain completeness conditions, the proposed algorithms can enjoy sample complexity/regret scaling with the dimension of the feature mapping rather than $|\cS|$ and $|\cA|$. One such class of MDPs is linear MDPs \citep{jin2020provably, wang2019optimism, neu2020unifying}, where the transition probability function and reward function are linear in some feature mapping over state-action pairs.
\citet{zanette2020learning,zanette2020provably} studied MDPs under a weaker assumption called low inherent Bellman error, where the value functions are nearly linear w.r.t.~the feature mapping. Another class of MDPs is linear mixture MDPs \citep{modi2020sample,jia2020model, ayoub2020model, zhou2021provably,cai2020provably}, where the transition probability kernel is a linear mixture of a number of basis kernels. The above paper assumed that feature vectors are known in the MDPs with linear approximation while \citet{agarwal2020flambe} studied a harder setting where both the feature and parameters are unknown in the linear model. 

\paragraph{RL with General Function Approximation.}
Beyond the linear setting, a recent line of research attempted to unify existing sample-efficient approaches with general function approximation. \citet{osband2014model} proposed an structural condition named eluder dimension. \citet{wang2020reinforcement} further proposed an efficient algorithm LSVI-UCB for general linear function classes with small eluder dimension. Another line of works proposed low-rank structural conditions, including Bellman rank \citep{jiang2017contextual, dong2020root} and Witness rank \citep{sun2019model}. \citet{yang2020function} studied the MDPs with a structure where the action-value function can be represented by
a kernel function or an over-parameterized neural network. Recently, \citet{jin2021bellman} proposed a complexity called Bellman eluder (BE) dimension. The RL problems with low BE dimension subsume the problems with low Bellman rank and low eluder dimension.
Simultaneously \citet{du2021bilinear} proposed Bilinear Classes, which can be applied to a variety of loss estimators beyond vanilla Bellman error, but with possibly worse sample complexity.
Very recently, \citet{foster2021statistical} proposed Decision-Estimation Coefficient (DEC), which is a necessary and sufficient condition for sample-efficient interactive learning.
To apply DEC to reinforcement learning, \citet{foster2021statistical} further proposed a RL class named Bellman Representability, which can be viewed as a generalization of the Bilinear Class.

\pb\section{Additional Examples}\label{app:more_examples}
In this section, we compare our work with other results in the literature in terms of regret bounds/sample complexity.
First of all, as we mentioned earlier in \S\ref{sec:def} when taking DEF as $\ell_{h, f'}(o_h, f_{h+1}, g_h, v) = Q_{h, g}(s_h, a_h) - r_h - V_{h+1, f}(s_{h+1})$ the ABC function reduces to the average Bellman error, and our ABC framework recovers the low Bellman eluder dimension framework for all cases compatible with such an estimation function.
On several model-free structures, our regret bound is equivalent to that of the GOLF algorithm~\citep{jin2021bellman}.
For example for linear MDPs, OPERA exhibits a $\tilde{\cO}(dH \sqrt{T})$ regret bound that matches the state-of-the-art result on linear function approximation provided in~\citet{zanette2020learning}.
For low eluder dimension models, the dependency on the eluder dimension $d$ in our regret analysis is $\tilde{\cO}(\sqrt{d})$ while the dependency in~\citet{wang2020reinforcement} is $\tilde{\cO}(d)$.
Also, for models with low Bellman rank $d$, our sample complexity scales linearly in $d$ as in~\citet{jin2021bellman} while complexity in~\citet{jiang2017contextual} scales quadratically.

For model-based RL settings with linear structure that are not within the low BE dimension framework such as the linear mixture MDPs, our OPERA algorithm obtains a $d_{\hed}H\sqrt{T}$ regret bound and $d_{\hed}^2 H^2/\epsilon^2$ sample complexity result.
In comparison,~\citet{jia2020model, modi2020sample} proposed an UCRL-VTR algorithm on linear mixture MDPs with a $d H \sqrt{T}$ regret bound, and~\citet{zhou2021nearly} improves this result by $\sqrt{H}$ via a Bernstein-type bonus for exploration.
The Bilinear Classes~\citep{du2021bilinear} is a general framework that covers linear mixture MDPs as a special case.
The sample complexity of the BiLin-UCB algorithm when constrained to linear mixture models is $d^3 H^4/\epsilon^2$, which is $dH^2$ worse than that of OPERA in this work.

In the rest of this section, we compare on six additional examples: the linear $Q^*$/$V^*$ model~\citep{du2021bilinear}, the low occupancy complexity model~\citep{du2021bilinear}, kernel reactive POMDPs, FLAMBE/Feature Selection, Linear Quadratic Regulator, and finally Generalized Linear Bellman Complete.

\pb\subsection{Linear $Q^*$/$V^*$}
The linear $Q^*$/$V^*$ model was proposed in~\citet{du2021bilinear}. In addition to the linear structure of the optimal action-value function $Q^*$, we further assume linear structure of the optimal state-value function $V^*$. We formally define the linear $Q^*$/$V^*$ model as follows:

\begin{definition}[Linear $Q^*$/$V^*$, Definition 4.5 in~\citealt{du2021bilinear}]
A linear $Q^*$/$V^*$ model satisfies for two Hilbert spaces $\cH_1, \cH_2$ and two given feature mappings $\phi(s, a): \cS \times \cA \rightarrow \cH_1$, $\psi(s'): \cS \rightarrow \cH_2$, there exist $w_h^* \in \cH_1, \theta_h^* \in \cH_2$ such that 
\begin{align*}
Q_h^*(s, a) = \left\langle w_h^*, \phi(s, a)\right\rangle
\qquad\text{and}\quad
V_h^*(s') = \left\langle \theta_h^*, \psi(s')\right\rangle
\end{align*}
for any $h \in [H]$ and $(s, a, s') \in \cS \times \cA \times \cS$.
\end{definition}
Suppose that $\cH_1$ and $\cH_2$ has dimension number $d_1$ and $d_2$, separately,~\citet{du2021bilinear} shows that linear $Q^*$/$V^*$ model belongs to the Bilinear Class with dimension $d = d_1 + d_2$ and BiLin-UCB algorithm achieves an $\tilde{\cO}(\frac{d^3 H^4}{\epsilon^2})$ sample complexity.
On the other hand the sample complexity of OPERA is of $\tilde{\cO}(\frac{d^2 H^2}{\epsilon^2})$.

\pb\subsection{Low Occupancy Complexity}
The low occupancy complexity model assumes linearity on the state-action distribution and has been proposed in~\citet{du2021bilinear}.
We recap its definition formally as follows:
\begin{definition}[Low Occupancy Complexity, Definition 4.7 in~\citealt{du2021bilinear}]
A low occupancy complexity model is an MDP $M$ satisfying for some hypothesis class $\cF$, a Hilbert space $\cH$ and feature mappings $\phi_h(\cdot, \cdot): \cS \times \cA \rightarrow \cH, \forall h \in [H]$ that there exists a function on hypothesis classes $\beta_h: \cF \rightarrow \cH$ such that
\begin{align*}
	d^{\pi_f}(s_h, a_h) = \left\langle \beta_h(f), \phi_h(s_h, a_h)\right\rangle
	, \qquad 
	\forall f \in \cF
	,\quad
	\forall (s_h, a_h) \in \cS \times \cA
	.
\end{align*}
\end{definition}
\citet{du2021bilinear} proved that the low occupancy complexity model belongs to the Bilinear Classes and has a sample complexity of $d^3 H^4/\epsilon^2$ under the BiLin-UCB algorithm.
In the meantime, the low occupancy complexity model admits an improved sample complexity of $d^2 H^2/\epsilon^2$ under the OPERA algorithm.

\pb\subsection{Kernel Reactive POMDPs}
The Reactive POMDP~\citep{krishnamurthy2016pac} is a partially observable MDP (POMDP) model that can be described by the tuple $(\cS, \cA, \cO, \mathbb{T}, \mathbb{O}, r, H)$, where $\cS$ and $\cA$ are the state and action spaces respectively, $\cO$ is the observation space, $\mathbb{T}$ is the transition matrix that maps each $(s, a) \in \cS \times \cA$ to a probability measure on $\cS$ and determines the dynamics of the next state as $s_{h+1} \sim \mathbb{T}(\cdot \mid s_h, a_h)$, $\mathbb{O}$ is the emission measure that determines the observation $o_h \sim \mathbb{O}(\cdot \mid s_h)$ given current state $s_h$.
The reactiveness of a POMDP refers to the property that the optimal value function $Q^*$ depends only on the current observation and action. In other words, for all $h$, there exists a $f_h^*: \cO \times \cA \rightarrow [0, 1]$ such that for any given trajectory $\tau_h = [o_1, a_1, \ldots, o_h]$ and $a_h$, we have
\begin{align*}
Q^*(\tau_h, a_h) = f_h^*(o_h, a_h)
.
\end{align*}
Given the definition of a reactive POMDP, we define the kernel reactive POMDP~\citep{jin2021bellman} as follows:
\begin{definition}[Kernel Reactive POMDP]
A kernel reactive POMDP is a reactive POMDP that satisfies for each $h \in [H]$ and a given seperable Hilbert space $\cH$, there exist feature mappings $\phi_h: \cS \times \cA \rightarrow \cH$ and $\psi_h: \cS \rightarrow \cH$ such that the transition matrix $\mathbb{T}_h(s' \mid s, a) = \left\langle \phi_h(s, a), \psi_h(s') \right\rangle_{\cH}$ and $\psi$ is bounded in the sense that for any $V(\cdot): \cS \rightarrow [0, 1]$, $\left\|\sum_{s' \in \cS} V(s') \psi(s')\right\|_{\cH} \leq 1$.
\end{definition}
In~\citet{jin2021bellman}, the authors showed that the kernel reactive POMDP with vanilla estimation function $\ell_h(o_h, f_{h+1}, g_h, v) = Q_{h, g}(s_h, a_h) - r_h - V_{h+1, f}(s_{h+1})$ has $V$-type BE dimension bounded by the effective dimension.
According to Proposition~\ref{prop:be}, the kernel reactive POMDP model also has low FE dimension bounded by the effective dimension.

\pb\subsection{FLAMBE/Feature Selection}
For FLAMBE/feature selection model firstly introduced in~\citet{agarwal2020flambe}, similarity is shared with the linear MDP setting but the main difference lies in that the feature mappings are \emph{unknown}.
We formally define the feature selection model as follows:
\begin{definition}[Feature Selection]
A low rank feature selection model is an MDP $M$ that satisfies for any $h \in [H]$ and a given Hilbert space $\cH$, there exist unknown feature mappings $\mu_h^*: \cS \rightarrow \cH$ and $\phi^*: \cS \times \cA \rightarrow \cH$ such that the transition probability satisfies:
\begin{align*}
\PP_h(s' \mid s, a) = \mu_h^*(s')^\top \phi^*(s, a), \quad \forall (s, a, s') \in \cS \times \cA \times \cS
.
\end{align*}
\end{definition}
We consider the feature selection model with DEF $\ell_h(o_h, f_{h+1}, g_h, v) := Q_{h, g}(s_h, a_h) - r_h - V_{h+1, f}(s_{h+1})$.
In~\citet{du2021bilinear} they have proved in Lemma A.1 that 
\begin{align}
 \EE_{s_h \sim \pi_g, a_h \sim \pi_f} 
	\left[
	Q_{h, f}(s_h, a_h) - r_h - V_{h+1, f}(s_{h+1})
	\right]
	=
\left\langle 
W_h(f), X_h(g)
\right\rangle
,
\label{eq:def_flambe}
\end{align}
where 
\begin{align*}
&W_h(f) :=  \int_{s \in \cS} \mu_h^*(s) \left(V_{h, f}(s) - r(s, \pi_f(s)) - \EE_{s' \sim \PP_h(\cdot \mid s, \pi_f(s))}\left[V_{h+1, f}(s')\right] \right) ds,
\\&
X_h(f) :=
\EE_{s_{h-1}, a_{h-1} \sim \pi_f}\left[\phi^*(s_{h-1}, a_{h-1}) \right].
\end{align*}
We note that Eq.~\eqref{eq:def_flambe} ensures condition~\ref{geq} and~\ref{leq} in Definition~\ref{def:abc} at the same time and the ABC of the feature selection setting has a bilinear structure that enables us to apply Proposition~\ref{prop:eff} to conclude low FE dimension.

\pb\subsection{Linear Quadratic Regulator}
In a linear quadratic regulator (LQR) model~\citep{bradtke1992reinforcement, anderson2007optimal,dean2020sample}, we consider the $d$ dimensional state space $\cS \subseteq \RR^d$ and $K$ dimensinal action space $\cA \subseteq \RR^K$.
The transition dynamics of an LQR model can be written in matrix form so that the induced value function is quadratic~\citep{jiang2017contextual}.
We formally define the LQR model as follows:
\begin{definition}[Linear Quadratic Regulator]
A linear quadratic regulator model is an MDP $M$ such that there exist unknown matrix $A \in \RR^{d \times d}, B\in \RR^{d \times K}$ and $Q \in \RR^{d \times d}$ satisfying for $\forall h \in[H$] and zero-centered random variables $\epsilon_h, \tau_h$ with $\EE[\epsilon_h\epsilon_h^\top] = \Sigma$ and $\EE[\tau_h^2] = \sigma^2$ that
\begin{align*}
&s_{h+1}    =    A s_h + B a_h + \epsilon_h
,
\\
&r_h        =    s_h^\top Q s_h + a_h^\top a_h + \tau_h
.
\end{align*}
\end{definition}
The LQR model has been analyzed in~\citet{du2021bilinear} and proved to belong to the Bilinear Classes.
~\citet{du2021bilinear} used the hypothesis class defined as 
\begin{align*}
\cF_h = \left\{
(C_h, \Lambda_h, O_h): C_h \in \RR^{K \times d}, \Lambda_h \in \RR^{d \times d}, O_h \in \RR
\right\}_{h \in [H]}
.
\end{align*}
For each hypothesis in the class $f \in \cF$, the corresponding policy and value function are
\begin{align*}
\pi_f(s_h) = C_{h, f} s_h, \quad V_{h, f}(s_h) = s_h^\top \Lambda_{h, f} s_h + O_{h, f}
.
\end{align*}
Under the above setting, we use the DEF for LQR $\ell_h(o_h, f_{h+1}, g_h, v) := Q_{h, g}(s_h, a_h) - r_h - V_{h+1, f}(s_{h+1})$ and Lemma A.4 in~\citet{du2021bilinear} showed that
\begin{align}
\EE_{s_h, a_h \sim \pi_g} 
	\left[
	Q_{h, f}(s_h, a_h) - r_h - V_{h+1, f}(s_{h+1})
	\right]
	=
	\left\langle 
W_h(f), X_h(g)
	\right\rangle
	,
\label{eq:def_lqr}
\end{align}
where 
\begin{align*}
W_h(f)
&=
\left[
\text{vec}(\Lambda_{h, f} - Q - C_{h, f}^\top C_{h, f} - (A + B C_{h, f})^\top \Lambda_{h+1, f}(A + BC_{h, f})),
\right.	\\&~\quad\, \left.
O_{h, f} - O_{h+1, f} - \text{trace}(\Lambda_{h+1, f}\Sigma)
\right]
,
\\
X_h(f)
&=
\left[\text{vec}(\EE_{s_h \sim \pi_f}[s_h s_h^\top]), 1\right]
.
\end{align*}
We note that Eq.~\eqref{eq:def_lqr} ensures condition~\ref{geq} and~\ref{leq} in Definition~\ref{def:abc} simultaneously and the ABC of the LQR model setting admits a bilinear structure that enables us to apply Proposition~\ref{prop:eff} and conclude low FE dimension.

\pb\subsection{Generalized Linear Bellman Complete}\label{sec:generalized_linear_bellman_complete}
We finally introduce the generalized linear Bellman complete model, showing that our ABC class with low FE dimension captures this model even without the monotone operator $\sqrt{x}$ used in~\citet{du2021bilinear}.
\begin{definition}[Generalized Linear Bellman Complete]
A generalized linear Bellman complete model consists of an inverse link function $\sigma: \RR \rightarrow \RR^{+}$ and a hypothesis class $\cF := \{ \cF_h = \sigma(\theta_h^\top \phi(s, a)): \theta_h \in \cH, \norm{\theta_h} \leq R \}_{h \in [H]}$ such that for any $f \in \cF$ and $\forall h \in [H]$ the Bellman completeness condition holds:
\begin{align*}
r(s, a) + \EE_{s' \in \PP_h} \max_{a' \in \cA} \sigma(\theta_{h+1, f}^\top \phi(s', a')) \in \cH_h.
\end{align*}
\end{definition}
By the choice of the hypothesis class $\cF$, we know that there exists a mapping $\cT_h: \cH \rightarrow \cH$ such that 
\begin{align}
\sigma\left( 
\cT_h(\theta_{h+1, f})^\top \phi(s, a)
\right)
	=
r(s, a) + \EE_{s' \in \PP_h} \max_{a' \in \cA} \sigma(\theta_{h+1, f}^\top \phi(s', a'))
.
\label{eq:glbc}
\end{align}
We note that in~\citet{du2021bilinear} they choose a discrepancy function dependent on a discriminator function $v$. In this work, we choose a different estimation function that allows much simpler calculation and sharper sample complexity result.
We let 
$$
\ell_h(o_h, f_{h+1}, g_h, v) := \sigma(\theta_{h, g}^\top \phi(s_h, a_h)) - r_h - \max_{a'} \theta_{h+1, f}^\top \phi(s_{h+1}, a')
.
$$
By Eq.~\eqref{eq:glbc}, it is easy to check that the above DEF satisfies the decomposable condition.
Assuming $a \leq \sigma'(x) \leq b$, Lemma 6.2 in~\citet{du2021bilinear} has already shown the Bellman dominance property that
\begin{align*}
&
\left|
\EE_{s_h, a_h \sim \pi_f} \left[
Q_{h, f}(s_h, a_h) - r_h - V_{h+1, f}(s_{h+1})
\right]
\right|
	\\&\leq 
b \sqrt{
\left\langle 
\text{vec}\left( (\theta_{h, f} - \cT_h(\theta_{h+1, f})(\theta_{h, f} - \cT_h(\theta_{h+1, f})^\top)\right),
\text{vec}\left( \EE_{s_h, a_h \sim \pi_f} \phi(s_h, a_h) \phi(s_h, a_h)^\top \right)
\right\rangle
}
	\\&=
b \sqrt{
\left\langle W_h(f), X_h(f)\right\rangle
}
.
\end{align*}
Next, we illustrate that the Dominating Average EF condition holds in our framework. We have
\begin{align*}
& 
 \EE_{s_h \sim \pi_g, a_h \sim \pi_{\oper}}
 \norm{\EE_{s_{h+1}} \left[\ell_{h, g}(o_h, f_{h+1}, f_h, v) \mid s_h, a_h \right]}^2
 	\\&=
\EE_{s_h, a_h \sim  \pi_g}
 \norm{\sigma(\theta_{h, f}^\top \phi(s_h, a_h)) - \sigma(\cT_h(\theta_{h+1, f})^\top \phi(s, a))}^2
 	\\&\geq 
a \EE_{s_h, a_h \sim \pi_g} \left((\theta_{h, f} - \cT_h(\theta_{h+1, f}))^\top \phi(s_h, a_h) \right)^2
	\geq 
a \left\langle 
W_h(f), X_h(g)
\right\rangle
,
\end{align*}
where 
\begin{align*}
W_h(f) &:= \text{vec}\left( (\theta_{h, f} - \cT_h(\theta_{h+1, f})(\theta_{h, f} - \cT_h(\theta_{h+1, f})^\top)\right),
	\\
X_h(f) &:= \text{vec}\left( \EE_{s_h, a_h \sim \pi_f} \phi(s_h, a_h) \phi(s_h, a_h)^\top \right).
\end{align*}
Analogous to the KNR case and the proof of Lemma~\ref{lem:knr_fe}, the aforementioned model with ABC function $\sqrt{
\left\langle W_h(f), X_h(f)\right\rangle
}$ has low FE dimension.

\pb\section{Proof of Main Results}\label{sec:proof_main}
In this section, we provide proofs of our main result Theorem~\ref{theo:main} and a sample complexity corollary of the OPERA algorithm.
Originated from proof techniques widely used in confidence bound based RL algorithms~\citet{russo2013eluder} our proof steps generalizes that of the GOLF algorithm~\citet{jin2021bellman} but admits general DEF and ABCs.
We prove our main result as follows:

\pb\subsection{Proof of Theorem~\ref{theo:main}}\label{sec:proof_main_sub}
\begin{proof}[Proof of Theorem~\ref{theo:main}]
We recall that the objective of an RL problem is to find an $\epsilon$-optimal policy satisfying $V_1^*(s_1) - V_1^{\pi^t}(s_1) \leq \epsilon$. Moreover, the regret of an RL problem is defined as $\sum_{t = 1}^T V_1^*(s_1) - V_1^{\pi^t}(s_1)$, where $\pi^t$ is the output policy of an algorithm at time $t$.

\paragraph{Step 1: Feasibility of $f^*$.}
First of all, we show that the optimal hypothesis $f^*$ lies within the confidence region defined by Eq.~\eqref{eq:conf_set} with high probability:

\begin{lemma}[Feasibility of $f^*$]\label{lem:feasibility}
In Algorithm~\ref{alg:general}, given $\rho > 0$ and $\delta > 0$ we choose $\beta = c(\log \left(TH \cN_{\cL}(\rho)/\delta \right) + T \rho)$ for some large enough constant $c$.
Then with probability at least $1 - \delta$, $f^*$ satisfies for any $t \in [T]$:
\begin{align*}
\max_{v \in \cV} \left\{
\sum_{i = 1}^{t - 1}\norm{\ell_{h, f_h^i}(o_h^i, f^*_{h+1}, f^*_h, v)}^2 
			-
\inf_{g_h \in \cG_h}\sum_{i = 1}^{t - 1} \norm{\ell_{h, f_h^i}(o_h^i, f^*_{h+1}, g_h, v)}^2
\right\}
 	\leq \cO(\beta)
.
\end{align*}\end{lemma}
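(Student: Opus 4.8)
The plan is to fix $h$, $t$, and a discriminator $v$, rewrite the bracketed quantity as $-\inf_{g_h \in \cG_h}\sum_{i=1}^{t-1} X_i^{g_h}(v)$, where
\[
X_i^{g_h}(v) := \norm{\ell_{h,f_h^i}(o_h^i, f^*_{h+1}, g_h, v)}^2 - \norm{\ell_{h,f_h^i}(o_h^i, f^*_{h+1}, f^*_h, v)}^2
,
\]
and then show $\sum_{i=1}^{t-1} X_i^{g_h}(v) \ge -\cO(\beta)$ uniformly over $g_h$ and $v$. The central observation is structural: applying the decomposability property of Definition~\ref{def:def} with the fixed argument $f_{h+1}=f^*_{h+1}$ and using $\cT(f^*)=f^*$, the centered estimation function $\ell_{h,f_h^i}(o_h^i,f^*_{h+1},f^*_h,v)$ is conditionally mean-zero given $\cJ_h$, and, for \emph{any} $g_h$, the noise component $\ell_{h,f_h^i}(o_h^i,f^*_{h+1},g_h,v)-\EE_{s_{h+1}}[\ell_{h,f_h^i}(o_h^i,f^*_{h+1},g_h,v)\mid s_h,a_h]$ coincides with $\ell_{h,f_h^i}(o_h^i,f^*_{h+1},f^*_h,v)$, \emph{independent} of $g_h$. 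Consequently $\Delta_i := \ell_{h,f_h^i}(o_h^i,f^*_{h+1},g_h,v)-\ell_{h,f_h^i}(o_h^i,f^*_{h+1},f^*_h,v)$ is $\cJ_h$-measurable (predictable), and a bias–variance split yields $\EE_i[X_i^{g_h}(v)] = \norm{\Delta_i}^2 \ge 0$.

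Next comes the martingale concentration step. Expanding $X_i^{g_h}(v) = 2\langle \ell_{h,f_h^i}(o_h^i,f^*_{h+1},f^*_h,v),\Delta_i\rangle + \norm{\Delta_i}^2$ with $\Delta_i$ predictable, the increment $X_i^{g_h}(v)-\EE_i[X_i^{g_h}(v)]$ is a martingale difference whose conditional variance is bounded by $\norm{\Delta_i}^2$ up to the uniform $\ell_2$-bound on $\ell$ guaranteed by Definition~\ref{def:def}, i.e.\ by a constant multiple of the conditional mean $\EE_i[X_i^{g_h}(v)]$; its range is likewise bounded. I would then invoke a Freedman-type inequality together with the self-bounding trick to absorb the variance into the nonnegative predictable mean, obtaining $\sum_{i=1}^{t-1} X_i^{g_h}(v) \ge \tfrac12\sum_{i=1}^{t-1}\EE_i[X_i^{g_h}(v)] - \cO(\log(1/\delta)) \ge -\cO(\log(1/\delta))$ for each fixed $(g_h,v)$.

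Finally, I would make the bound uniform. Since both the infimum over $g_h$ and the maximum over $v$ appear, I pass to a $\rho$-cover of the induced estimation-function class $\cL$, whose cardinality is controlled by $N_{\cL}(\rho) \le N_{\cF}^{2}(\tfrac{\rho}{4L})\,N_{\cG}(\tfrac{\rho}{4L})\,N_{\cV}(\tfrac{\rho}{4L})$, use the Lipschitz continuity of Assumption~\ref{assu:lipschitz} to bound the error of replacing $(f',g_h,v)$—including the random data-generating $f'=f_h^i$—by their cover elements, which contributes the additive $T\rho$ term, and take a union bound over the cover together with $h\in[H]$ and $t\in[T]$. Collecting terms gives the advertised $\beta = c(\log(TH N_{\cL}(\rho)/\delta) + T\rho)$. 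I expect the main obstacle to be the concentration step: the self-bounding Freedman argument only closes because decomposability forces the noise part of $\ell_{h,f_h^i}(\cdot,f^*_{h+1},g_h,v)$ to be exactly the $g_h$-independent, conditionally-centered term $\ell_{h,f_h^i}(\cdot,f^*_{h+1},f^*_h,v)$, so that the conditional variance is tied to the conditional mean; verifying this identity carefully—and that $\Delta_i$ is genuinely predictable—for the general vector-valued, discriminator-dependent estimation function is where the real work lies, together with ensuring the covering argument remains compatible with the interior $\max_{v\in\cV}$.
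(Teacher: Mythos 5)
Your proposal is correct and follows essentially the same route as the paper's proof in \S\ref{app:proof_16}: the same auxiliary variable $X_i$, the same use of decomposability (with $\cT(f^*)=f^*$) to show $\EE_{s_{h+1}}[X_i\mid s_h,a_h]=\norm{\EE_{s_{h+1}}[\ell_{h,f^i}(o_h^i,f^*_{h+1},g_h,v)\mid s_h,a_h]}^2\ge 0$ together with the variance bound $\EE[X_i^2\mid s_h,a_h]\le 4R^2\,\EE[X_i\mid s_h,a_h]$, the same Freedman-plus-self-bounding step to get $-\sum_i X_i\le\cO(R^2\log(1/\delta))$, and the same $\rho$-cover union bound yielding the $\log(TH\cN_{\cL}(\rho)/\delta)+T\rho$ form of $\beta$. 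Your explicit expansion $X_i=2\langle\ell(\cdot,f^*_h,v),\Delta_i\rangle+\norm{\Delta_i}^2$ with predictable $\Delta_i$ is just a rewriting of the paper's $\|a\|^2-\|b\|^2=\langle a-b,a+b\rangle$ identity, so there is no substantive difference.
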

Lemma~\ref{lem:feasibility} shows that at each round of updates the optimal hypothesis $f^*$ stays in the confidence region depicted by Eq.~\eqref{eq:conf_set} with radius $\cO(\beta)$. We delay the proof of Lemma~\ref{lem:feasibility} to \S\ref{app:proof_16}.
Lemma~\ref{lem:feasibility} together with the optimization procedure Line~\ref{line:optimization} of Algorithm~\ref{alg:general} implies an upper bound of $V_1^*(s_1) - V_1^{\pi^t}(s_1)$ with probability at least $1 - \delta$ as follows:
\begin{align}\label{eq:upper_bound_regret}
	V_1^*(s_1) - V_1^{\pi^t}(s_1) \leq V_{1, f^t}(s_1) - V_1^{\pi^t}(s_1)
.
\end{align}

\paragraph{Step 2: Policy Loss Decomposition.}
The second step is to upper bound the regret by the summation of Bellman errors. We apply the policy loss decomposition lemma in~\citet{jiang2017contextual}.
\begin{lemma}[Lemma 1 in~\citealt{jiang2017contextual}]\label{lem:policy_loss_decomp}
$\forall f \in \cH$, 
$$
V_{1, f^t}(s_1) - V_{1}^{\pi^t}(s_1)
    =
\sum_{h = 1}^H 
\EE_{s_h, a_h \sim \pi^t}
\left[
Q_{h, f^t}(s_h, a_h) 
    -
r_h
    -
V_{h+1, f^t}(s_{h+1})
\right]
.
$$
\end{lemma}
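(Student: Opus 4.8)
The plan is to prove this identity by a telescoping argument across the horizon, mirroring the classical value-difference decomposition of~\citet{jiang2017contextual}. The starting point is that $\pi^t = \pi_{f^t}$ is the greedy policy induced by $f^t$, so at any state $s_h$ the action taken is $a_h = \pi_{h,f^t}(s_h) = \arg\max_{a \in \cA} Q_{h,f^t}(s_h, a)$. Hence, along any trajectory generated by $\pi^t$ we have the pointwise identity $Q_{h,f^t}(s_h, a_h) = V_{h,f^t}(s_h)$, since $V_{h,f^t}(s) = \EE_{a \sim \pi_{h,f^t}}[Q_{h,f^t}(s,a)] = Q_{h,f^t}(s, \pi_{h,f^t}(s))$.

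First I would substitute this identity into each summand on the right-hand side, rewriting the generic step-$h$ term as
$$\EE_{s_h \sim \pi^t}\left[V_{h,f^t}(s_h)\right] - \EE_{s_h, a_h \sim \pi^t}\left[r_h(s_h,a_h)\right] - \EE_{s_h,a_h \sim \pi^t}\left[V_{h+1,f^t}(s_{h+1})\right].$$
The next step is to observe that, because $s_{h+1} \sim \PP_h(\cdot \mid s_h, a_h)$ under the trajectory measure of $\pi^t$, the marginal law of $s_{h+1}$ in the step-$h$ term coincides with the marginal law of the state entering the step-$(h{+}1)$ term, i.e.~$\EE_{s_{h+1}\sim\pi^t}[\,\cdot\,] = \EE_{s_h,a_h\sim\pi^t}\EE_{s_{h+1}\sim \PP_h(\cdot\mid s_h,a_h)}[\,\cdot\,]$ by the tower property. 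This is the only genuinely non-mechanical point, and the one I would verify with care, since it is exactly what licenses the telescoping; it follows directly from the Markov structure of the trajectory distribution and the meaning of the notation $s_h \sim \pi^t$.

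Given this, the value terms telescope: summing over $h = 1, \ldots, H$ collapses $\sum_{h=1}^H \big(\EE_{s_h\sim\pi^t}[V_{h,f^t}(s_h)] - \EE_{s_{h+1}\sim\pi^t}[V_{h+1,f^t}(s_{h+1})]\big)$ to $V_{1,f^t}(s_1) - \EE_{s_{H+1}\sim\pi^t}[V_{H+1,f^t}(s_{H+1})]$, where I use that $s_1$ is the fixed initial state and the terminal convention $V_{H+1,f^t}\equiv 0$. Finally, the collected reward terms $\sum_{h=1}^H \EE_{s_h,a_h\sim\pi^t}[r_h(s_h,a_h)]$ are by definition the expected cumulative reward of $\pi^t$ started from $s_1$, that is $V_1^{\pi^t}(s_1)$. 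Combining the two contributions yields $V_{1,f^t}(s_1) - V_1^{\pi^t}(s_1)$, matching the left-hand side. I expect no substantive obstacle beyond the bookkeeping of the trajectory marginals; the statement is essentially the simulation lemma specialized to the greedy policy of $f^t$.
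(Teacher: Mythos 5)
Your proof is correct: the paper does not prove this lemma itself but imports it directly from \citet{jiang2017contextual}, and your telescoping argument --- using that $\pi^t=\pi_{f^t}$ is greedy so $Q_{h,f^t}(s_h,a_h)=V_{h,f^t}(s_h)$ along the trajectory, matching the marginal of $s_{h+1}$ across consecutive terms via the tower property, and collecting the reward terms into $V_1^{\pi^t}(s_1)$ with the convention $V_{H+1,f^t}\equiv 0$ --- is exactly the standard proof of that cited result. No gaps.
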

Combining Lemma~\ref{lem:policy_loss_decomp} with Eq.~\eqref{eq:upper_bound_regret} we have the following:
\begin{align}
V_1^*(s_1) - V_1^{\pi^t}(s_1)
	\leq 
V_{1, f^t}(s_1) - V_{1}^{\pi^t}(s_1)
    =
\sum_{h = 1}^H 
\EE_{s_h, a_h\sim \pi^t}
\left[
Q_{h, f^t}(s_h, a_h) 
    -
r_h
    -
V_{h+1, f^t}(s_{h+1})
\right]
.
\label{eq:decomp}
\end{align}

\paragraph{Step 3: Small ABC Value in the Confidence Region.}
The third step is devoted to controlling the cumulative square of Admissible Bellman Characterization function.
Recalling that the ABC function is upper bounded by the average DEF, where each feasible DEF stays in the confidence region that satisfies Eq.~\eqref{eq:conf_set},
we arrive at the following Lemma~\ref{lem:general_square}:
\begin{lemma}\label{lem:general_square}
In Algorithm~\ref{alg:general}, given $\rho > 0$ and $\delta > 0$ we choose $\beta = c(\log \left(TH \cN_{\cL}(\rho)/\delta \right) + T \rho)$ for some large enough constant $c$.
Then with probability at least $1-\delta$, for all $(t, h) \in[T] \times[H]$, we have
\begin{align}
\sum_{i = 1}^{t - 1}\left(G_{h, f^*}(f^t, f^i)\right)^2
	\leq 
\cO(\beta)
.
\label{eq:abc_bound}
\end{align}
\end{lemma}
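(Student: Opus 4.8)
The plan is to bound the cumulative squared ABC value $\sum_{i=1}^{t-1}(G_{h,f^*}(f^t,f^i))^2$ by relating it back to the empirical discrepancies that the confidence constraint of Algorithm~\ref{alg:general} controls. The starting point is the \textbf{Dominating Average Estimation Function} property \ref{geq} in Definition~\ref{def:abc}, which lets me upper bound each $(G_{h,f^*}(f^t,f^i))^2$ by $\max_{v \in \cV}\EE_{s_h \sim \pi_{f^i}, a_h \sim \pi_{\oper}}\norm{\EE_{s_{h+1}}[\ell_{h,f^i}(o_h, f^t_{h+1}, f^t_h, v) \mid s_h, a_h]}^2$, i.e.\ the population squared conditional estimation function evaluated along the trajectory distribution of the historical policy $\pi^i$ with hypothesis $f^t$ plugged in.

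Next I would convert these \emph{population} conditional expectations into the \emph{empirical} sums over the collected data $o_h^i$ that appear in the confidence constraint \eqref{eq:conf_set}. This is where the \textbf{Decomposability} property \ref{decomp} of Definition~\ref{def:def} is essential: it guarantees that $\ell_{h,f^i}(o_h, f^t_{h+1}, f^t_h, v) - \EE_{s_{h+1}}[\ell_{h,f^i}(\cdot) \mid s_h, a_h] = \ell_{h,f^i}(o_h, f^t_{h+1}, \cT(f^t)_h, v)$ is a martingale difference adapted to $\cJ_h$, so the empirical squared loss with $g_h = f^t_h$ minus the empirical squared loss with $g_h = \cT(f^t)_h$ concentrates around the population quantity I want to control. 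Concretely, I would expand $\sum_{i}\norm{\ell_{h,f^i}(o_h^i, f^t_{h+1}, f^t_h, v)}^2 - \sum_i \norm{\ell_{h,f^i}(o_h^i, f^t_{h+1}, \cT(f^t)_h, v)}^2$ and, using the orthogonality afforded by decomposability together with a uniform (over the $\rho$-cover of $\cL$) Freedman/Bernstein-type martingale concentration, show this empirical difference equals $\sum_i \EE_{s_{h+1}}\norm{\ell_{h,f^i}(o_h^i, f^t_{h+1}, f^t_h, v)\mid s_h, a_h}^2$ up to an additive $\cO(\beta)$ slack. Taking the max over $v$ and invoking \textbf{Global Discriminator Optimality} \ref{opt} aligns the discriminator so the summed population term dominates $\sum_i (G_{h,f^*}(f^t, f^i))^2$.

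The final link is feasibility: since $f^t$ satisfies the optimization constraint \eqref{eq:conf_set}, its empirical loss with $g_h = f^t_h$ exceeds the infimum over $g_h \in \cG_h$ by at most $\beta$; because $\cT(f^t) \in \cG$ (the operator from \ref{decomp} maps into $\cG$), that infimum is no larger than the loss at $g_h = \cT(f^t)_h$. Chaining these bounds yields that the empirical difference—and hence, via the concentration step, the population sum $\sum_i (G_{h,f^*}(f^t,f^i))^2$—is $\cO(\beta)$, which is exactly \eqref{eq:abc_bound}. I would close by a union bound over all $(t,h) \in [T]\times[H]$ and over the cover of $\cL$, absorbing the $\log(TH N_{\cL}(\rho)/\delta)$ and $T\rho$ terms into the stated choice of $\beta$.

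The main obstacle I anticipate is the concentration step in the second paragraph: controlling the empirical-to-population gap uniformly over the (continuous) hypothesis and discriminator classes requires the Lipschitz Assumption~\ref{assu:lipschitz} to pass from the discretized $\rho$-cover back to the full class, and requires carefully tracking the cross-terms produced when expanding the squared $\ell_2$-norms of vector-valued estimation functions. Ensuring the martingale-difference structure is genuinely preserved under the substitution $g_h = \cT(f^t)_h$—so that the variance proxy in the Freedman bound is itself controlled by the quantity being estimated, giving a self-bounding inequality rather than a loose additive term—is the delicate part that makes the $\cO(\beta)$ rate (as opposed to $\cO(\sqrt{\beta \cdot \text{sum}})$) come out.
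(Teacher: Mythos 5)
Your proposal follows essentially the same route as the paper's proof: you form the empirical difference of squared losses at $g_h = f^t_h$ versus $g_h = \cT(f^t)_h$, use decomposability to get both the martingale-difference structure and the identity equating its conditional mean to the squared conditional estimation function, apply Freedman's inequality with the self-bounding variance bound uniformly over a $\rho$-cover of $\cL$, invoke the constraint \eqref{eq:conf_set} together with $\cT(f^t)\in\cG$ to bound the empirical difference by $\beta$, and finish with global discriminator optimality and the Dominating Average EF property. This matches the paper's argument in all essential steps, including the delicate points you flag (the self-bounding Freedman step and the Lipschitz-based covering argument).
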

The proof of Lemma~\ref{lem:general_square} makes use of Freedman’s inequality (the precise version as in \citet{agarwal2014taming}) and we delay the proof to \S\ref{app:proof_18}.

\paragraph{Step 4: Bounding the Cumulative Bellman Error by Functional Eluder Dimension.}
In the fourth step, we aim to traslate the upper bound of the cumulative squared ABC at $(f^t, f^i)$ in Eq.~\eqref{eq:abc_bound} to an upper bound of the cumulative ABC at $(f^t, f^t)$.
The following Lemma~\ref{lm:bounded cumulative} is adapted from Lemma 41 in~\citet{jin2021bellman} and Lemma 2 in~\citet{russo2013eluder}.
Lemma~\ref{lm:bounded cumulative} controls the sum of ABC functions by properties of the functional eluder dimension.

\begin{lemma}\label{lm:bounded cumulative}
For a hypothesis class $\cF$ and a given coupling function $G(\cdot, \cdot): \cF \times \cF \rightarrow \cR$  with bounded image space $\left|G(\cdot, \cdot)\right| \leq C$. For any pair of sequences $\{f_{t}\}_{t\in [T]}, \{g_{t}\}_{t \in [T]} \subseteq \cF$ satisfying for all $t \in [T]$, $\sum_{i=1}^{t-1}(G(f_{t}, g_{i}))^{2} \leq \beta$, the following inequality holds for all $t \in [T]$ and $\omega > 0$:
\begin{align*}
\sum_{i=1}^{t}|G(f_{i}, g_{i})| \leq \cO\Big(\sqrt{\dim_{\hed}(\cF, G, \omega)\beta t} + C\cdot\min\{t, \dim_{\hed}(\cF, G, \omega)\} + t\omega\Big).
\end{align*}
\end{lemma}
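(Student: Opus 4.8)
The plan is to follow the classical eluder-dimension regret decomposition of \citet{russo2013eluder}, transplanted to the functional setting. Writing $w_i := |G(f_i, g_i)|$ for the ``width'' at round $i$, the goal reduces to bounding $\sum_{i=1}^t w_i$. Two ingredients suffice: (a) a counting bound on how many rounds can have large width, expressed through $\dim_{\hed}$, and (b) a summation-by-sorting argument that converts the counting bound into the stated cumulative bound. Throughout I keep the first slot of $G$ as the ``test hypothesis'' and the second slot as the accumulating ``point'', so that the hypothesis on the excerpt's constraint $\sum_{j<t}(G(f_t,g_j))^2 \le \beta$ plays the role of the function whose small aggregate value on past points is probed against a new point.

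First I would establish the counting claim: for every threshold $\alpha > 0$,
$$
\big|\{\, i \le t : |G(f_i,g_i)| > \alpha \,\}\big| \;\le\; \Big(\tfrac{\beta}{\alpha^2} + 1\Big)\,\dim_{\hed}(\cF, G, \alpha).
$$
To prove it I run a greedy bin-packing procedure on the large-width rounds, processed in increasing order of $i$: when the large-width point $g_i$ arrives (with test hypothesis $f_i$ satisfying $\sum_{j<i}(G(f_i,g_j))^2 \le \beta$), I assign $g_i$ to the first existing bin $B$ on which $f_i$ has small aggregate value, i.e.\ $\sum_{g \in B}(G(f_i,g))^2 \le \alpha^2$; if no such bin exists, I open a fresh bin containing $g_i$. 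Each bin is then an $\alpha$-independent sequence in the sense of Definition~\ref{def:hef} with $\epsilon' = \alpha$, since each element $g_i$ is witnessed by its own $g = f_i$, which keeps cumulative value $\le \alpha$ on the bin's earlier members while $|G(f_i,g_i)| > \alpha$; hence every bin has length at most $\dim_{\hed}(\cF,G,\alpha)$. On the other hand, a new bin is opened only when $f_i$ fails the admission test on all $L$ current bins, so $\sum_{j=1}^{L}\sum_{g \in B_j}(G(f_i,g))^2 > L\alpha^2$; as the bins are disjoint subsets of $\{g_1,\dots,g_{i-1}\}$ and the constraint bounds the full sum by $\beta$, this forces $L < \beta/\alpha^2$, so the number of bins is at most $\beta/\alpha^2 + 1$. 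Multiplying the per-bin length bound by the bin count yields the claim.

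Finally I would sort the widths as $w_{(1)} \ge \cdots \ge w_{(t)}$ and integrate. Using monotonicity of $\dim_{\hed}(\cF,G,\cdot)$ to replace $\dim_{\hed}(\cF,G,\alpha)$ by $d := \dim_{\hed}(\cF,G,\omega)$ for all $\alpha \ge \omega$, the counting claim applied with $\alpha = w_{(k)}$ gives $k \le (\beta/w_{(k)}^2 + 1)\,d$, hence $w_{(k)} \le \sqrt{\beta d/(k-d)}$ whenever $k > d$ and $w_{(k)} > \omega$, while $w_{(k)} \le C$ always. Splitting $\sum_k w_{(k)}$ into the part with $w_{(k)} \le \omega$ (contributing $\le t\omega$), the first $d$ terms (contributing $\le C\min\{t,d\}$), and the remaining tail $\sum_{k>d}\sqrt{\beta d/(k-d)} \le \sqrt{\beta d}\sum_{j \le t} j^{-1/2} = \cO(\sqrt{\beta d t})$ produces exactly the three stated terms $\cO\big(\sqrt{\dim_{\hed}(\cF,G,\omega)\,\beta t} + C\min\{t,\dim_{\hed}(\cF,G,\omega)\} + t\omega\big)$.

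The main obstacle is the counting step, and within it the bookkeeping of argument order in $G$: one must ensure the admission test $\sum_{g \in B}(G(f_i,g))^2 \le \alpha^2$ and the independence witness use the \emph{same} hypothesis $f_i$ in the first slot, so that each bin genuinely qualifies as a legal independent sequence for $\dim_{\hed}$. The two load-bearing facts are the disjointness of the bins, which is what allows the per-round constraint $\beta$ to be split additively across bins, and the identification of every bin with an $\alpha$-independent sequence; a mismatched witness or an off-by-one in the ``first $d$ terms'' versus ``tail'' split are the places where the argument is most fragile.
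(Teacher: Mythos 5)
Your proof is correct and takes essentially the same route as the paper's: the same counting bound $|\{i \le t : |G(f_i,g_i)|>\alpha\}| \le (\beta/\alpha^2+1)\dim_{\hed}(\cF,G,\alpha)$ established by a disjoint-bin pigeonhole argument (your dynamic bin-opening with per-point witness $f_i$ is a cosmetic variant of the paper's fixed-$L$ construction), followed by the identical sort-and-integrate step splitting into the $\le \omega$ terms, the first $\min\{t,d\}$ terms bounded by $C$, and the $\sqrt{\beta d/(k-d)}$ tail. If anything, your bookkeeping of which argument slot of $G$ carries the witness versus the accumulating points is cleaner than the paper's own write-up.
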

The proof of Lemma~\ref{lm:bounded cumulative} is in \S\ref{app:proof_19}.

\paragraph{Step 5: Combining Everything.}
In the final step, we combine the regret bound decomposition argument, the cumulative ABC bound, and the Bellman dominance property together to derive our final regret guarantee.

For any $h \in [H]$, we take $G(\cdot, \cdot) = G_{h, f^*}(\cdot, \cdot)$, $g_i = f^i, f_t = f^t$ and $\omega = \sqrt{\frac{1}{T}}$ in Lemma~\ref{lm:bounded cumulative}. By Eq.~\eqref{eq:abc_bound} in Lemma~\ref{lem:general_square}, we have for any $h \in [H]$ and $t \in [T]$,
\begin{align*}
    \sum_{i = 1}^t 
|G_{h, f^*}(f^i, f^i))|
        &\leq 
  \cO\left(
  \sqrt{\dim_{\hed}(\cF, G_{h, f^*}, \sqrt{1/T})\beta t} 
  	+ C\cdot\min\{t, \dim_{\hed}(\cF, G_{h, f^*}, \sqrt{1/T})\} + \sqrt{t}
  \right)
  	\\&\leq 
  \cO\Big(\sqrt{\dim_{\hed}(\cF, G_{h, f^*}, \sqrt{1/T})\beta t} \Big)
	.
\end{align*}
We recall our choice of $\beta=c\left(\log \left(T H \cN_{\cL}(\rho) / \delta\right)+T \rho\right)$.
Taking $\rho = \frac{1}{T}$, we have
\begin{align*}
   \sum_{i = 1}^t 
|G_{h, f^*}(f^i, f^i))|
	&\leq 
\cO \left(
	\sqrt{
	\dim_{\hed} \left(\cF, G_{h, f^*}, \sqrt{1/T}\right) \log \left(TH \cN_{\cL}(1/T)/\delta\right)\cdot t 
	}
\right)
	\\&\leq 
\cO \left(
	\sqrt{
	\dim_{\hed} \left(\cF, G, \sqrt{1/T}\right) \log \left(TH \cN_{\cL}(1/T)/\delta\right)\cdot t 
	}
\right)
.
\end{align*}
Combining this with property~\ref{leq} in Definition~\ref{def:abc} and decomposition~\eqref{eq:decomp}, we conclude our main result that with probability at least $1 - \delta$,
\begin{align*}
    \sum_{t = 1}^T V_1^*(s_1) - V_1^{\pi^t}(s_1)
        &\leq 
    \frac{1}{\kappa}\sum_{t = 1}^T \sum_{h = 1}^H |G_{h, f^*}(f^t, f^t)|\\
        &\leq 
    \cO \left( 
    \frac{H}{\kappa} \sqrt{T \cdot \dim_{\hed}(\cF, G, \sqrt{1/T}) \log \left(T H \mathcal{N}_{\cL}(1 / T) / \delta\right)}
    \right)
.
\end{align*}
This completes the whole proof of Theorem~\ref{theo:main}.
\end{proof}

\pb\subsection{Sample Complexity of OPERA}
\begin{corollary}[Sample Complexity of OPERA]\label{cor:main}

For an MDP $M$ with hypothesis classes $\cF$, $\cG$ that satisfies Assumption~\ref{assu:realizability} and a Decomosable Estimation Function $\ell$ satisfying Assumption~\ref{assu:lipschitz}. If there exists an Admissible Bellman Characterzation $G$ with low functional eluder dimension. For any $\epsilon\in (0, 1]$, we choose $\beta = c \left(\log(THN_{\cL}\left(\frac{\kappa^2 \epsilon^2}{\dim_{\hed}(\cF, G, \frac{\kappa \epsilon}{H}) H^2}\right) /\delta ) + T \frac{\kappa^2 \epsilon^2}{\dim_{\hed}(\cF, G, \frac{\kappa \epsilon}{H}) H^2}\right)$ for some large enough constant $c$. For the on-policy case when $\pi_{\oper} = \pi_{\est} = \pi^t$, with probability at least $1 - \delta$ Algorithm~\ref{alg:general} outputs a $\epsilon$-optimal policy $\pi_{\text{out}}$ within $T$ trajectories where
$$T = \frac{\dim_{\hed}(\cF, G, \frac{\kappa \epsilon}{H})\log \left(T H \cN_{\cL}\left(\frac{\kappa^2 \epsilon^2}{\dim_{\hed}(\cF, G, \frac{\kappa \epsilon}{H}) H^2}\right) / \delta\right) H^2}{\kappa^2 \epsilon^2}
.
$$
\end{corollary}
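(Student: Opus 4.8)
The plan is to convert the regret bound of Theorem~\ref{theo:main} into a sample-complexity guarantee by an online-to-batch argument. First I would exploit the structure of the output: since $\pi_{\text{out}}$ is drawn uniformly from $\{\pi^t\}_{t=1}^T$, its expected suboptimality (over the uniform draw) equals the realized average regret, i.e.\ $\EE[V_1^*(s_1) - V_1^{\pi_{\text{out}}}(s_1)] = \text{Regret}(T)/T$. On the good event of Theorem~\ref{theo:main}, which holds with probability at least $1-\delta$, this average is controlled, so it suffices to choose $T$ large enough that $\text{Regret}(T)/T \leq \epsilon$; this directly certifies that $\pi_{\text{out}}$ is $\epsilon$-optimal.

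The key subtlety lies in the resolution parameter of the functional eluder dimension. Rather than directly invoking Theorem~\ref{theo:main} with $\omega = \sqrt{1/T}$, I would revisit Step~5 of that proof and apply Lemma~\ref{lm:bounded cumulative} with the choice $\omega = \kappa\epsilon/H$, yielding
$$
\text{Regret}(T) \leq \frac{H}{\kappa}\,\cO\!\left(\sqrt{\dim_{\hed}(\cF, G, \tfrac{\kappa\epsilon}{H})\,\beta\, T} + C\,\dim_{\hed}(\cF, G, \tfrac{\kappa\epsilon}{H}) + T\omega\right),
$$
where $C$ is the bound on $|G|$ from Lemma~\ref{lm:bounded cumulative}. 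The additive $T\omega$ term, once divided by $T$ and scaled by $H/\kappa$, contributes exactly $\frac{H}{\kappa}\cdot\frac{\kappa\epsilon}{H} = \epsilon$ up to a constant fraction, which is precisely why the eluder dimension must be evaluated at the scale $\kappa\epsilon/H$; the $\min\{t,\dim_{\hed}\}$ term is lower order and contributes only $\cO(1/T)$ to the average. Requiring each of the three pieces to sit below a constant fraction of $\epsilon$ then forces the dominant first term to satisfy $\frac{H}{\kappa}\sqrt{\dim_{\hed}\beta/T}\lesssim\epsilon$, which rearranges to
$$
T = \tilde\cO\!\left(\frac{H^2\,\dim_{\hed}(\cF, G, \tfrac{\kappa\epsilon}{H})\,\beta}{\kappa^2\epsilon^2}\right).
$$

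Finally I would verify that the chosen parameters are self-consistent. Taking the covering resolution $\rho = \frac{\kappa^2\epsilon^2}{\dim_{\hed}(\cF, G, \frac{\kappa\epsilon}{H})H^2}$ inside $\beta = c\big(\log(THN_{\cL}(\rho)/\delta) + T\rho\big)$ makes the burn-in term $T\rho$ collapse to $\log(THN_{\cL}(\rho)/\delta)$ under the above expression for $T$, so that $\beta = \cO(\log(THN_{\cL}(\rho)/\delta))$ and the stated value of $T$ closes the loop. The main obstacle I anticipate is bookkeeping this circular dependence cleanly, since $T$ appears on both sides of the defining relation, including inside the logarithm. Because that dependence is only polylogarithmic, it is resolved by the standard observation that $\log T$ is dominated once $T$ exceeds a mild threshold, so a fixed point of the stated order exists; the remaining work is the routine verification that the $\rho$-perturbation of the estimation-function covering number (via Assumption~\ref{assu:lipschitz} and the bound $N_{\cL}(\epsilon)\leq N_{\cF}^2 N_{\cG} N_{\cV}$ established earlier) does not inflate $\beta$ beyond the claimed polylogarithmic order.
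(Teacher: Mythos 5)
Your proposal is correct and follows essentially the same route as the paper's proof: both bound the average suboptimality $\frac{1}{T}\sum_t \bigl(V_1^*(s_1)-V_1^{\pi^t}(s_1)\bigr)$ via the policy loss decomposition, Lemma~\ref{lem:general_square}, and Lemma~\ref{lm:bounded cumulative} with the resolution $\omega=\kappa\epsilon/H$ and covering radius $\rho=\kappa^2\epsilon^2/\bigl(\dim_{\hed}(\cF,G,\kappa\epsilon/H)H^2\bigr)$, then solve $\frac{H}{\kappa}\sqrt{\dim_{\hed}\beta/T}\lesssim\epsilon$ for $T$. Your added remarks on the online-to-batch step and the self-consistency of $T$ inside the logarithm are details the paper leaves implicit but do not change the argument.
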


\begin{proof}[Proof of Corollary~\ref{cor:main}]
By the policy loss decomposition~\eqref{eq:decomp},~\eqref{eq:abc_bound} in Lemma~\ref{lem:general_square} and Lemma~\ref{lm:bounded cumulative}, we have that 
\allowdisplaybreaks
\begin{align}
    \frac{1}{T} \sum_{t = 1}^T V_1^*(s_1) - V_1^{\pi^t}(s_1)
	&\leq 
\frac{1}{\kappa T} \sum_{t = 1}^T\sum_{h = 1}^H \left| G_{h, f^*}(f^t, f^t) \right|\notag
	\\&\leq 
\cO\Big(\frac{H}{\kappa} \sqrt{\dim_{\hed}(\cF, G, \omega)\left(\frac{\log \left(T H \cN_{\cL}(\rho) / \delta\right)}{T} + \rho\right) } + \frac{H \omega}{\kappa}\Big)
.\label{eq:complexity_bound}
\end{align}
Taking $\omega = \frac{\kappa \epsilon}{H}$ and $\rho = \frac{\kappa^2 \epsilon^2}{\dim_{\hed}(\cF, G, \frac{\kappa \epsilon}{H}) H^2}$, the above Eq.~\eqref{eq:complexity_bound} becomes 
\begin{align*}
    \frac{1}{T} \sum_{t = 1}^T V_1^*(s_1) - V_1^{\pi^t}(s_1)
	&\leq 
\cO\Big(\frac{H}{\kappa} \sqrt{\frac{\dim_{\hed}(\cF, G, \frac{\kappa \epsilon}{H})\log \left(T H \cN_{\cL}(\rho) / \delta\right)}{T}} + \epsilon\Big)
.
\end{align*}
Taking
\begin{align*}
T = \frac{\dim_{\hed}(\cF, G, \frac{\kappa \epsilon}{H})\log \left(T H \cN_{\cL}(\rho) / \delta\right) H^2}{\kappa^2 \epsilon^2}
\end{align*}
yields the desired result.
\end{proof}

\pb\section{$Q$-type and $V$-type Sample Complexity Analysis}\label{app:v-type}
In Definition~\ref{def:abc}, we note that there are two ways to calculate the ABC of an MDP model depending on the different choices of the operating policy $\pi_{\oper}$.
Specifically, if $\pi_{\oper} = \pi_g$, we call it the $Q$-type ABC. Otherwise, if $\pi_{\oper} = \pi_f$, we call it the $V$-type ABC.
For example, when taking \begin{align*}
G_{h, f^*}(f, g) = \EE_{s_h \sim \pi_g, a_h \sim \pi_{g}} \left[Q_{h, f}(s_h, a_h) - r(s_h, a_h) - V_{h+1, f}(s_{h+1})\right]
\end{align*}
the FE dimension of $G_{h, f^*}(f, g)$ recovers the $Q$-type BE dimension (Definition 8 in~\citet{jin2021bellman}.
When taking
\begin{align*}
G_{h, f^*}(f, g) = \EE_{s_h \sim \pi_g, a_h \sim \pi_{f}} \left[Q_{h, f}(s_h, a_h) - r(s_h, a_h) - V_{h+1, f}(s_{h+1})\right]
\end{align*}
the FE dimension of $G_{h, f^*}(f, g)$ recovers the $V$-type BE dimension (Definition 20 in~\citet{jin2021bellman}.
The algorithm for solving $Q$-type or $V$-type models slightly differs in the executing policy $\pi_{\est}$.
We use $\pi_{\est} = \pi^t$ for $Q$-type models in Algorithm~\ref{alg:general}, while $\pi_{\est} = U(\cA)$ is the uniform distribution on action set for $V$-type models.

The $Q$-type characterization and the $V$-type characterization have respective applicable zones.
For example, the reactive POMDP model belongs to ABC with low FE dimension with respect to $V$-type ABC while inducing large FE dimension with respect to $Q$-type ABC.
On the contrary, the low inherent bellman error problem in~\citet{zanette2020learning} is more suitable for using a $Q$-type characterization rather than a $V$-type characterization.
For general RL models, we often prefer $Q$-type ABC because the sample complexity of $V$-type algorithms scales with the dimension of the action space $|\cA|$.
Due to the uniform executing policy, we will only be able to derive regret bound for $Q$-type characterizations, as is explained in~\citet{jin2021bellman}.

In \S\ref{sec:alg} and~\S\ref{sec:proof_main}, we have illustrated regret bound and sample complexity results for the $Q$-type cases where we let $\pi_{\oper} = \pi_{\est} = \pi^t$ through Algorithm~\ref{alg:general}.
In the following Corollary~\ref{cor:main_cor}, we prove sample complexity result for $V$-type ABC models.

\begin{corollary}\label{cor:main_cor}

For an MDP $M$ with hypothesis classes $\cF$, $\cG$ that satisfies Assumption~\ref{assu:realizability} and a Decomposable Estimation Function $\ell$ satisfying Assumption~\ref{assu:lipschitz}. If there exists an Admissible Bellman Characterization $G$ with low functional eluder dimension. For any $\epsilon\in (0, 1]$, if we choose $\beta = \cO\left(\log(THN_{\cL}(\rho) /\delta) + T \rho\right)$. For $V$-type models when $\pi_{\oper} = \pi_{\est} = \pi^t$, with probability at least $1 - \delta$ Algorithm~\ref{alg:general} outputs a $\epsilon$-optimal policy $\pi_{\text{out}}$ within $T = \frac{|\cA| \dim_{\hed}(\cF, G, \kappa \epsilon/H)\log \left(T H \cN_{\cL}(\rho) / \delta\right) H^2}{\kappa^2 \epsilon^2}$ trajectories where 
$\rho = \frac{\kappa^2 \epsilon^2}{\dim_{\hed}(\cF, G, \frac{\kappa \epsilon}{H}) H^2}$.

\end{corollary}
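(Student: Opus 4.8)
The plan is to mirror the five-step argument used to prove Theorem~\ref{theo:main} and Corollary~\ref{cor:main}, isolating the single place where the $V$-type setting departs from the $Q$-type one, namely the use of the uniform estimation policy $\pi_{\est} = U(\cA)$ during data collection while the ABC is defined with $\pi_{\oper} = \pi_f$. First I would reuse verbatim the feasibility of $f^*$ (Lemma~\ref{lem:feasibility}) and the policy-loss decomposition (Lemma~\ref{lem:policy_loss_decomp}), since neither depends on the type of characterization: the former only certifies that $f^*$ stays in the confidence region Eq.~\eqref{eq:conf_set}, and the latter is an identity. This yields, with probability at least $1-\delta$, the bound $V_1^*(s_1) - V_1^{\pi^t}(s_1) \le \sum_{h} \EE_{s_h, a_h \sim \pi^t}[Q_{h,f^t} - r_h - V_{h+1,f^t}]$ exactly as in Eq.~\eqref{eq:decomp}.

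The crux is Step 3. For the $V$-type ABC one has $G_{h,f^*}(f,g) = \EE_{s_h \sim \pi_g,\, a_h \sim \pi_f}[\cdots]$, so taking $f = f^t$, $g = f^i$, the Dominating Average EF property~\ref{geq} of Definition~\ref{def:abc} controls $(G_{h,f^*}(f^t, f^i))^2$ by a population quantity evaluated at $s_h \sim \pi^i$, $a_h \sim \pi^t$. Since the round-$i$ trajectory is collected with $s_h \sim \pi^i$ but the uniform action $a_h \sim U(\cA)$, I would insert an importance-sampling step: because $\pi^t$ is a deterministic policy and the integrand $\norm{\EE_{s_{h+1}}[\ell_{h, f^i}(o_h, f^t_{h+1}, f^t_h, v)]}^2$ is nonnegative, $\EE_{s_h \sim \pi^i,\, a_h \sim \pi^t}[\cdots] \le |\cA|\, \EE_{s_h \sim \pi^i,\, a_h \sim U(\cA)}[\cdots]$ pointwise in the maximization over $v \in \cV$. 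The concentration in Lemma~\ref{lem:general_square} then applies verbatim to the empirical estimation function on the uniformly-collected data---its Freedman-inequality proof only uses the decomposability/martingale structure, not any particular behaviour policy---so combining the two gives $\sum_{i=1}^{t-1}(G_{h,f^*}(f^t, f^i))^2 \le \cO(|\cA|\beta)$ for every $(t,h)$.

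From here the remaining steps are routine. I would feed this inflated budget $|\cA|\beta$ into Lemma~\ref{lm:bounded cumulative} (with $G = G_{h,f^*}$, $f_t = g_t = f^t$, $\omega = \kappa\epsilon/H$) to obtain $\sum_{t} |G_{h,f^*}(f^t,f^t)| \le \cO(\sqrt{\dim_{\hed}(\cF, G, \omega)\,|\cA|\,\beta\, T})$, sum over $h \in [H]$, invoke the Bellman dominance~\ref{leq}, and divide by $T$ to bound the average suboptimality by $\cO\big(\frac{H}{\kappa}\sqrt{\dim_{\hed}(\cF,G,\omega)\,|\cA|\,\beta/T}\big) + \frac{H\omega}{\kappa}$. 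Substituting $\beta = \cO(\log(TH \cN_{\cL}(\rho)/\delta) + T\rho)$ with $\rho = \frac{\kappa^2\epsilon^2}{\dim_{\hed}(\cF,G,\kappa\epsilon/H)H^2}$ and forcing the whole expression below $\epsilon$ solves for the stated $T$. Because $\pi_{\text{out}}$ is sampled uniformly from $\{\pi^t\}_{t=1}^T$, this average-suboptimality bound immediately certifies that $\pi_{\text{out}}$ is $\epsilon$-optimal in expectation.

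The main obstacle I anticipate is justifying the importance-sampling inequality cleanly and confirming that the blow-up is \emph{exactly} $|\cA|$: one must check that the uniform-action marginal dominates the deterministic $\pi^t$-action marginal pointwise in $s_h$ (valid since $U(\cA)$ places mass $1/|\cA|$ on every action and the integrand is nonnegative) and that this conversion commutes with the $\max_{v \in \cV}$ inside property~\ref{geq}. A secondary subtlety, consistent with the discussion in \S\ref{app:v-type}, is that the uniform executing policy makes the realized behaviour differ from $\pi^t$, so the genuine cumulative regret $\sum_t(V_1^*(s_1) - V_1^{\pi^t}(s_1))$ cannot be bounded; the conclusion is therefore necessarily stated as a sample-complexity (PAC) guarantee rather than a $\sqrt{T}$-regret bound.
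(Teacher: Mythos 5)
Your proposal is correct and follows essentially the same route as the paper: the paper's proof likewise reuses the feasibility of $f^*$ and the policy-loss decomposition verbatim, and isolates the $V$-type change to the single step in Lemma~\ref{lem:general_square} where the expectation over $a_h \sim \pi^t$ is rewritten via the importance weight $\ind(a_h^i = \pi_{f}(s_h^i))/(1/|\cA|)$ against the uniform behaviour policy, yielding the inflated bound $\sum_{i=1}^{t-1}(G_{h,f^*}(f^t,f^i))^2 \le \cO(|\cA|\beta)$ before plugging into Lemma~\ref{lm:bounded cumulative} with $\omega = \kappa\epsilon/H$ and the stated $\rho$. Your observations about the exactness of the $|\cA|$ blow-up and about why only a PAC guarantee (not a regret bound) is available in the $V$-type case match the paper's discussion in \S\ref{app:v-type}.
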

\begin{proof}[Proof of Corollary~\ref{cor:main_cor}]
The proof of Corollary~\ref{cor:main_cor} basically follows the proof of Theorem~\ref{theo:main} and Corollary~\ref{cor:main}.
We again have feasibility of $f^*$ and policy loss decomposition.
However, due to different sampling policy, the proof of Lemma~\ref{lem:general_square} differs at Eq.~\eqref{eq:take_exp}. Instead, we have
\allowdisplaybreaks
\begin{align}
\lefteqn{\sum_{i = 1}^{t - 1} \max_{v \in \cV}  \EE_{s_h \sim \pi^i, a_h \sim \pi^t} \EE_{s_{h+1}} \left[X_i(h, f^t, v) \mid s_h, a_h\right]}\notag
 	\\&=\sum_{i = 1}^{t - 1} \max_{v \in \cV}  \EE_{s_h \sim \pi^i, a_h \sim U(\cA) } \frac{\ind(a_h^i = \pi_f(s_h^i))}{1/|\cA|}
	\EE_{s_{h+1}} \left[X_i(h, f^t, v) \mid s_h, a_h\right]\notag
	\\&=
\sum_{i = 1}^{t - 1} \max_{v \in \cV}  \EE_{s_h\sim \pi^i, a_h \sim U(\cA) }
\frac{\ind(a_h^i = \pi_f(s_h^i))}{1/|\cA|}
 \norm{\mathbb{E}_{s_{h+1}} \left[\ell_{h, f^i}(o_h, f_{h+1}^t, f_h^t, v)\mid s_h, a_h \right]}^2\notag
	\\&\leq 
\cO(|\cA| \left(\beta + R t\rho + R^2 \iota\right))
.
\label{eq:take_exp_instead}
\end{align}
Thus, Eq.~\eqref{eq:abc_bound} in Lemma~\ref{lem:general_square} becomes
\begin{align*}
\sum_{i = 1}^{t - 1}\left(G_{h, f^*}(f^t, f^i)\right)^2
	\leq 
\cO(|\cA|\beta)
.
\end{align*}
The rest of the proof follow the proof of Corollary~\ref{cor:main} with an additional $|\cA|$ factor.
By the policy loss decomposition~\eqref{eq:decomp} and Lemma~\ref{lm:bounded cumulative}, we have that 
\begin{align}
    \frac{1}{T} \sum_{t = 1}^T V_1^*(s_1) - V_1^{\pi^t}(s_1)
	&\leq 
\frac{1}{\kappa T} \sum_{t = 1}^T\sum_{h = 1}^H \left| G_{h, f^*}(f^t, f^t) \right|\notag
	\\&\leq 
\cO\Big(\frac{H}{\kappa} \sqrt{|\cA| \dim_{\hed}(\cF, G, \omega)\left(\frac{\log \left(T H \cN_{\cL}(\rho) / \delta\right)}{T} + \rho\right) } + \frac{H \omega}{\kappa}\Big)
.\label{eq:complexity_bound_V}
\end{align}
Taking $\omega = \frac{\kappa \epsilon}{H}$ and $\rho = \frac{\kappa^2 \epsilon^2}{\dim_{\hed}(\cF, G, \frac{\kappa \epsilon}{H}) H^2}$, the above Eq.~\eqref{eq:complexity_bound_V} becomes 
\begin{align*}
    \frac{1}{T} \sum_{t = 1}^T V_1^*(s_1) - V_1^{\pi^t}(s_1)
	&\leq 
\cO\Big(\frac{H}{\kappa} \sqrt{\frac{|\cA| \dim_{\hed}(\cF, G, \frac{\kappa \epsilon}{H})\log \left(T H \cN_{\cL}(\rho) / \delta\right)}{T}} + \epsilon\Big)
.
\end{align*}
Taking
\begin{align*}
T = \frac{|\cA| \dim_{\hed}(\cF, G, \frac{\kappa \epsilon}{H})\log \left(T H \cN_{\cL}(\rho) / \delta\right) H^2}{\kappa^2 \epsilon^2}
\end{align*}
yields the desired result.
\end{proof}

\pb\section{Proof for Specific Examples}\label{sec_speceg}
In this section, we consider three specific examples: linear mixture MDPs, low Witness rank MDPs, and KNRs.
We explains how our framework exhibits superior properties than other general frameworks on these three instances of MDPs.
For reader's convenience, we summarize the conditions introduced in Items~\ref{decomp},~\ref{opt} in Definition~\ref{def:def} and also Items~\ref{geq},~\ref{leq} in Definition~\ref{def:abc}, that are essential for any RL models to fit in our framework:
\begin{itemize}[leftmargin=*]
\item  \textbf{Decomposability:}
$$
\ell_{h, f'}(o_h, f_{h+1}, g_h, v) - \EE_{s_{h+1}} \left[
\ell_{h, f'}(o_h, f_{h+1}, g_h, v) \mid s_h, a_h
\right]
	=
 \ell_{h, f'}(o_h, f_{h+1}, \cT(f)_h, v)
 .
$$
\item  \textbf{Global Discriminator Optimality:}
$$
\norm{\EE_{s_{h+1}}\left[ 
\ell_{h, f'}(o_h, f_{h+1}, f_h, v_h^*(f)) \mid s_h, a_h
\right]}
	\geq 
\norm{\EE_{s_{h+1}}\left[ 
\ell_{h, f'}(o_h, f_{h+1}, f_h, v) \mid s_h, a_h
\right]}
.
$$
\item \textbf{Dominating Average EF:}
$$
 \max_{v \in \cV}\EE_{s_h \sim \pi_g, a_h \sim \pi_{\oper}}
 \norm{\EE_{s_{h+1}} \left[\ell_{h, g}(o_h, f_{h+1}, f_h, v) \mid s_h, a_h \right]}^2
	\geq 
\left(G_{h, f^*}(f, g)\right)^2
.
$$
\item  \textbf{Bellman Dominance:}
$$
\kappa \cdot \left|\EE_{s_h, a_h \sim \pi_f} \left[Q_{h, f}(s_h, a_h) - r(s_h, a_h) - V_{h+1, f}(s_{h+1})\right]\right| \leq \left|G_{h, f^*}(f, f)\right|
.
$$
\end{itemize}

\pb\subsection{Linear Mixture MDPs}\label{app:prop8}

In a linear mixture MDP model defined in \S\ref{sec:examples}, the hypothesis classes $\cF$ and $\cG$ consist of the set of parameters $\theta_1, \ldots, \theta_H \in \cH$.
Moreover, for each hypothesis class $f  = (\theta_{1, f}, \ldots, \theta_{H, f}) \in \cF$, the value function with respect to $f$ satiafies for any $h \in [H]$ that
\begin{align*}
Q_{h, f}(s, a) = \theta_{h, f}^\top \left(\psi(s, a) + \phi_{V_{h+1, f}}(s, a) \right)
,
\end{align*}
where $\phi_{V_{h+1, f}}(s, a)  := \sum_{s' \in \cS} \phi(s, a, s') V_{h+1, f}(s')$.
It is natural to define the DEF by
\begin{align*}
\ell_{h, f'}(o_h, f_{h+1}, g_h, v)
    &=\theta_{h, g}^\top \left[\psi(s_h, a_h) + \phi_{V_{h+1, f'}}(s_h, a_h) \right] - r_h - V_{h+1, f'}(s_{h+1})
.
\end{align*}
If we use $\Phi_h^{t - 1}$ to denote the matrix $\left((\psi + \phi_{V_{h+1, f^1}}) (s_h^1, a_h^1), \ldots,  (\psi + \phi_{V_{h+1, f^{t - 1}}})(s_h^{t - 1}, a_h^{t - 1})\right)$ and $\yholder_h^{t - 1}$ to denote the vector $\left( r_h - V_{h+1, f^1}(s_{h+1}^i), \ldots, r_h - V_{h+1, f^{t - 1}}(s_{h+1}^{t - 1})\right)$,
Eq.~\eqref{eq:conf_set} in Algorithm~\ref{alg:general} under linear mixture setting can be written in a matrix form as:
\begin{align}
\norm{\theta_{h, f}^\top \Phi_h^{t - 1}  - \yholder_h^{t - 1}}^2
	-
\inf_{\theta} 
\norm{\theta^\top \Phi_h^{t - 1}  - \yholder_h^{t - 1}}^2
	\leq 
\beta
.
\label{eq:mixture_bound}
\end{align}
Taking $\hat{\theta}_{h, t} = \arg\min\limits_{\theta} \norm{\theta^\top \Phi_h^{t - 1} - \yholder_h^{t - 1}}^2 = \left(  \Phi_h^{t - 1}\left( \Phi_h^{t - 1}\right)^\top\right)^{-1} \Phi_h^{t - 1} \left(\yholder_h^{t - 1}\right)^\top $
and
$
\Sigma_h^{t - 1} := \Phi_h^{t - 1}\left( \Phi_h^{t - 1}\right)^\top 
$.
Simple algebra yields
\begin{align}
\norm{\theta_{h, f}^\top \Phi_h^{t - 1}  - \yholder_h^{t - 1}}^2
	-
\inf_{\theta} 
\norm{\theta^\top \Phi_h^{t - 1}  - \yholder_h^{t - 1}}^2
	=
\norm{  
\left(
\theta_{h, f} - \hat{\theta}_{h, t}
\right)^\top 
\Phi_h^{t - 1}
}^2
	=
\left\|\theta_{h, f} - \hat{\theta}_{h, t} \right\|_{\Sigma_h^{t - 1}}^2
,\label{eq:conf_mixture}
\end{align}
and Algorithm~\ref{alg:general} reduces to Algorithm~\ref{alg:mixture}.

\begin{algorithm}[!tb]
\begin{algorithmic}[1]
		\STATE \textbf{Initialize}: $\cD_h = \varnothing$ for $h = 1, \ldots, H$
		\FOR{ iteration $t = 1,2,\ldots, T$ }
		\STATE Set $\pi^t:= \pi_{f^t}$ where $f^t$ is taken as 
$
	\argmax_{f \in \cF} Q_{f, 1}(s_1, \pi_f(s_1))
$
subject to
\label{line:optimization_mixture}
\begin{align}
\hat{\theta}_{h, t} =  \left(  \Phi_h^{t - 1}\left( \Phi_h^{t - 1}\right)^\top\right)^{-1} \Phi_h^{t - 1} \left(\yholder_h^{t - 1}\right)^\top
,\qquad 
	\left\|\theta_{h, f} - \hat{\theta}_{h, t}\right\|_{\Sigma_{h}^{t - 1}}^2
	\leq \beta 
	\quad\text{for all $h \in [H]$}
\label{eq:conf_set_mixture}
\end{align}
\STATE
For any $h \in [H]$, collect tuple $(r_h, s_h, a_h, s_{h+1})$ by executing $s_h, a_h \sim \pi^t$
\STATE
Augment $\cD_h = \cD_h \cup \{(r_h, s_h, a_h, s_{h+1})\}$
\ENDFOR
\STATE \textbf{Output}: $\pi_{\text{out}}$ uniformly sampled from $\{\pi^t\}_{t = 1}^T$
\end{algorithmic}
\caption{OPERA (linear mixture MDPs)}
\label{alg:mixture}
\end{algorithm}

We note that the confidence region defined by Eq.~\eqref{eq:conf_mixture} is the same as the confidence region in the upper confidence RL with the value-targeted model regression (UCRL-VTR) algorithm~\citep{jia2020model, ayoub2020model}.
While in UCRL-VTR, they operated a \emph{state-by-state} optimization within the confidence region, resulting in a confidence bonus added upon the $Q$ value function,
our Algorithm~\ref{alg:mixture} follows a \emph{global optimization} scheme, where the objective is the total expected return by following the optimal policy under the current hypothesis.
The design principle of the \emph{global optimization} is the same as the ELEANOR algorithm~\citep{zanette2020learning}.
In fact, the difference between UCRL-VTR with Algorithm~\ref{alg:mixture} is analogous to the difference between LSVI-UCB~\citep{jin2020provably} with ELEANOR~\citep{zanette2020learning}.

Algorithm~\ref{alg:mixture} exhibits a $dH\sqrt{T}$ regret bound and $d^2 H^2/\epsilon^2$ sample complexity result, as will be shown later in this subsection.
Compared with the $d^3H^4/\epsilon^2$ sample complexity in~\citet{du2021bilinear}, our algorithm improves over the best-known results on general frameworks that subsumes linear mixture MDPs.
We provide more comparisons on the linear mixture model in \S\ref{app:more_examples}.

Next, we proceed to prove that a linear mixture MDP belongs to ABC class with low FE dimension.

\begin{proof}[Proof of Proposition~\ref{prop:mixture_model}]
In the linear mixture model, we choose hypothesis class $\cF_h = \cG_h = \{\theta_h \in \cH\}$, and
DEF function 
\begin{align*}
\ell_{h, f'}(o_h, f_{h+1}, g_h, v)
    &=\theta_{h, g}^\top \left[\psi(s_h, a_h) + \sum_{s'} \phi(s_h, a_h, s') V_{h+1, f'}(s') \right] - r_h - V_{h+1, f'}(s_{h+1})
    .
\end{align*}
\begin{enumerate}[label=(\alph*),leftmargin=5mm]
\item \textbf{Decomposability.}
Taking expectation over $s_{h+1}$ and we obtain that 
\begin{align*}
\EE_{s_{h+1}} \left[\ell_{h, f'}(o_h, f_{h+1}, g_h, v) \mid s_h, a_h\right] = 
\left(\theta_{h, g} - \theta_h^*\right)^\top \left[\psi(s_h, a_h) + \sum_{s'} \phi(s_h, a_h, s') V_{h+1, f'}(s') \right]
.
\end{align*}
Thus, we have
\begin{align*}
&
\ell_{h, f'}(o_h, f_{h+1}, g_h, v)  - \EE_{s_{h+1}} \left[\ell_{h, f'}(o_h, f_{h+1}, g_h, v) \mid s_h, a_h\right]
	\\&=
\left(\theta_h^*\right)^\top  \left[\psi(s_h, a_h) + \sum_{s'} \phi(s_h, a_h, s') V_{h+1, f'}(s') \right] - r_h - V_{h+1, f'}(s_{h+1})
	\\&=
\ell_{h, f'}(o_h, f_{h+1}, f_h^*, v)
.
\end{align*}
\item \textbf{Global Discriminator Optimality} holds automatically since $\ell$ is independent of $v$.
\item \textbf{Dominating Average EF.}
We have the following inequality for linear mixture models:
\begin{align}
&
\EE_{s_h, a_h \sim \pi_g}
 \norm{\mathbb{E} \left[\ell_{h, g}(o_h, f_{h+1}, f_h, v) \mid s_h, a_h \right]}^2 \notag
    \\&=
\EE_{s_h, a_h \sim \pi_g}
\left(\left(\theta_{h, f} - \theta_h^*\right)^\top \left[\psi(s_h, a_h) + \sum_{s'} \phi(s_h, a_h, s') V_{h+1, g}(s')\right]\right)^2 \notag
    \\&\geq 
\left(\left( \theta_{h, f} - \theta_h^*\right)^\top \EE_{s_h, a_h \sim \pi_g} \left[\psi(s_h, a_h) + \sum_{s'} \phi(s_h, a_h, s') V_{h+1, g}(s')\right]\right)^2
.
\label{eq:mixture_geq}
\end{align}
\item \textbf{Bellman Dominance.}
On the other hand, we know that
\begin{align}
&
\EE_{s_h, a_h \sim \pi_f}
\left[
Q_{h, f}(s_h, a_h) - r_h - V_{h+1, f}(s_{h+1})
\right]\notag
	\\&=
\EE_{s_h, a_h \sim \pi_f} \left(\theta_{h, f} - \theta_h^*\right)^\top \left[\psi(s_h, a_h) + \sum_{s'} \phi(s_h, a_h, s') V_{h+1, f}(s') \right]\notag
    \\&=
\left(\theta_{h, f} - \theta_h^*\right)^\top  \EE_{s_h, a_h \sim \pi_f} \left[\psi(s_h, a_h) + \sum_{s'} \phi(s_h, a_h, s') V_{h+1, f}(s')\right]
.
\label{eq:mixture_leq}
\end{align}
\item \textbf{Low FE Dimension.}
Observe from Eqs.~\eqref{eq:mixture_geq} and~\eqref{eq:mixture_leq} that we can choose ABC function of an linear mixture MDP as 
\begin{align}
G_{h, f^*}(f, g) := \left(\theta_{h, f} - \theta_h^*\right)^\top \EE_{s_h, a_h \sim \pi_g} \left[\psi(s_h, a_h) + \sum_{s'} \phi(s_h, a_h, s') V_{h+1, g}(s')\right].
\label{eq:ABC_mixture}
\end{align}
The next Lemma~\ref{lem:mixture_fe} proves that the FE dimension of $\cF$ with respect to the coupling function $G_{h, f^*}(f, g)$ is less than the effective dimension $d$ of the parameter space $\cH$.
\begin{lemma}\label{lem:mixture_fe}
The linear mixture MDP model has FE dimension $\leq \tilde{\cO}(d)$ with respect to the ABC defined in~\eqref{eq:ABC_mixture}.
\end{lemma}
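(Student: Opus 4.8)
The plan is to exploit the \emph{bilinear} structure of the coupling function in Eq.~\eqref{eq:ABC_mixture} and thereby reduce the functional eluder dimension to the ordinary eluder dimension of a $d$-dimensional linear class, for which a standard elliptical-potential bound applies. First I would record the representation $G_{h, f^*}(f, g) = \langle W_h(f), X_h(g)\rangle$ with the two $\cH$-valued maps
\begin{align*}
W_h(f) := \theta_{h, f} - \theta_h^*, \qquad X_h(g) := \EE_{s_h, a_h \sim \pi_g}\Big[\psi(s_h, a_h) + \sum_{s'}\phi(s_h, a_h, s') V_{h+1, g}(s')\Big].
\end{align*}
The crucial observation is that in Definition~\ref{def:hef} the first argument of the coupling function is held fixed while the second ranges over the sequence, so the quantities that matter are $G_{h, f^*}(g, f_i) = \langle W_h(g), X_h(f_i)\rangle$; for a fixed witness $g$ this is the \emph{linear functional} $w \mapsto \langle w, X_h(f_i)\rangle$ with direction $w = W_h(g)$ evaluated at the feature vectors $\{X_h(f_i)\}$.

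Second I would verify that both families of vectors have uniformly bounded norm: since the hypothesis class confines $\theta_{h, f}$ to a bounded ball in $\cH$, $\|W_h(f)\|$ is bounded, and since $\psi, \phi$ are bounded feature maps under the standard normalization and each $V_{h+1, g}$ takes values in $[0, 1]$, the triangle inequality (together with Jensen) yields a uniform bound on $\|X_h(g)\|$. These are exactly the boundedness hypotheses required by the bilinear-to-FE-dimension reduction.

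Finally, with the bilinear representation and the norm bounds in hand, I would run the elliptical-potential argument: the FE-independence condition $\sqrt{\sum_{i < t}\langle W_h(g_t), X_h(f_i)\rangle^2}\le \epsilon'$ together with $|\langle W_h(g_t), X_h(f_t)\rangle| > \epsilon'$ forces, via Cauchy--Schwarz in the metric induced by $\Sigma_t := \lambda I + \sum_{i<t} X_h(f_i) X_h(f_i)^\top$, a uniform lower bound on the Mahalanobis norm $\|X_h(f_t)\|_{\Sigma_t^{-1}}$; summing $\log(1 + \|X_h(f_t)\|^2_{\Sigma_t^{-1}})$ and telescoping through $\log\det \Sigma_t$ then caps the sequence length at $\tilde{\cO}(d)$. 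This is precisely the content of Proposition~\ref{prop:eff}, so the cleanest route is to verify the structure and the norm bounds above and then invoke that proposition directly. The main obstacle is this last step in the infinite-dimensional regime: when $\cH$ is not finite-dimensional one cannot substitute the ambient dimension and must instead control $\log\det(I + \lambda^{-1}\sum_i X_h(f_i)X_h(f_i)^\top)$ by the \emph{effective} dimension $d$, which is the quantitative core of the argument underlying Proposition~\ref{prop:eff}.
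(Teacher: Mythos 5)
Your proposal is correct and follows essentially the same route as the paper: the paper's proof simply writes $G_{h,f^*}(f,g)=\langle W_h(f), X_h(g)\rangle$ with $W_h(f)=\theta_{h,f}-\theta_h^*$ and $X_h(g)=\EE_{s_h,a_h\sim\pi_g}[\psi(s_h,a_h)+\sum_{s'}\phi(s_h,a_h,s')V_{h+1,g}(s')]$ and invokes Proposition~\ref{prop:eff} to bound the FE dimension by the effective dimension of the feature set, which is $\tilde{\cO}(d)$. The norm bounds and the elliptical-potential/log-determinant argument you spell out are exactly the content of Proposition~\ref{prop:eff}, which the paper cites rather than re-derives.
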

We prove Lemma~\ref{lem:mixture_fe} in \S\ref{app:fed}.
\end{enumerate}
Thus, we conclude our proof of Proposition~\ref{prop:mixture_model}.
\end{proof}
From the above Proof of Proposition~\ref{prop:mixture_model}, we see that linear mixture MDPs perfectly fit our framework.
We apply Theorem~\ref{theo:main} and Corollary~\ref{cor:main} to linear mixture MDPs and conclude directly that Algorithm~\ref{alg:mixture} has a regret upper bound of $dH \sqrt{T}$ together with a sample complexity upper bound of $d^2 H^2/\epsilon^2$, matching the best-known results that uses a Hoeffding-type bonus for exploration.

\pb\subsection{Low Witness Rank MDPs}\label{app:prop10}
In this subsection, we provide a novel method for solving low Witness rank MDPs as a direct application of the OPERA algorithm.
The witness rank is an important model-based assumption that covers several structural models including the factored MDPs~\citep{kearns1998efficient}.
Also, 
all models with low Bellman rank structure belong to the class of low Witness rank models while the opposite does not hold~\citep{sun2019model}.
Although the witness rank models can be solved in a model-free manner, model-free algorithms cannot find near-optimal solutions of general witness rank models in polynomial time.
Meanwhile, existing frameworks~\citep{sun2019model, du2021bilinear} with an efficient algorithm does not exhibit sharp sample complexity results.
We recall that in low Witness rank settings, hypotheses on model-based parameters (transition kernel and reward function) are made.
Based on this, there are two recent lines of related approaches.
\citet{sun2019model} first proposed an algorithm that eliminates candidate models with high estimated witness model misfits.
On the other hand,~\citet{du2021bilinear} proposed a general algorithmic framework that would imply an optimization-based algorithm on low Witness rank models.

\begin{algorithm}[!tb]
\begin{algorithmic}[1]
		\STATE \textbf{Initialize}: $\cD_h = \varnothing$ for $h = 1, \ldots, H$
		\FOR{ iteration $t = 1,2,\ldots, T$ }
		\STATE Set $\pi^t:= \pi_{f^t}$ where $f^t$ is taken as 
$
	\argmax_{f \in \cF} Q_{f, 1}(s_1, \pi_f(s_1))
$
subject to
\label{line:optimization_witness}
\begin{align}
&\max_{v \in \cV} \left\{
 \sum_{i = 1}^{t - 1} 
\left(
\EE_{\tilde{s} \sim f_h} v(s_h^i, a_h^i, \tilde{s}) - v(s_h^i, a_h^i, s_{h+1}^i)
\right)^2
\right.\notag	\\&~\qquad\qquad \left.
- \inf_{g_h \in \cG_h} \sum_{i = 1}^{t - 1} \left(
\EE_{\tilde{s} \sim g_h} v(s_h^i, a_h^i, \tilde{s}) - v(s_h^i, a_h^i, s_{h+1}^i)
\right)^2
\right\}
\leq  \beta 
\quad\text{for all $h \in [H]$}
\label{eq:conf_set_witness}
\end{align}
\STATE
For any $h \in [H]$, collect tuple $(r_h, s_h, a_h, s_{h+1})$ by rolling in $s_h \sim \pi^t$ and executing $a_h \sim U(\cA)$
\STATE
Augment $\cD_h = \cD_h \cup \{(r_h, s_h, a_h, s_{h+1})\}$
\ENDFOR
\STATE \textbf{Output}: $\pi_{\text{out}}$ uniformly sampled from $\{\pi^t\}_{t = 1}^T$
\end{algorithmic}
\caption{OPERA (Low Witness Rank MDPs)}
\label{alg:witness}
\end{algorithm}

We prove an improved sample complexity result over existing literature and illustrate the differences in design scheme of our algorithm.
We present the pseudocode in Algorithm~\ref{alg:witness}.
Note that in Eq.~\eqref{eq:conf_set_witness}, we replace the DEF in Eq.~\eqref{eq:conf_set} by~\eqref{eq:witness_def_abc}.
Next, we elaborate the design scheme of our algorithm in comparison with~\citet{sun2019model} and~\citet{du2021bilinear}.
Note that the DEF $\EE_{\tilde{s} \sim g_h} v(s_h, a_h, \tilde{s}) - v(s_h, a_h, s_{h+1})$ is similar with the discrepancy function used in~\citet{du2021bilinear} except for an importance sampling factor.
Moreover, after taking $\sup$ over discriminator functions, the expected DEF equals the witnessed model misfit in~\citet{sun2019model}.
Although~\citet{du2021bilinear} did not explicitly give an algorithm for witness rank, we observe some general differences between OPERA and BiLin-UCB~\citep{du2021bilinear}.
The confidence region used in Algorithm~\ref{alg:witness} (simplified version for comparison) is $\sum_i [(\ell_f^i)^2 - \inf_g (\ell_g^i)^2] \leq \beta$ centered at the optimal hypothesis, while the confidence region used in BiLin-UCB is $\sum_i \left(\frac{1}{m} \sum_{j \leq m} \ell_f^{i, (j)} \right)^2 \leq \beta'$ that bound an estimate of $\ell$ centered at $0$.
Similarly as in BiLin-UCB,~\citet{sun2019model} also attempts to bound a batched estimate of $\ell$. Their algorithm constantly eliminates out of range models, enforcing small witness model misfit on prior distributions.
The analysis in~\citet{sun2019model} and~\citet{du2021bilinear}, however, does not enforce the additional assumption on the discriminator class; we obtain a sharper sample complexity as in Corollary~\ref{theo:witness}.

In the forthcoming, we prove that low Witness rank MDPs belongs to ABC class with low FE dimension.

\begin{proof}[Proof of Proposition~\ref{prop:witness_model}]
In the low Witness rank model, we choose hypothesis class $\cF_h = \cG_h = \cM$, and
DEF function 
\begin{align}
\ell_h(o_h, f_{h+1}, g_h, v) = \EE_{\tilde{s} \sim g_h} v(s_h, a_h, \tilde{s}) - v(s_h, a_h, s_{h+1})
.
\label{eq:DEF_witness}
\end{align}
Without loss of generality, we assume that the discriminator class $\cV$ is rich enough in the sense that if $\forall s, a \in \cS \times \cA$, $v_{s, a}(\cdot, \cdot, \cdot) \in \cV$, then $v(s, a, s') := v_{s, a}(s, a, s') \in \cV$ (if not, we can use a rich enough $\cV'$ induced by $\cV$), an assumption generally satisfied by common discriminator classes. For example, Total variation, Exponential family, MMD, Factored MDP in~\citet{sun2019model} all use a rich enough discriminator class.
Also, if $\cV = \{v: \|v\|_{\infty} \leq c\}$ for some absolute constant $c$, the function class is rich enough.

\begin{enumerate}[label=(\alph*),leftmargin=5mm]
\item \textbf{Decomposability.}
Taking expectation over $s_{h+1}$ of Eq.~\eqref{eq:DEF_witness} and we obtain that 
\begin{align}
\EE_{s_{h+1}} \left[\ell_{h}(o_h, f_{h+1}, g_h, v) \mid s_h, a_h\right] = 
\EE_{\tilde{s} \sim g_h} v(s_h, a_h, \tilde{s}) - \EE_{\tilde{s} \sim \PP_h}v(s_h, a_h, \tilde{s})
.
\label{eq:witness_exp}
\end{align}
Thus, we have
\begin{align*}
 \ell_{h}(o_h, f_{h+1}, g_h, v)  - \EE_{s_{h+1}} \left[\ell_{h}(o_h, f_{h+1}, g_h, v) \mid s_h, a_h\right]
	&=
\EE_{\tilde{s} \sim \PP_h}v(s_h, a_h, \tilde{s}) - v(s_h, a_h, s_{h+1})
	\\&=
\ell_{h}(o_h, f_{h+1}, f_h^*, v)
.
\end{align*}
\item \textbf{Global Discriminator Optimality.}
Eq.~\eqref{eq:witness_exp} implies that
\begin{align*}
\EE_{s_{h+1}}\left[ 
\ell_{h}(o_h, f_{h+1}, f_h) \mid s_h, a_h
\right]
    &=
\int v(s_h, a_h, s ) \left(f_h(s \mid s_h, a_h) - \PP_h(s \mid s_h, a_h)\right) ds 
.
\end{align*}
We define $
v_h^*(f)(s, a, s') = v_{s, a}(s, a, s')
$
where
$$
v_{s, a} := \arg\max_{v \in \cV}\int v(s, a, \tilde{s}) \left(f_h(\tilde{s} \mid s, a) - \PP_h(\tilde{s} \mid s, a)\right) d\tilde{s}
.
$$
It is easy to verify that $v_h^*(f)$ satisfies for all $h \in [H]$ and $(s_h, a_h) \in \cS \times \cA$,
\begin{align*}
\EE_{s_{h+1}}\left[ 
\ell_{h}(o_h, f_{h+1}, f_h, v_h^*(f)) \mid s_h, a_h
\right]
	\geq 
\EE_{s_{h+1}}\left[ 
\ell_{h}(o_h, f_{h+1}, f_h, v) \mid s_h, a_h
\right]
.
\end{align*}
Finally, the symmetry of $\cV$ concludes the global discriminator optimality.

\item \textbf{Dominating Average EF.}
We have the following inequality for low Witness rank model:
\begin{align}
&
\max_{v \in \cV} \EE_{s_h\sim \pi_g, a_h \sim \pi_f}
 \norm{\mathbb{E} \left[\ell_{h}(o_h, f_{h+1}, f_h, v) \mid s_h, a_h \right]}^2 \notag
    \\&=
\max_{v \in \cV} \EE_{s_h\sim \pi_g, a_h \sim \pi_f}
\left( \EE_{\tilde{s} \sim f_h} v(s_h, a_h, \tilde{s}) - \EE_{\tilde{s} \sim \PP_h}v(s_h, a_h, \tilde{s})\right)^2\notag
    \\&\geq 
\left( \max_{v \in \cV} \EE_{s_h\sim \pi_g, a_h \sim \pi_f}
\left[\EE_{\tilde{s} \sim f_h} v(s_h, a_h, \tilde{s}) - \EE_{\tilde{s} \sim \PP_h}v(s_h, a_h, \tilde{s})\right]\right)^2
	\overset{(i)}{\geq} 
\left\langle 
W_h(f), X_h(g) 
\right\rangle^2
.
\label{eq:witness_geq}
\end{align}
where the last inequality (i) follows Definition~\ref{def:witness} of witness rank.
\item \textbf{Bellman Dominance.}
On the other hand, by Definition~\ref{def:witness} we know that
\begin{align}
&
\kappa \cdot \EE_{s_h, a_h \sim \pi_f}
\left[
Q_{h, f}(s_h, a_h) - r_h - V_{h+1, f}(s_{h+1})
\right]
	\leq 
\left\langle W_h(f), X_h(f) \right\rangle 
.
\label{eq:witness_leq}
\end{align}
\item \textbf{Low FE Dimension.}
We see from Eq.~\eqref{eq:witness_geq} and~\eqref{eq:witness_leq} that we can choose ABC function with low Witness rank RL model as 
\begin{align}
G_{h, f^*}(f, g) := \left\langle W_h(f), X_h(g)\right\rangle.
\label{eq:ABC_witness}
\end{align}
The next Lemma~\ref{lem:witness_fe} proves that the FE dimension of $\cF$ with respect to the coupling function $G_{h, f^*}(f, g)$ is less than the dimension $W_{\kappa}$ of the witness model.
\begin{lemma}\label{lem:witness_fe}
The low Witness rank MDP model has FE dimension $\leq \tilde{\cO}(W_{\kappa})$ with respect to the ABC defined in~\eqref{eq:ABC_witness}.
\end{lemma}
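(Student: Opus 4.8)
The plan is to exploit the bilinear form of the coupling function $G_{h, f^*}(f, g) = \langle W_h(f), X_h(g)\rangle$ and thereby reduce the functional eluder dimension to the standard bound on the eluder dimension of bounded linear classes in $\RR^{W_\kappa}$. First I would unpack Definition~\ref{def:hef} for this particular $G$: when we probe whether a candidate sequence $f_1, \dots, f_n$ is eluder-independent, the probe hypothesis $g$ enters only through the vector $w := W_h(g) \in \RR^{W_\kappa}$, whereas each sequence element enters only through the feature vector $x_i := X_h(f_i) \in \RR^{W_\kappa}$. The defining inequalities then read $\sum_{i=1}^{t-1} \langle w, x_i\rangle^2 \leq (\epsilon')^2$ together with $|\langle w, x_t\rangle| > \epsilon'$, which is exactly the $\epsilon'$-independence condition underlying the eluder dimension of the linear functional class $\{x \mapsto \langle w, x\rangle\}$ evaluated on the points $\{x_i\}$.

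Given this reduction, the cleanest route is to invoke the general bilinear-to-low-FE result Proposition~\ref{prop:eff} (the same device used for the FLAMBE and LQR instances) with effective dimension $d = W_\kappa$, giving $\dim_{\hed}(\cF, G, \epsilon) \leq \tilde{\cO}(W_\kappa)$ directly. For transparency I would also record the underlying elliptical-potential argument, which is identical in form to the linear mixture computation in Lemma~\ref{lem:mixture_fe}: setting $\Sigma_t = \lambda I + \sum_{i<t} x_i x_i^\top$, a Cauchy--Schwarz estimate shows that the independence condition forces $x_t^\top \Sigma_t^{-1} x_t \geq (\epsilon')^2/(\lambda B_W^2 + (\epsilon')^2)$, so each eluder step multiplies $\det \Sigma_t$ by at least a fixed constant factor, while $\det \Sigma_n$ admits the trace upper bound $(\lambda + n B_X^2/W_\kappa)^{W_\kappa}$. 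Balancing the resulting exponential lower bound against the polynomial upper bound caps the sequence length at $\tilde{\cO}(W_\kappa \log(1/\epsilon))$, with the polylogarithmic prefactor absorbing the scale $\epsilon$ and the regularizer $\lambda$.

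The step I expect to demand the most care is verifying the uniform norm bounds $B_W := \sup_{g} \|W_h(g)\|$ and $B_X := \sup_{f}\|X_h(f)\|$ that the potential argument consumes: $B_X$ is needed to upper bound $\det \Sigma_n$, and $B_W$ is needed to convert the magnitude condition $|\langle w, x_t\rangle| > \epsilon'$ into the lower bound on $x_t^\top \Sigma_t^{-1} x_t$. These bounds are not postulated verbatim in Definition~\ref{def:witness}, so I would derive them from the boundedness of the witness-rank factorization together with the completeness and bounded-norm structure imposed on the discriminator class $\cV$; in particular, a uniform bound $\|v\|_\infty \leq c$ on discriminators controls the range of the estimation function and hence the magnitudes of the induced embeddings $W_h, X_h$. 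Once these boundedness facts are secured, the remainder is purely linear-algebraic and carries over from the linear mixture analysis, substituting the witness rank $W_\kappa$ for the ambient parameter dimension.
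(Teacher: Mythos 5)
Your proposal matches the paper's proof, which is exactly the one-line invocation of Proposition~\ref{prop:eff} with the bilinear coupling $G_{h,f^*}(f,g)=\langle W_h(f),X_h(g)\rangle$ and the fact that the effective dimension of a bounded set in $\RR^{W_\kappa}$ is $\tilde{\cO}(W_\kappa)$. The elliptical-potential computation and the norm-bound bookkeeping you sketch are precisely the content of Proposition~\ref{prop:eff} and its proof, so you are re-deriving rather than departing from the paper's route.
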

We prove Lemma~\ref{lem:witness_fe} in \S\ref{app:fed}.
\end{enumerate}
Thus, we conclude our proof of Proposition~\ref{prop:witness_model}.
\end{proof}
By Proposition~\ref{prop:witness_model} we can straightforwardly derive the sample complexity by applying Corollary~\ref{cor:main_cor}.
For better understanding of the context, we present a complete proof of the sample complexity result of witness rank model in \S\ref{app:witness}.

\pb\subsection{Kernelized Nonlinear Regulator}\label{app:prop_knr}
In the KNR setting introduced in \S\ref{sec:examples}, the norm of $s_{h+1}$ might be arbitrarily large if the random vector $\epsilon_{h+1}$ is large in magnitude.
On the contrary, our framework requires the boundedness of the DEF.
To resolve this issue, we note the tail bound of one-dimensional Gaussian distribution indicates that for any given positive $x$:
\begin{align*}
e^{x^2/2} \int_x^\infty e^{-t^2/2} d t
\le
e^{x^2/2} \int_x^\infty \frac{t}{x} e^{-t^2/2} d t
=
\frac{1}{x}
.
\end{align*}
Thus, for $T H$ i.i.d.~$\RR^{d_s}$-valued random vectors $\epsilon_h^t \sim \cN(0, \sigma^2 I)$ and a fixed $\delta \in (0, 1)$, there exists an event $\cB$ with $\PP(\cB) \geq 1 - \delta$ such that $\|\epsilon_h^t\|_{\infty} \leq \cO\left(\sigma\sqrt{\log(T H d_s/\delta)}\right)$ holds on event $\cB$.

We first provide the application of OPERA on the KNR model, the algorithm is written in Algorithm~\ref{alg:knr}.
Note that by similar algebra as in Eq.~\eqref{eq:conf_mixture}, the confidence set~\eqref{eq:conf_set_knr} is equivalent to 
\begin{align*}
    \norm{(U_{h, f} - \hat{U}_{h, f}) (\Sigma_h^{t - 1})^{1/2}}^2
    \leq
    \beta
,
\end{align*}
where $\Sigma_h^{t - 1} := \Phi_h^{t - 1} (\Phi_h^{t - 1})^\top$ and $\hat{U}_{h, f}$ is the optimal solution to the least square problem \\$\arg\min_{U}\sum_{i = 1}^{t - 1} \norm{U \phi(s_h^i, a_h^i) - s_{h+1}^i}^2$.
The OPERA algorithm reduces to the LC$^3$ algorithm in~\citet{kakade2020information} except that LC$^3$ is under a homogeneous setting.
The only difference between Algorithm~\ref{alg:knr} and LC$^3$ is that in Eq.~\eqref{eq:conf_set_knr}, LC$^3$ sums over $t$ and $H$ and we can only sum over $t$ because of the inhomogeneous setting.

Bringing in the choice of $\beta$ in Corollary~\ref{theo:knr} yields a regret bound of $\tilde{\cO}\left(
\sqrt{d_{\phi}^2 d_s H^4 T}
\right)$.
In comparison, LC$^3$ in~\citet{kakade2020information} has a regret bound of $\tilde{\cO}\left(\sqrt{d_{\phi}(d_s + d_{\phi}) H^3 T}\right)$.
The improved factor of $\sqrt{H}$ is due to the reduction from the inhomogeneous setting to the homogeneous setting.
Thus, our regret bound matches the state-of-the-art result on KNR instances~\citep{kakade2020information} regarding the dependencies on $d_\phi, d_s, H$.
However, $d_\phi^2 d_s$ in our result is slightly looser than $d_{\phi}(d_s + d_{\phi})$ in~\citet{kakade2020information} and can be possibly improved by instance-specific analysis of KNR.

\begin{algorithm}[!tb]
\begin{algorithmic}[1]
		\STATE \textbf{Initialize}: $\cD_h = \varnothing$ for $h = 1, \ldots, H$
		\FOR{ iteration $t = 1,2,\ldots, T$ }
		\STATE Set $\pi^t:= \pi_{f^t}$ where $f^t$ is taken as 
$
	\argmax_{f \in \cF} Q_{f, 1}(s_1, \pi_f(s_1))
$
subject to
\label{line:optimization}
\begin{align}
	\sum_{i = 1}^{t - 1}\norm{U_{h, f} \phi(s_h^i, a_h^i) - s_{h+1}^i}^2
	    -
	    \inf_{g_h \in \cG_h} 
	  \sum_{i = 1}^{t - 1} \norm{U_{h, g} \phi(s_h^i, a_h^i) - s_{h+1}^i}^2 
	\leq \beta \quad\text{for all $h \in [H]$}
\label{eq:conf_set_knr}
\end{align}
\STATE
For any $h \in [H]$, collect tuple $(r_h, s_h, a_h, s_{h+1})$ by executing $s_h, a_h \sim \pi^t$
\STATE
Augment $\cD_h = \cD_h \cup \{(r_h, s_h, a_h, s_{h+1})\}$
\ENDFOR
\STATE \textbf{Output}: $\pi_{\text{out}}$ uniformly sampled from $\{\pi^t\}_{t = 1}^T$
\end{algorithmic}
\caption{OPERA (kernelized nonlinear regulator)}
\label{alg:knr}
\end{algorithm}

\begin{proof}[Proof of Proposition~\ref{prop:knr_model}]
In the KNR model, we choose hypothesis class $\cF_h = \cG_h = \{U\in \cH \rightarrow \RR^{d_s}: \norm{U} \leq R \}$, and
DEF function 
$$
\ell_h(o_h, f_{h+1}, g_h, v) = 
U_{h, g} \phi(s_h, a_h) - s_{h+1}
.
$$

\begin{enumerate}[label=(\alph*),leftmargin=5mm]
\item \textbf{Decomposability.}
Taking expectation over $s_{h+1}$ and we obtain that
\begin{align*}
\EE_{s_{h+1}} \left[\ell_h(o_h, f_{h+1}, g_h, v) \mid s_h, a_h\right] = (U_{h, g} - U_h^*)\phi(s_h, a_h)
.
\end{align*}
Thus, we have
\begin{align*}
\ell_{h}(o_h, f_{h+1}, g_h, v)  - \EE_{s_{h+1}} \left[\ell_{h}(o_h, f_{h+1}, g_h, v) \mid s_h, a_h\right]
	=
U_h^*\phi(s_h, a_h) - s_{h+1}
	=
\ell_{h}(o_h, f_{h+1}, f_h^*, v)
.
\end{align*}

\item \textbf{Global Discriminator Optimality} holds automatically since $\ell$ is independent of $v$.

\item \textbf{Dominating Average EF.}
We have the following inequality for the KNR model:
\begin{align}
\EE_{s_h, a_h \sim \pi_g}
 \norm{\mathbb{E} \left[\ell_h(o_h, f_{h+1}, f_h, v) \mid s_h, a_h \right]}^2
    =
\EE_{s_h, a_h \sim \pi_g}
\norm{(U_{h, f} - U_h^*)\phi(s_h, a_h)}^2
.\label{eq:knr_geq}
\end{align}
\item \textbf{Bellman Dominance.}
On the other hand, we know that
\begin{align}
\EE_{s_h, a_h \sim \pi_f}
\left[
Q_{h, f}(s_h, a_h) - r_h - V_{h+1, f}(s_{h+1})
\right]
	\leq 
\frac{2H}{\sigma}\EE_{s_h, a_h \sim \pi_f} \left\|(U_{h, f} - U_h^*) \phi(s_h, a_h)\right\|_2
.
\label{eq:knr_leq}
\end{align}

\item \textbf{Low FE Dimension.}
We see from Eqs.~\eqref{eq:knr_geq} and~\eqref{eq:knr_leq} that we can choose ABC function of an linear mixture MDP as 
\begin{align}
G_{h, f^*}(f, g) := 
\sqrt{\EE_{s_h, a_h \sim \pi_g}
\norm{(U_{h, f} - U_h^*)\phi(s_h, a_h)}^2}
,
\label{eq:ABC_knr}
\end{align}
and KNR has an ABC with $\kappa = \frac{\sigma}{2H}$.
The next Lemma~\ref{lem:knr_fe} proves that the FE dimension of $\cF$ with respect to the coupling function $G_{h, f^*}(f, g)$ can be controlled by $d_{\phi}$:
\begin{lemma}\label{lem:knr_fe}
The KNR model has FE dimension $\leq \tilde{\cO}(d_{\phi})$ with respect to the ABC defined in~\eqref{eq:ABC_knr}.
\end{lemma}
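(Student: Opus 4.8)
The plan is to reduce the functional eluder dimension of the square-root coupling function to a vector-valued elliptical-potential argument on the feature covariance operators, exploiting the per-row structure of the output so that the output dimension $d_s$ never enters the dimension count. First I would unfold the definitions: writing $\Delta_g := U_{h,g} - U_h^*$ for the operator $\cH \to \RR^{d_s}$ and $\Sigma_f := \EE_{s_h,a_h\sim\pi_f}[\phi(s_h,a_h)\phi(s_h,a_h)^\top]$ for the feature covariance (a PSD operator on $\cH$ with $\Tr(\Sigma_f)\le B^2$), the coupling in~\eqref{eq:ABC_knr} obeys $\left(G_{h,f^*}(g,f)\right)^2 = \langle \Delta_g^\top\Delta_g, \Sigma_f\rangle$. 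Hence a length-$n$ functional eluder sequence $f_1,\dots,f_n$ with separation $\epsilon'\ge\epsilon$, witnessed at each step $t$ by some $g_t\in\cF$, is exactly characterized by $\sum_{i=1}^{t-1}\langle\Delta_{g_t}^\top\Delta_{g_t},\Sigma_{f_i}\rangle\le(\epsilon')^2$ together with $\langle\Delta_{g_t}^\top\Delta_{g_t},\Sigma_{f_t}\rangle>(\epsilon')^2$. A naive bilinear treatment over the $\sim d_\phi^2$-dimensional operator space would only give $\tilde\cO(d_\phi^2)$; the sharper $\tilde\cO(d_\phi)$ requires handling $\Delta_g$ row by row.

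Next I would introduce the regularized accumulated covariance $\Lambda_t := \lambda I + \sum_{i=1}^{t-1}\Sigma_{f_i}$ on $\cH$ and write $\delta_g^{(1)},\dots,\delta_g^{(d_s)}\in\cH$ for the rows of $\Delta_g$, so that $\langle\Delta_g^\top\Delta_g,\Sigma\rangle=\sum_{j=1}^{d_s}(\delta_g^{(j)})^\top\Sigma\,\delta_g^{(j)}$ for any PSD $\Sigma$. Using $\|U\|\le R$ one bounds $\|\Delta_{g_t}\|_F^2\le 4 d_s R^2$, so choosing $\lambda\le(\epsilon')^2/(4 d_s R^2)$ turns the past constraint into $\sum_{j}\|\delta_{g_t}^{(j)}\|_{\Lambda_t}^2\le(\epsilon')^2+\lambda\|\Delta_{g_t}\|_F^2\le 2(\epsilon')^2$. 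For the present constraint I would bound each row by $(\delta_{g_t}^{(j)})^\top\Sigma_{f_t}\,\delta_{g_t}^{(j)}\le\|\Lambda_t^{-1/2}\Sigma_{f_t}\Lambda_t^{-1/2}\|_{\mathrm{op}}\cdot\|\delta_{g_t}^{(j)}\|_{\Lambda_t}^2$ and sum over $j$, giving $(\epsilon')^2<\|\Lambda_t^{-1/2}\Sigma_{f_t}\Lambda_t^{-1/2}\|_{\mathrm{op}}\cdot 2(\epsilon')^2$, hence $\Tr(\Lambda_t^{-1}\Sigma_{f_t})\ge\|\Lambda_t^{-1/2}\Sigma_{f_t}\Lambda_t^{-1/2}\|_{\mathrm{op}}>\tfrac12$. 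This is the crucial step: the $d_s$ rows are absorbed into the constant $\tfrac12$ rather than into the dimension.

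Finally I would close with an operator-valued elliptical potential. Writing $\mu_k$ for the eigenvalues of $\Lambda_t^{-1/2}\Sigma_{f_t}\Lambda_t^{-1/2}$, the elementary bound $\log(1+\mu)\ge(\log2)\,\mu$ on $[0,1]$ (handling $\mu_k>1$ separately) gives $\log\det\Lambda_{t+1}-\log\det\Lambda_t=\sum_k\log(1+\mu_k)\ge(\log2)\min\{1,\Tr(\Lambda_t^{-1}\Sigma_{f_t})\}$, since $\Lambda_{t+1}=\Lambda_t+\Sigma_{f_t}$. As every witnessed step has $\Tr(\Lambda_t^{-1}\Sigma_{f_t})>\tfrac12$ and therefore $\min\{1,\Tr(\Lambda_t^{-1}\Sigma_{f_t})\}\ge\tfrac12$, summing over $t$ yields $\tfrac{n}{2}\le\sum_{t}\min\{1,\Tr(\Lambda_t^{-1}\Sigma_{f_t})\}\le\tfrac{1}{\log2}\big(\log\det\Lambda_{n+1}-\log\det\Lambda_1\big)$. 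The right-hand side is exactly the log-determinant potential of the feature map, controlled by the effective dimension $d_\phi$ up to a $\log(1+nB^2/\lambda)$ factor, whence $n=\tilde\cO(d_\phi)$ and $\dim_{\hed}(\cF,G_{h,f^*},\epsilon)=\tilde\cO(d_\phi)$, as claimed.

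I expect the main obstacle to be the interaction between the two halves of the argument: one must choose the regularizer $\lambda$ small enough that the per-row Cauchy--Schwarz forces an $\Omega(1)$ lower bound on $\Tr(\Lambda_t^{-1}\Sigma_{f_t})$, yet verify that the resulting log-determinant potential still collapses to the effective dimension $d_\phi$ rather than the ambient (possibly infinite) dimension of the RKHS $\cH$, and that the output dimension $d_s$ enters only through $\lambda$ and the covering radius $\beta$ and never multiplies $d_\phi$. Establishing the operator-valued potential bound for the full-rank increments $\Sigma_{f_t}=\EE[\phi\phi^\top]$ (in contrast to the rank-one increments of the classical elliptical lemma) is the technical crux.
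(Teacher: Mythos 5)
Your proposal is correct and follows essentially the same route as the paper's proof: the same row-by-row decomposition of $U_{h,g}-U_h^*$ (so that $d_s$ enters only through the regularizer $\lambda\asymp\epsilon'^2/(d_sR^2)$ and not multiplicatively with $d_\phi$), the same Cauchy--Schwarz step forcing $\Tr(\Lambda_t^{-1}\Sigma_{f_t})\ge 1/2$ at every witnessed index, and the same log-determinant/trace potential argument over the expected covariances $\EE_{\pi_{f_i}}[\phi\phi^\top]$ yielding $n=\tilde\cO(d_\phi)$. The only differences are cosmetic (a direct summed potential with the $\min\{1,\cdot\}$ truncation versus the paper's contradiction argument using a min-over-$t$ elliptical potential lemma).
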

We prove Lemma~\ref{lem:knr_fe} in \S\ref{app:fed}.
\end{enumerate}
Thus, we conclude our proof of Proposition~\ref{prop:knr_model}.
\end{proof}

\pb\subsection{Proof of Corollary~\ref{theo:witness}}\label{app:witness}
In this subsection, we provide sample complexity guarantee for models with low Witness rank.
In the main text in \S\ref{sec:implications}
we presented our Corollary~\ref{theo:witness} for $\cM$ and $\cV$ with finite cardinality for convenience of comparison with previous works.
Here, we prove general result for model class $\cM$ and discriminator class $\cV$ with finite $\rho$-covering.

\begin{proof}[Proof of Corollary~\ref{theo:witness}]
We start the proof by showing that $V^*(s_1) - V_1^{\pi^t}(s_1)$ can be upper bounded by a sum of Bellman errors, which is a simple deduction from the policy loss decomposition lemma in~\citet{jiang2017contextual} and is the same as the equality in Eq.~\eqref{eq:decomp} in the proof of Theorem~\ref{theo:main} in \S\ref{sec:proof_main}.
Next, we verify that $f^*$ satisfies constraint~\eqref{eq:conf_set_witness} so that taking $f^t = \arg\max V_{f, 1}(s_1)$ in the confidence region yields $V_1^*(s_1) \leq V_{1, f^t}(s_1)$.

\begin{lemma}[Feasibility of $f^*$]\label{lem:fea_witness}
In Algorithm~\ref{alg:witness}, given $\rho > 0$ and $\delta > 0$, we choose $\beta = c(\log \left(TH |\cM_\rho| |\cV_\rho|/\delta \right)+ T \rho)$ for some large enough constant $c$, then with probability at least $1 - \delta$, $f^*$ satisfies for any $t \in [T]$:
\begin{align*}
&\max_{v \in \cV} \left\{
 \sum_{i = 1}^{t - 1} \left(
\EE_{\tilde{s} \sim f^*_h} v(s_h^i, a_h^i, \tilde{s})
	-
v(s_h^i, a_h^i, s_{h+1}^i)
\right)^2 \right.\notag
	\\&~\hspace{1in} \left.
	-
\inf_{g_h \in \cG_h}
 \sum_{i = 1}^{t - 1} 
\left(
\EE_{\tilde{s} \sim g_h} v(s_h^i, a_h^i, \tilde{s})
	-
v(s_h^i, a_h^i, s_{h+1}^i)
\right)^2
\right\}
\leq  \beta
.
\end{align*}
\end{lemma}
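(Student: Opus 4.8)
The plan is to instantiate the general feasibility argument (Lemma~\ref{lem:feasibility}) for the witness-rank DEF of Eq.~\eqref{eq:DEF_witness}, exploiting its decomposition into a predictable part and a zero-mean martingale difference. First I would fix a stage $h \in [H]$ and a discriminator $v \in \cV$, and write, for each episode $i$ and each candidate $g_h \in \cG_h = \cM$,
\begin{align*}
\EE_{\tilde s \sim g_h} v(s_h^i, a_h^i, \tilde s) - v(s_h^i, a_h^i, s_{h+1}^i) = D_i(g_h, v) + \xi_i(v),
\end{align*}
where the predictable part $D_i(g_h, v) := \EE_{\tilde s \sim g_h} v(s_h^i, a_h^i, \tilde s) - \EE_{\tilde s \sim \PP_h} v(s_h^i, a_h^i, \tilde s)$ is measurable with respect to $(s_h^i, a_h^i)$ and the history, and the noise $\xi_i(v) := \EE_{\tilde s \sim \PP_h} v(s_h^i, a_h^i, \tilde s) - v(s_h^i, a_h^i, s_{h+1}^i)$ has zero conditional mean given $(s_h^i, a_h^i)$ since $s_{h+1}^i \sim \PP_h(\cdot \mid s_h^i, a_h^i)$. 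This is precisely the decomposability established in the proof of Proposition~\ref{prop:witness_model}, with $\cT(f^*) = f^*$, so that $D_i(f^*_h, v) = 0$ and the summand at $f^*$ reduces to $\xi_i(v)^2$.

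Substituting this decomposition, the bracketed quantity in the statement evaluated at $f^*$ becomes a supremum over $g_h$ of a difference between a negative quadratic and a linear martingale term:
\begin{align*}
\sum_{i=1}^{t-1} \xi_i(v)^2 - \inf_{g_h \in \cG_h} \sum_{i=1}^{t-1} \left(D_i(g_h, v) + \xi_i(v)\right)^2 = \sup_{g_h \in \cG_h}\left[ - \sum_{i=1}^{t-1} D_i(g_h, v)^2 - 2\sum_{i=1}^{t-1} D_i(g_h, v)\, \xi_i(v)\right].
\end{align*}
The negative quadratic term is the crucial gain: it suffices to show that the linear martingale term cannot exceed (roughly) half of $\sum_i D_i(g_h,v)^2$ plus a logarithmic slack, after which the right-hand supremum is controlled by that slack alone.

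Next I would apply Freedman's inequality (in the precise form invoked in the proof of Lemma~\ref{lem:general_square}) to the martingale $\sum_i D_i(g_h,v)\,\xi_i(v)$ for fixed $(g_h, v)$. Since the discriminator class is uniformly bounded, $|D_i|$ and $|\xi_i|$ are $\cO(1)$ and the predictable quadratic variation is $\cO(\sum_i D_i^2)$, so Freedman yields $\left|\sum_i D_i \xi_i\right| \leq \cO(\sqrt{\iota \sum_i D_i^2} + \iota)$ with $\iota := \log(TH|\cM_\rho||\cV_\rho|/\delta)$. An AM--GM split $\sqrt{\iota \sum_i D_i^2} \leq \tfrac14 \sum_i D_i^2 + \cO(\iota)$ absorbs the cross term into $-\sum_i D_i^2$, leaving the displayed supremum bounded by $\cO(\iota)$. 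To make this uniform over the continuous classes I would pass to finite $\rho$-covers $\cM_\rho$ of $\cM$ and $\cV_\rho$ of $\cV$, union bound over $h \in [H]$, $t \in [T]$, $g_h \in \cM_\rho$, and $v \in \cV_\rho$ to produce the $\log(TH|\cM_\rho||\cV_\rho|/\delta)$ term, and then invoke the Lipschitz property (Assumption~\ref{assu:lipschitz}) to control the discretization error across the at most $T$ summands, contributing the additive $T\rho$ term and giving the claimed $\beta = c(\log(TH|\cM_\rho||\cV_\rho|/\delta) + T\rho)$.

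The main obstacle is carrying out the uniform control over both the infimum over $g_h \in \cG_h$ and the maximum over $v \in \cV$ simultaneously with the martingale concentration: the supremum over $g_h$ must be handled after conditioning, so that Freedman applies to a fixed predictable sequence $D_i(g_h,v)$ with a fixed $v$, and the Lipschitz transfer from the cover back to the full class must be arranged so that the per-term $\cO(\rho)$ perturbation aggregates to exactly $\cO(T\rho)$ rather than a larger quantity. Everything else is the routine bookkeeping shared with the proof of Lemma~\ref{lem:feasibility}.
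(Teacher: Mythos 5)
Your proposal is correct and follows essentially the same route as the paper: the paper's auxiliary variable $X_i(h,f,v)$ is exactly $(D_i+\xi_i)^2-\xi_i^2=D_i^2+2D_i\xi_i$ in your notation, so its conditional mean is your negative quadratic $\sum_i D_i^2$ and the martingale the paper feeds into Freedman's inequality is your cross term $2\sum_i D_i\xi_i$, with the same variance bound, the same AM--GM absorption, and the same union bound over $\cM_\rho\times\cV_\rho$ followed by a Lipschitz transfer yielding the $T\rho$ term. The only difference is presentational: you expand the square explicitly, whereas the paper packages the decomposition inside $X_i$ and its conditional expectation.
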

\noindent
We prove Lemma~\ref{lem:fea_witness} in \S\ref{app:proof_24}.
The next Lemma~\ref{lem:square_witness} is devoted to controlling the average squared DEF.
\begin{lemma}\label{lem:square_witness}
In Algorithm~\ref{alg:witness}, given $\rho > 0$ and $\delta > 0$, we choose $\beta = c(\log \left(TH |\cM_\rho||\cV_\rho| /\delta \right) + T \rho)$ for some large enough constant $c$, then with probability at least $1-\delta$, for all $(t, h) \in[T] \times[H]$, we have
\begin{align*}
\sum_{i = 1}^{t - 1}
\max_{v \in \cV}\EE_{s_h \sim \pi_i, a_h \sim \pi_f}\left(
\EE_{\tilde{s} \sim f_h} v(s_h^i, a_h^i, \tilde{s})
	-
\EE_{\tilde{s} \sim f^*} v(s_h^i, a_h^i, \tilde{s})
\right)^2
	\leq 
\cO(|\cA| \beta)
.
\end{align*}
\end{lemma}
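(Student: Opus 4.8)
The plan is to treat Lemma~\ref{lem:square_witness} as the witness-rank, $V$-type instantiation of Lemma~\ref{lem:general_square}, so its skeleton mirrors Step~3 of the proof of Theorem~\ref{theo:main} together with the importance-sampling modification of Corollary~\ref{cor:main_cor}. First I would use feasibility: since $f^t$ is selected inside the confidence region~\eqref{eq:conf_set_witness} (nonemptiness being guaranteed by Lemma~\ref{lem:fea_witness}), for every $h\in[H]$ and every $v\in\cV$ the empirical loss difference is at most $\beta$. Plugging the true transition $g_h=f^*_h$ (which lies in $\cG_h=\cM$) into the infimum only increases it, so with the witness DEF~\eqref{eq:DEF_witness} and $o_h^i=(s_h^i,a_h^i,s_{h+1}^i)$ I may define the per-sample statistic $X_i(v) := (\EE_{\tilde s\sim f_h^t}v(s_h^i,a_h^i,\tilde s)-v(o_h^i))^2 - (\EE_{\tilde s\sim\PP_h}v(s_h^i,a_h^i,\tilde s)-v(o_h^i))^2$, and feasibility reads $\sum_i X_i(v)\le\cO(\beta)$ uniformly in $v$.

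Second, I would establish the self-bounding identity that drives the concentration. Writing the $\cJ_h$-measurable gap $a_i(v) := \EE_{\tilde s\sim f_h^t}v(s_h^i,a_h^i,\tilde s)-\EE_{\tilde s\sim\PP_h}v(s_h^i,a_h^i,\tilde s)$ and the conditionally mean-zero term $Z_i(v) := \EE_{\tilde s\sim\PP_h}v(s_h^i,a_h^i,\tilde s)-v(o_h^i)$ (mean-zero under $s_{h+1}^i\sim\PP_h$), a one-line expansion of the two squares gives $X_i(v)=a_i(v)^2+2a_i(v)Z_i(v)$. If the filtration $\cF_{i-1}$ is set up to fix everything before the fresh sample of trajectory $i$, then $\EE[X_i(v)\mid\cF_{i-1}]=\EE_{s_h\sim\pi^i,a_h\sim U(\cA)}[a(v)^2]$, exactly the $U(\cA)$-population squared gap, while boundedness of $v$ yields the variance proxy $\EE[X_i(v)^2\mid\cF_{i-1}]\le\cO(1)\cdot\EE[X_i(v)\mid\cF_{i-1}]$.

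Third, Freedman's inequality (the version used for Lemma~\ref{lem:general_square}, cf.~\citet{agarwal2014taming}) combined with this self-bounding variance converts the empirical bound into a population bound $\sum_i\EE_{s_h\sim\pi^i,a_h\sim U(\cA)}[a(v)^2]\le\cO(\beta)$. To make this hold simultaneously for the data-dependent $f^t$ and the maximizing $v$, I would pass to $\rho$-covers $\cM_\rho$ and $\cV_\rho$, union bound over them and over $(t,h)\in[T]\times[H]$, and absorb the discretization error via the Lipschitz property (Assumption~\ref{assu:lipschitz}); these contribute precisely the $\log(TH|\cM_\rho||\cV_\rho|/\delta)$ and $T\rho$ terms in the stated $\beta$. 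The $|\cA|$ factor then enters through the same importance-sampling identity as Eq.~\eqref{eq:take_exp_instead}: since $\pi_f$ is greedy, $\EE_{a_h\sim\pi_f}[a(v)^2]\le|\cA|\,\EE_{a_h\sim U(\cA)}[a(v)^2]$, so the target sum with $a_h\sim\pi_f$ is at most $|\cA|$ times the $U(\cA)$-population sum, delivering the claimed $\cO(|\cA|\beta)$.

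The main obstacle will be the concentration step rather than the algebra: securing the self-bounding bound $\mathrm{Var}[X_i(v)\mid\cF_{i-1}]\lesssim\EE[X_i(v)\mid\cF_{i-1}]$ and feeding it into Freedman so that solving the resulting quadratic in the population sum returns $\cO(\beta)$ rather than $\cO(\sqrt{\beta})$, all uniformly over the data-dependent hypothesis $f^t$ and over $v\in\cV$. By contrast, the importance-sampling conversion producing the $|\cA|$ factor is routine once the filtration is arranged so that $\EE[\cdot\mid\cF_{i-1}]$ already integrates $(s_h,a_h)$ under $\pi^i\times U(\cA)$.
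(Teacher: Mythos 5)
Your plan reproduces the paper's proof almost step for step: the same auxiliary statistic $X_i$ (difference of squared witness losses against $f^t$ and against $f^*$), the same algebraic identity giving $\EE[X_i\mid s_h,a_h]=a_i(v)^2$ with the self-bounding variance $\EE[X_i^2\mid s_h,a_h]\le 16B^2\,\EE[X_i\mid s_h,a_h]$, Freedman plus a union bound over $\cM_\rho\times\cV_\rho\times[T]\times[H]$, the feasibility constraint of Algorithm~\ref{alg:witness} to bound the empirical sum, and the importance-sampling factor $\ind(a_h^i=\pi_f(s_h^i))/(1/|\cA|)$ to pass from $a_h\sim U(\cA)$ to $a_h\sim\pi_f$. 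All of that is sound and is exactly what appears in \S\ref{app:proof_25}.

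There is, however, one genuine gap: your argument delivers a bound that is uniform in $v$, i.e.\ $\max_{v\in\cV}\sum_{i}\EE_{s_h\sim\pi^i,a_h\sim\pi_f}\,a_i(v)^2\le\cO(|\cA|\beta)$, whereas the lemma asserts the stronger $\sum_{i}\max_{v\in\cV}\EE_{s_h\sim\pi^i,a_h\sim\pi_f}\,a_i(v)^2\le\cO(|\cA|\beta)$, in which the maximizing discriminator may differ across $i$. A uniform-in-$v$ bound does not control a sum of per-index maxima in general. The paper closes this by invoking the global discriminator optimality property~\ref{opt} of Definition~\ref{def:def}, verified for the witness-rank model in \S\ref{app:prop10} using the assumed \emph{completeness} of $\cV$ (one assembles the pointwise argmax $v_{s,a}$ into a single $v_h^*(f^t)\in\cV$ that dominates for every $(s_h,a_h)$ simultaneously); with such a $v^*$ each per-index maximum is attained at the same $v^*$, so $\sum_i\max_v(\cdot)=\sum_i(\cdot)\big|_{v^*}\le\max_v\sum_i(\cdot)$ and your bound applies. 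This interchange is not a technicality one can wave through: it is precisely the additional structural assumption on $\cV$ that distinguishes this analysis from \citet{sun2019model} and enables the improved sample complexity, so your proof needs to state and use it explicitly.
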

\noindent
Proof is delayed to \S\ref{app:proof_25}.
By Lemma~\ref{lem:square_witness} and properties of the witness rank in Definition~\ref{def:witness}, we have
\begin{align*}
\sum_{i = 1}^{t - 1}\left\langle W_h(f), X_h(f_i) \right\rangle^2
&\leq 
\sum_{i = 1}^{t - 1}\left\{
\max_{v \in \cV}
\EE_{s_h \sim \pi_i, a_h \sim \pi_f}
\left(
\EE_{\tilde{s} \sim f_h} v(s_h^i, a_h^i, \tilde{s})
	-
\EE_{\tilde{s} \sim f^*} v(s_h^i, a_h^i, \tilde{s})
\right)
\right\}^2
	\\&\leq 
\sum_{i = 1}^{t - 1}\max_{v \in \cV}
\EE_{s_h \sim \pi_i, a_h \sim \pi_f}\left(
\EE_{\tilde{s} \sim f_h} v(s_h^i, a_h^i, \tilde{s})
	-
\EE_{\tilde{s} \sim f^*} v(s_h^i, a_h^i, \tilde{s})
\right)^2
\leq 
\cO\left(|\cA| \beta\right)
.
\end{align*}
Applying Lemma~\ref{lm:bounded cumulative} with $G_{h, f^*}(f, g) := \left\langle W_h(f), X_h(g) \right\rangle$ and $g_i = f^i, f_t = f^t$, we have
\begin{align*}
\sum_{i = 1}^t \left|
\left\langle W_h(f), X_h(f_i) \right\rangle
\right|
&\leq
\cO \left(
\sqrt{|\cA| \dim_{\hed}(\cF, G_{h, f^*}, \omega) \beta t}
	+
t \omega
\right)
.
\end{align*}
Policy loss decomposition~\eqref{eq:decomp} yields 
\begin{align*}
\frac{1}{T} 
\sum_{t = 1}^T 
V_1^*(s_1) - V_1^{\pi^t}(s_1)
	\leq 
\cO \left(
\frac{H}{\kappa} \sqrt{
|\cA| \dim_{\hed}(\cF, G_{h, f^*}, \omega) 
	\left(
\frac{\log \left( TH |\cM_\rho||\cV_\rho|/\delta \right)}{T}
	+
\rho
	\right)
}
	+
\frac{H \omega}{\kappa}
\right)
.
\end{align*}

Taking $\omega = \frac{\kappa \epsilon}{H}$ and $\rho = \frac{\epsilon^2}{\dim_{\hed}(\cF, G, \frac{\epsilon}{H}) H^2}$, the above Eq.~\eqref{eq:complexity_bound} becomes 
\begin{align*}
    \frac{1}{T} \sum_{t = 1}^T V_1^*(s_1) - V_1^{\pi^t}(s_1)
	&\leq 
\cO\Big(\frac{H}{\kappa}\sqrt{|\cA| \frac{\dim_{\hed}(\cF, G, \frac{\epsilon}{H})\log \left(T H |\cM_\rho||\cV_\rho| / \delta\right)}{T}} + \epsilon\Big)
.
\end{align*}
Taking
\begin{align*}
T = \frac{|\cA| \dim_{\hed}(\cF, G, \frac{\epsilon}{H})\log \left(T H |\cM_\rho||\cV_\rho| / \delta\right) H^2}{\kappa^2 \epsilon_{h+1}^2}
\end{align*}
yields the desired result.
\end{proof}

\pb\subsection{Proof of Corollary~\ref{theo:knr}}\label{app:knr}
We can directly apply Theorem~\ref{theo:main} to the KNR model based on Proposition~\ref{prop:knr_model} to obtain the regret bound result.
For better understanding of our framework, we illustrate the main features in the proof of Corollary~\ref{theo:knr} that are different from the proof of Theorem~\ref{theo:main}.
\begin{proof}[Proof of Corollary~\ref{theo:knr}]
To resolve the unboundedness issue, we unfold the analysis of KNR case and conclude a high-probability event $\cB$ analogous to the argument in~\S\ref{app:prop_knr}.
However, doing so would impose an additional $\sqrt{d_s}$ factor induced by estimating the $\ell_2$-norm of multivariate Gaussians.
In lieu to this, we present a sharper convergence analysis that incorporates KNR instance-specific structures.

We recall the DEF of the KNR model:
\begin{align*}
\ell_h(o_h, f_{h+1}, g_h, v) = U_{h, g} \phi(s_h, a_h) - s_{h+1}
.
\end{align*}

We first define an auxilliary random variable
\begin{align*}
X_t(h, f, v) 
	&:= 
\left(\ell_h(o_h^t, f_{h+1}, g_h, v)\right)^2 - \left(\ell_h(o_h^t, f_{h+1}, \cT(f)_h, v)\right)^2
	\\&=
\left\| 
U_{h, f} \phi(s_h^t, a_h^t) - s_{h+1}^t
\right\|_2^2 
	-
\left\| 
U_{h}^* \phi(s_h^t, a_h^t) - s_{h+1}^t
\right\|_2^2
	\\&=
\left\langle
(U_{h, f} - U_h^*) \phi(s_h^t, a_h^t)
,
(U_{h, f} - U_h^*) \phi(s_h^t, a_h^t) - 2\epsilon_{h+1}^t
\right\rangle
	\\&=
\norm{(U_{h, f} - U_h^*) \phi(s_h^t, a_h^t)}^2
	-
2\left\langle 
(U_{h, f} - U_h^*) \phi(s_h^t, a_h^t),
\epsilon_{h+1}^t
\right\rangle 
.
\end{align*}
By the boundedness of operator $U_{h, f}$, $U_h^*$ and uniform boundedness of $\phi(s, a)$, we obtain that \\$\norm{(U_{h, f} - U_h^*) \phi(s_h^t, a_h^t)}^2 \leq 4 B_U^2 B^2$.
The conditional distribution of $\left\langle 
(U_{h, f} - U_h^*) \phi(s_h^t, a_h^t),
\epsilon_{h+1}^t
\right\rangle $ is a zero-mean Gaussian with variance $
\sigma^2  \norm{(U_{h, f} - U_h^*) \phi(s_h^t, a_h^t)}^2
	\leq 
4B_U^2 B^2 \sigma^2
$.
By the tail bound of Gaussian distributions along with standard union bound, we know that with probability at least $1 - \delta$,
\begin{align*}
\left|\left\langle 
(U_{h, f} - U_h^*) \phi(s_h^t, a_h^t),
\epsilon_{h+1}^t
\right\rangle
\right|
	\leq 
\cO\left( \sigma \sqrt{\log\left( TH/\delta \right)}\right)
\end{align*}
holds uniformly for all $t \in [T]$ and $h \in [H]$.
Thus, we bound the absolute value of the auxillary variable $X_t$ by $\left|X_t\right| \leq R \sigma$ where $R$ is positive and of order $\cO\left(\sqrt{\log(TH/\delta)} \right)$.
Taking expectation with respect to $s_{h+1}$, we have
\begin{align*}
\EE_{s_{h+1}} \left[X_t(h, f, v) \mid s_h, a_h \right] = \left\|(U_{h, f} - U_h^*) \phi(s_h^t, a_h^t) \right\|^2
.
\end{align*}
On the other hand, 
\begin{align*}
\lefteqn{\EE_{s_{h+1}} \left[\left(X_t(h, f, v)\right)^2 \mid s_h, a_h \right]} 
	\\&=
\EE_{s_{h+1}} \left[
	\left(\norm{(U_{h, f} - U_h^*) \phi(s_h^t, a_h^t)}^2
	-
2\left\langle 
(U_{h, f} - U_h^*) \phi(s_h^t, a_h^t),
\epsilon_{h+1}^t
\right\rangle
\right)^2 
\mid 
s_h, a_h
\right]
	\\&=
\EE_{s_{h+1}} \left[
	\norm{(U_{h, f} - U_h^*) \phi(s_h^t, a_h^t)}^4
	+
4\left\langle 
(U_{h, f} - U_h^*) \phi(s_h^t, a_h^t),
\epsilon_{h+1}^t
\right\rangle^2
\mid 
s_h, a_h
\right]
	\\&=
\EE_{s_{h+1}} \left[
	\norm{(U_{h, f} - U_h^*) \phi(s_h^t, a_h^t)}^4
	+
4\norm{(U_{h, f} - U_h^*) \phi(s_h^t, a_h^t)}^2 \sigma^2
\mid 
s_h, a_h
\right]
	\\&\leq 
\cO\left(\sigma^2 R^2
\EE\left[X_t(h, f, v) \mid s_h, a_h \right]
\right)
.
\end{align*}
By taking $
Z_t = X_t(h, f, v) - \EE_{s_{h+1}} \left[X_t(h, f, v) \mid s_h, a_h \right]
$
with $|Z_t| \leq 2R\sigma$ in Freedman's inequality~\eqref{eq:freedman} in Lemma~\ref{lem:freedman}, we have for any $\eta$ satisfying $0 < \eta < \frac{1}{2 R^2\sigma^2}$ almost surely, with probability at least $1 - \delta$: 
\begin{align*}
 \sum_{i = 1}^t  Z_i	&\leq 
\cO
	\left(
	R^2 \sigma^2\eta 
	\sum_{i = 1}^t \EE_{s_{h+1}} \left[ X_i(h, f, v) \mid s_h, a_h \right]
		+
	\frac{\log(\delta^{-1})}{\eta}
	\right)
.
\end{align*}
Optimizing over $\eta$, we have
\begin{align}\label{eq:apply_freedman_general}
 \sum_{i = 1}^t  Z_i
	&\leq 
\cO
	\left(
	R\sigma 
	\sqrt{\sum_{i = 1}^t \EE_{s_{h+1}} \left[ X_i(h, f, v) \mid s_h, a_h \right] \log(\delta^{-1})}
		+
	R^2 \sigma^2\log(\delta^{-1})
	\right)
.
\end{align}

Following the same Freedman's inequality (Lemma~\ref{lem:freedman}) and $\rho$-covering argument as as in the proof of Theorem~\ref{theo:main} with derivations detailed in~\S\ref{app:proof_18}, we have with probability $\geq 1 - \delta$ and $\beta = \cO\left(\sigma^2 \log(TH\cN_{\cL}(\rho)/\delta) + \sigma \rho T\right)$:
\begin{align*}
\sum_{i = 1}^t \left(
\EE_{s_h, a_h \sim \pi^i} \norm{(U_{h, f^t} - U_h^*) \phi(s_h, a_h)}
\right)^2
	\leq 
\sum_{i = 1}^t \EE_{s_h, a_h \sim \pi^i} \norm{(U_{h, f^t} - U_h^*) \phi(s_h, a_h)}^2
	\leq 
\cO(\beta)
.
\end{align*}
Feasibility of $f^*$ can be derived by taking the same auxilliary random variable and analyze on $- \sum_{i = 1}^t X_i(h, f, v)$ as in the proof of Lemma~\ref{lem:fea_witness}.

As explained in \S\ref{app:prop_knr} , we can apply Lemma~\ref{lm:bounded cumulative} with $\omega = \sqrt{\frac{1}{T}}$, $\rho = \frac{1}{T}$,
$$
G_{h, f^*}(f, g) =  \sqrt{\EE_{s_h, a_h \sim \pi_g}
\norm{(U_{h, f} - U_h^*)\phi(s_h, a_h)}^2
}
,
$$
and have 
\begin{align*}
\sum_{i = 1}^t 
\EE_{s_h, a_h \sim \pi^i} \sqrt{\norm{(U_{h, f^t} - U_h^*) \phi(s_h, a_h)}^2}
	\leq 
\sigma \sqrt{
\dim_{\hed}\left(\cF, G, \sqrt{1/T}\right)
\log\left(T H \cN_{\cL}(1/T)\right)
\cdot t
}
.
\end{align*}
The rest of the proof follows by applying Bellman dominance, policy loss decomposition and calculating the FE dimension based on $G_{h, f^*}(f, g)$, which is shown in Lemma~\ref{lem:knr_fe}.
We therefore obtain that
\begin{align*}
    \sum_{t = 1}^T V_1^*(s_1) - V_1^{\pi^t}(s_1)
        &\leq 
    \frac{1}{\kappa}\sum_{t = 1}^T \sum_{h = 1}^H |G_{h, f^*}(f^t, f^t)|\\
        &\leq 
    \cO \left( 
    \frac{H}{\kappa} \sigma \sqrt{T \cdot \dim_{\hed}(\cF, G, \sqrt{1/T}) \log \left(T H \mathcal{N}_{\cL}(1 / T) / \delta\right)}
    \right)
    \\&=
       \tilde{\cO} \left( H^2 \sqrt{d_{\phi}^2 d_s T}
    \right)
.
\end{align*}
\end{proof}

\pb\section{Proof of Technical Lemmas}\label{sec:delayed_lemmas}
We start with introducing the Freedman's inequality that are crucial in proving concentration properties in our main results.
\begin{lemma}[Freedman-Style Inequality,~\citealt{agarwal2014taming}]\label{lem:freedman}
Consider an adapted sequence $\{Z_t, \cJ_t \}_{t = 1, 2, \ldots, T}$ that satisfies $\EE \left[ Z_t \mid \cJ_{t - 1} \right] = 0$ and $Z_t \leq R$ for any $t = 1, 2, \ldots T$. Then for any $\delta > 0$ and $\eta \in [0, \frac{1}{R}]$, it holds with probability at least $1 - \delta$ that 
\begin{align}
\sum_{t = 1}^T Z_t 
	\leq 
(e - 2) \eta \sum_{t = 1}^T \EE \left[Z_t^2 \mid \cJ_{t - 1} \right]
	+
\frac{\log(\delta^{-1})}{\eta}
\label{eq:freedman}
.
\end{align}
\end{lemma}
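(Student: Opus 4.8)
The plan is to prove the inequality by the classical exponential-supermartingale (Chernoff/Cram\'er) method that underlies all Bennett--Freedman type bounds. The engine is the elementary numerical inequality $e^{x} \le 1 + x + (e-2)x^{2}$, valid for every $x \le 1$. I would establish this by observing that the map $x \mapsto (e^{x}-1-x)/x^{2}$ (extended continuously to $1/2$ at $x=0$) is nondecreasing on $\RR$; since it equals $e-2$ at $x=1$, it is bounded above by $e-2$ on $(-\infty,1]$, and rearranging gives the claim. Because the hypotheses force $\eta \in [0,1/R]$ and $Z_{t} \le R$, we have $\eta Z_{t} \le 1$, so the inequality applies with $x = \eta Z_{t}$.

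Next I would pass to conditional expectations. Applying the elementary bound pointwise and taking $\EE[\,\cdot \mid \cJ_{t-1}]$, the martingale-difference property $\EE[Z_{t}\mid\cJ_{t-1}]=0$ annihilates the linear term, leaving
\begin{align*}
\EE\!\left[e^{\eta Z_{t}} \,\middle|\, \cJ_{t-1}\right]
  \;\le\;
1 + (e-2)\eta^{2}\,\EE\!\left[Z_{t}^{2}\,\middle|\,\cJ_{t-1}\right]
  \;\le\;
\exp\!\left((e-2)\eta^{2}\,\EE\!\left[Z_{t}^{2}\,\middle|\,\cJ_{t-1}\right]\right),
\end{align*}
where the last step uses $1+y \le e^{y}$. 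This motivates the compensated process
\begin{align*}
M_{t} := \exp\!\left(\eta\sum_{s=1}^{t} Z_{s} - (e-2)\eta^{2}\sum_{s=1}^{t}\EE\!\left[Z_{s}^{2}\,\middle|\,\cJ_{s-1}\right]\right),
\qquad M_{0}:=1 .
\end{align*}
The previous display shows precisely that $\EE[M_{t}\mid\cJ_{t-1}] \le M_{t-1}$, so $\{M_{t}\}$ is a nonnegative supermartingale and $\EE[M_{T}] \le \EE[M_{0}] = 1$.

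Finally I would invoke Markov's inequality. From $\EE[M_{T}]\le 1$ we obtain $\Prob(M_{T}\ge 1/\delta) \le \delta$, so with probability at least $1-\delta$,
\begin{align*}
\eta\sum_{t=1}^{T} Z_{t} - (e-2)\eta^{2}\sum_{t=1}^{T}\EE\!\left[Z_{t}^{2}\,\middle|\,\cJ_{t-1}\right] \;\le\; \log(\delta^{-1}).
\end{align*}
Dividing by $\eta>0$ and rearranging yields the stated bound, while the degenerate case $\eta=0$ makes the right-hand side infinite and is therefore trivial.

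The argument is entirely standard, so there is no deep obstacle; the only step requiring genuine care is the elementary inequality $e^{x}\le 1+x+(e-2)x^{2}$ for $x\le 1$. In particular, one must check that the sharp coefficient on the full range $x\le1$ is $e-2$ rather than the naive Taylor-remainder constant $1/2$ — this is exactly what the monotonicity of $(e^{x}-1-x)/x^{2}$ supplies, and it is also what pins down the admissible range $\eta R \le 1$ in the hypothesis.
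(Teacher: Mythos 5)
Your proof is correct and coincides with the standard argument behind the cited result: the paper states this lemma without proof, importing it from \citet{agarwal2014taming}, and the proof there is exactly your exponential-supermartingale method, resting on the elementary inequality $e^{x}\le 1+x+(e-2)x^{2}$ for $x\le 1$, the tower property to kill the linear term, and Markov's inequality applied to the compensated process. Your observation that only the one-sided bound $Z_{t}\le R$ together with $\eta\in[0,1/R]$ is needed (since the elementary inequality requires only $\eta Z_{t}\le 1$) is also the correct reading of the hypothesis, so nothing further is required.
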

Before proving our technical lemmas, we note that for notational simplicity we use the expectation $\EE_{s_{h+1}}\left[ \cdot \mid s_h, a_h\right]$ to denote the conditional expectation with respect to the transition probability of the true model at $h$.
The value of $s_h, a_h$ is data dependent (might be $s_h^i, a_h^i$ or $s_h^t, a_h^t$ depending on the function inside the expectation).

\pb\subsection{Proof of Lemma~\ref{lem:general_square}}\label{app:proof_18}
\begin{proof}[Proof of Lemma~\ref{lem:general_square}]
We recall that $\ell$ has a bounded $\ell_2$-norm in Definition~\ref{def:def} and assume that $\norm{\ell_{h, f'}(\cdot, f_{h+1}, g_h, v)}\leq R$ for $\forall h \in [H], f', f \in \cF, g \in \cG, v \in \cV$ throughout the paper.
For a sequence of data $\cD_h = \{ r_h^t, s_h^t, a_h^t, s_{h+1}^t \}_{t = 1, 2, \ldots, T}$, we first build an auxiliary random variable defined for every $(t, h, f, v) \in [T] \times [H] \times \cF \times \cV $ and consider 
\begin{align*}
X_i(h, f, v)
:=
\norm{\ell_{h, f^i}(o_h^i, f_{h+1}, f_h, v)}^2
-
\norm{\ell_{h, f^i}(o_h^i, f_{h+1}, \cT(f)_h, v)}^2 
,
\end{align*}
where the randomness is due to uniformly sampling the data sequence $\cD_h$.
We know that $\left| X_t(h, f) \right| \leq R^2$.
Take conditional expectation of $X_i$ with respect to $s_h, a_h$, we have by definition that
\begin{align*}
\lefteqn{\mathbb{E}_{s_{h+1}} \left[X_i(h, f, v)\mid s_h, a_h\right]
}
	\\&=
\EE_{s_{h+1}} \left[
\norm{\ell_{h, f^i}(o_h^i, f_{h+1}, f_h, v)}^2
	    -
		\norm{\ell_{h, f^i}(o_h^i, f_{h+1}, \cT(f)_h, v)}^2 
		\mid s_h, a_h
\right]
\end{align*}
Using the fact that $\|a\|^2 - \|b\|^2 = \langle a - b, a + b\rangle$ for arbitrary vectors $a,b$ and property~\ref{decomp} in Definition~\ref{def:def} we have
\begin{align*}
\mathbb{E}_{s_{h+1}}\left[X_i(h, f, v)\mid s_h, a_h\right]
	&=
\left\langle \ell_{h, f^i}(o_h^i, f_{h+1}, f_h, v) -\ell_{h, f'}(o_h^i, f_{h+1}, \cT(f)_h, v),
\right.
	\\&~\qquad\left.
\mathbb{E}_{s_{h+1}} \left[
\ell_{h, f^i}(o_h^i, f_{h+1}, f_h, v) + \ell_{h, f'}(o_h^i, f_{h+1}, \cT(f)_h, v) \mid s_h, a_h\right]
\right\rangle 
	\\&=
\norm{\mathbb{E}_{s_{h+1}} \left[\ell_{h, f^i}(o_h^i, f_{h+1}, f_h, v)\mid s_h, a_h \right]}^2
.
\end{align*}
On the other hand,
\begin{align*}
\mathbb{E}_{s_{h+1}} \left[ \left(X_i(h, f, v)\right)^2\mid s_h, a_h \right]
	&\leq
\mathbb{E}_{s_{h+1}} \left[\norm{\ell_{h, f^i}(o_h^i, f_{h+1}, f_h, v) -\ell_{h, f^i}(o_h^i, f_{h+1}, \cT(f)_h, v)}^2 \right.
\\&~\quad \left.
\cdot  \norm{\ell_{h, f'}(o_h^i, f_{h+1}, f_h, v) +\ell_{h, f'}(o_h^i, f_{h+1}, \cT(f)_h, v)}^2\mid s_h, a_h\right]
	\\&\leq 
4\norm{\mathbb{E}_{s_{h+1}} \left[\ell_{h, f^i}(o_h^i, f_{h+1}, f_h, v)\mid s_h, a_h \right]}^2 R^2
	\\&\leq 
4 R^2 \mathbb{E}_{s_{h+1}}\left[X_i(h, f, v)\mid s_h, a_h\right]
.
\end{align*}
By taking
$
Z_t = X_t(h, f, v) - \EE_{s_{h+1}} \left[X_t(h, f, v) \mid s_h, a_h \right]
$
with 
$
|Z_t| \leq 2 R^2
$
in Freedman's inequality~\eqref{eq:freedman} in Lemma~\ref{lem:freedman}, we have for any $\eta$ satisfying $0 < \eta < \frac{1}{2 R^2}$, with probability at least $1 - \delta$: 
\begin{align*}
 \sum_{i = 1}^t  Z_i	&\leq 
\cO
	\left(
	\eta 
	\sum_{i = 1}^t
	\text{Var}\left[X_i(h, f, v) \mid s_h, a_h \right]
		+
	\frac{\log(\delta^{-1})}{\eta}
	\right)
	\\&\leq 
\cO
	\left(
	\eta 
	\sum_{i = 1}^t \EE_{s_{h+1}} \left[ X_i^2(h, f, v) \mid s_h, a_h \right]
		+
	\frac{\log(\delta^{-1})}{\eta}
	\right)
	\\&\leq 
\cO
	\left(
	4  R^2 \eta 
	\sum_{i = 1}^t \EE_{s_{h+1}} \left[ X_i(h, f, v) \mid s_h, a_h \right]
		+
	\frac{\log(\delta^{-1})}{\eta}
	\right)
.
\end{align*}
Taking $\eta = \frac{\sqrt{\log(\delta^{-1})}}{2R \sqrt{\sum_{i = 1}^t \EE \left[ X_i(h, f, v) \mid s_h, a_h \right]}} \vee \frac{1}{2 R^2}$, we have
\begin{align}\label{eq:apply_freedman_general}
 \sum_{i = 1}^t  Z_i
	&\leq 
\cO
	\left(
	2R 
	\sqrt{\sum_{i = 1}^t \EE_{s_{h+1}} \left[ X_i(h, f, v) \mid s_h, a_h \right] \log(\delta^{-1})}
		+
	2R^2 \log(\delta^{-1})
	\right)
.
\end{align}

Similarly by applying Freedman's inequality to $\sum_{i = 1}^t - Z_t$ and combining with Eq.~\eqref{eq:apply_freedman_general}, we have that for any three-tuple $(t, h, f)$, the following holds with probability at least $1 - 2 \delta$:
\begin{align*}
\left|\sum_{i = 1}^t   Z_i \right|
	&\leq 
\cO \left(
	2 R
	\sqrt{\sum_{i = 1}^t \EE_{s_{h+1}} \left[ X_i(h, f, v) \mid s_h, a_h \right] \log(\delta^{-1})}
		+
	2 R^2 \log(\delta^{-1})
\right)
.
\end{align*}
We note that in \S\ref{sec:def} we have that $\cL$ admits a $\rho$-covering of $\cF, \cG, \cV$, meaning that for any $\ell_{h, f'}(\cdot, f, g, v)$ and a $\rho > 0$ there exists a $\tilde{\rho}$ and a four-tuple $(\tilde{f'}, \tilde{f}, \tilde{g}, \tilde{v}) \in \cF_{\tilde{\rho}} \times \cF_{\tilde{\rho}} \times \cG_{\tilde{\rho}} \times \cV_{\tilde{\rho}}$ such that
$
\left\|\ell_{h, \tilde{f'}}(\cdot, \tilde{f}, \tilde{g}, \tilde{v})  - \ell_{h, f'}(\cdot, f, g, v)\right\|_{\infty} \leq \rho,
$
where $\cF_{\tilde{\rho}}, \cG_{\tilde{\rho}}, \cV_{\tilde{\rho}}$ are $\tilde{\rho}$-covers of $\cF, \cG, \cV$ respectively.
This is denoted by $(\tilde{f'}, \tilde{f}, \tilde{g}, \tilde{v}) \in \cL_{\rho}$.
In definition of $X_t$, $\tilde{g}$ is always taken as $\tilde{f}$ or a function of $\cT(\tilde{f})$.
Then if $\cT$ is Lipschitz, as it is mostly the expectation operator, we omit the $\tilde{g}$ in the tuple and use $(\tilde{f'},\tilde{f}, \tilde{v}) \in \cL_{\rho}$ to denote an element in the $\rho$-covering.
By taking a union bound over $\cL_{\rho}$,
we have with probability at least $1 - 2\delta$ that the following holds for any $(\tilde{f^i}, \tilde{f}, \tilde{v}) \in \cL_{\rho}$,
\begin{align}
&
\left| 
\sum_{i = 1}^t  \tilde{X}_i(h, \tilde{f}, \tilde{v}) - \sum_{i = 1}^t \EE_{s_{h+1}} \left[\tilde{X}_i(h, \tilde{f}, \tilde{v}) \mid s_h, a_h \right]
\right|\notag
	\\&\leq 
\cO \left(
	2 R 
	\sqrt{\sum_{i = 1}^t \EE_{s_{h+1}} \left[ \tilde{X}_i(h, \tilde{f}, \tilde{v}) \mid s_h, a_h \right] \iota}
		+
	2 R^2 \iota
\right)
,\label{eq:freedman_last_general}
\end{align}
where $\tilde{X}_i(h, \tilde{f}, \tilde{v})
:=
\norm{\ell_{h, \tilde{f^i}}(o_h^i, \tilde{f}_{h+1}, \tilde{f}_h, \tilde{v})}^2
-
\norm{\ell_{h, \tilde{f^i}}(o_h^i, \tilde{f}_{h+1}, \cT(\tilde{f})_h, \tilde{v})}^2
$ and $
\iota = \log\left(\frac{H T \cN_{\cL}(\rho)}{\delta}\right)
$.
Further for any $X_i(h, f^t, v)$, we choose the three-tuple $(\tilde{f^i}, \tilde{f^t}, \tilde{v}):= \arg\min_{(\tilde{f^i}, \tilde{f^t}, \tilde{v}) \in \cL_{\rho}} \left|X_i(h, f^t, v) - \tilde{X}_i(h, \tilde{f^t}, \tilde{v})\right| \leq \rho$ and by the $\rho$-covering argument, we arrive at
\allowdisplaybreaks
\begin{align}
&
\sum_{i = 1}^{t - 1}\tilde{X}_i(h, \tilde{f^t}, \tilde{v}) \notag
	= 
\sum_{i = 1}^{t - 1} 
	\left[
\norm{\ell_{h, \tilde{f}^i}(o_h^i, \tilde{f}_{h+1}^t, \tilde{f}_h^t, \tilde{v})}^2
	    -
		\norm{\ell_{h, \tilde{f}^i}(o_h^i, \tilde{f}_{h+1}^t, \cT(\tilde{f})_h^t, \tilde{v})}^2 
	\right]\notag
	\\&\leq 
\sum_{i = 1}^{t - 1}
	\left[
\norm{\ell_{h, f^i}(o_h^i, f_{h+1}^t, f_h^t, v)}^2
	    -
		\norm{\ell_{h, f^i}(o_h^i, f_{h+1}^t, \cT(f^t)_h, v)}^2 
	\right]
	+
\cO(R t \rho)
	\overset{(i)}{\leq} 
\cO(\beta + R t \rho)
,
\label{eq:uniform_bound_general}
\end{align}
where $(i)$ comes from the constraint~\eqref{eq:conf_set} of Algorithm~\ref{alg:general}.

Combining~\eqref{eq:freedman_last_general} with~\eqref{eq:uniform_bound_general}, we derive the following
\begin{align*}
\sum_{i = 1}^{t - 1} \EE_{s_{h+1}} \left[\tilde{X}_i(h, \tilde{f^t}, \tilde{v})  \mid s_h, a_h \right]
	\leq 
\cO(\beta + R t\rho + R^2 \iota)
.
\end{align*}
Applying the $\rho$-covering argument as in before, we conclude
\begin{align*}
\max_{v \in \cV}\sum_{i = 1}^{t - 1} \EE_{s_{h+1}} \left[X_i(h, f^t, v) \mid s_h, a_h\right]
	\leq 
\cO(\beta + R t\rho + R^2 \iota)
.
\end{align*}
Global optimality of the discriminator in~\ref{opt} of Definition~\ref{def:def} implies that $v_h^*$ is the optimal discriminator under any distribution or summation of $s_h, a_h$ (and thus $\max$ is interchangeable with summation):
\begin{align*}
\lefteqn{
\sum_{i = 1}^{t - 1}  \EE_{s_h, a_h \sim \pi^i} \norm{\mathbb{E}_{s_{h+1}} \left[\ell_{h, f^i}(o_h, f_{h+1}^t, f_h^t, v^*_h(f^t))\mid s_h, a_h \right]}^2
}
\\& \geq 
\sum_{i = 1}^{t - 1}  \EE_{s_h, a_h \sim \pi^i} \norm{\mathbb{E}_{s_{h+1}} \left[\ell_{h, f^i}(o_h, f_{h+1}^t, f_h^t, v)\mid s_h, a_h \right]}^2
, \quad \forall v \in \cV
.
\end{align*}

Thus, we have
\begin{align*}
&\sum_{i = 1}^{t - 1} \max_{v \in \cV}  \EE_{s_h, a_h \sim \pi^i} \norm{\mathbb{E}_{s_{h+1}} \left[\ell_{h, f^i}(o_h, f_{h+1}^t, f_h^t, v)\mid s_h, a_h \right]}^2
    \\&=
\sum_{i = 1}^{t - 1} \EE_{s_h, a_h \sim \pi^i} \norm{\mathbb{E}_{s_{h+1}} \left[\ell_{h, f^i}(o_h, f_{h+1}^t, f_h^t, v_h^*(f^t))\mid s_h, a_h \right]}^2
,
\end{align*}
and also
\begin{align}
&
\sum_{i = 1}^{t - 1} \EE_{s_h, a_h \sim \pi^i} \norm{\mathbb{E}_{s_{h+1}} \left[\ell_{h, f^i}(o_h, f_{h+1}^t, f_h^t, v_h^*(f^t))\mid s_h, a_h \right]}^2\notag
    \\&=
\max_{v \in \cV} \sum_{i = 1}^{t - 1}   \EE_{s_h, a_h \sim \pi^i} \norm{\mathbb{E}_{s_{h+1}} \left[\ell_{h, f^i}(o_h, f_{h+1}^t, f_h^t, v)\mid s_h, a_h \right]}^2\notag
    \\&=
 \max_{v \in \cV} \sum_{i = 1}^{t - 1}  \EE_{s_h, a_h \sim \pi^i} \EE_{s_{h+1}} \left[X_i(h, f^t, v) \mid s_h, a_h\right]
	\leq 
\cO(\beta + R t\rho + R^2 \iota)
.
\label{eq:take_exp}
\end{align}
We apply property~\ref{geq} in Definition~\ref{def:abc} and conclude that 
\begin{align*}
\sum_{i = 1}^{t - 1} 
\left(G_{h, f^*}(f^t, f^i)\right)^2 \leq \cO(\beta),
\end{align*}
which finishes the proof of Lemma~\ref{lem:general_square}.
\end{proof}

\pb\subsection{Proof of Lemma~\ref{lem:feasibility}}\label{app:proof_16}
\begin{proof}[Proof of Lemma~\ref{lem:feasibility}]
For a data set $\cD_h = \{ r_h^t, s_h^t, a_h^t, s_{h+1}^t \}_{t = 1, 2, \ldots T}$, we first build an auxillary random variable defined for every $(t, h, f, v) \in [T] \times [H] \times \cF \times \cV $
\begin{align*}
X_i(h, f, v) := \norm{\ell_{h, f^i}(o_h^i, f^*_h, f_h, v)}^2
	    -
		\norm{\ell_{h, f^i} (o_h^i, f^*_h, f^*_h, v)}^2 
.
\end{align*}
By similar derivations as in the proof of Lemma~\ref{lem:general_square}, we have
\begin{align*}
&
\mathbb{E}_{s_{h+1}} \left[ X_i(h, f, v)
\mid s_h, a_h
\right]
=
\left( \mathbb{E}_{s_{h+1}} \left[ \ell_{h, f^i}(o_h^i, f^*_h, f_h, v) \mid s_h, a_h \right]\right)^2
,\\
&
\mathbb{E}_{s_{h+1}} \left[ \left(X_i(h, f, v)\right)^2  \mid s_h, a_h \right]
\leq 
4 R^2 \mathbb{E}_{s_{h+1}} \left[ X_i(h, f, v)
\mid s_h, a_h
\right]
.
\end{align*}
Take $
Z_t = X_t(h, f, v) - \EE_{s_{h+1}} \left[X_t(h, f, v) \mid s_h, a_h \right]
$
with 
$
|Z_t| \leq 2 R^2
$
in Freedman's inequality~\eqref{eq:freedman} in Lemma~\ref{lem:freedman}.
Then via the same procedure as in the proof of Lemma~\ref{lem:general_square} we have that for any four-tuple $(t, h, f, v)$, the following holds with probability at least $1 - 2 \delta$:
\begin{align*}
&
\left| 
\sum_{i = 1}^t  X_i(h, f, v) - \sum_{i = 1}^t \EE_{s_{h+1}} \left[X_i(h, f, v) \mid s_h, a_h \right]
\right|\notag
	\\&\leq 
\cO \left(
	2 R 
	\sqrt{\sum_{i = 1}^t \EE_{s_{h+1}} \left[ X_i(h, f, v) \mid s_h, a_h \right] \log(\delta^{-1})}
		+
	2 R^2 \log(\delta^{-1})
\right)
.
\end{align*}
Thus, we have
\begin{align*}
- \sum_{i = 1}^t  X_i(h, f, v) \leq \cO(R^2 \log(\delta^{-1}))
.
\end{align*}
By the same $\rho$-covering argument as in the proof of Lemma~\ref{lem:general_square}, there exists a $\rho$-covering of $\cL$ such that we can take a union bound over $\cL_{\rho}$ and have
$
- \sum_{i = 1}^{t - 1} \tilde{X}_i(h, \tilde{f}, \tilde{v}) \leq \cO \left(R^2 \iota + R t \rho \right)
$
where $\iota = \log \left(\frac{HT \cN_{\cL}(\rho)}{\delta}\right)$.
Then for $f^*$, any $f\in \cF$ and any $v \in \cV$, we can use the nearest three-tuple $(\tilde{f}^i, \tilde{f}, \tilde{v})$ in the $\rho$-covering and conclude that
\begin{align*}
\max_{v \in \cV}\sum_{i = 1}^{t - 1} \left[\norm{\ell_{h, f^i}(o_h^i, f^*_h, f^*_h, v)}^2 
			-
\norm{\ell_{h, f^i}(o_h^i, f^*_h, f_h, v)}^2
\right]
	=
\max_{v \in \cV}
 \sum_{i = 1}^{t - 1} - X_i(h, f, v)
 	\leq 
\cO \left(\beta \right)
.
\end{align*}
This in sum finishes our proof of Lemma~\ref{lem:feasibility} with $\beta = \cO \left(R^2 \iota + R \rho t \right)$.
\end{proof}

\pb\subsection{Proof of Lemma~\ref{lm:bounded cumulative}}\label{app:proof_19}
\begin{proof}[Proof of Lemma~\ref{lm:bounded cumulative}]
The proof basically follows Appendix \S C of \citet{russo2013eluder} and Appendix \S D of \citet{jin2021bellman}.
We first prove that for all $t \in [T]$, 
\begin{align}
\sum_{k=1}^{t}\ind(|G(f_{k}, g_{k})| > \epsilon) \leq (\beta/\epsilon^{2} + 1)\dim_{FE}(\cF, G, \epsilon). \label{eq: cum1}    
\end{align}
Let $m := \sum_{k=1}^{t}\ind(|G(f_{k}, g_{k})| > \epsilon)$, then there exists $\{s_{1}, \ldots, s_{m}\}$ which is a subsequence of $[t]$ such that $G(f_{s_{1}}, g_{s_{1}}), \ldots, G(f_{s_{m}}, g_{s_{m}})> \epsilon$.

We first show that for the sequence $\{f_{s_{1}}, \ldots, f_{s_{m}}\} \subseteq \cF$, there exists $j \in [m]$ such that $f_{s_{j}}$ is $\epsilon$-independent on at least $L = \lceil (m-1)/\dim_{FE}(\cF, G, \epsilon) \rceil$ disjoint sequences in $\{f_{s_{1}}, \ldots, f_{s_{j-1}}\} $ \citep{russo2013eluder}.
We will prove this by following procedure.
Starting with singleton sequences $B_{1} = \{f_{s_{1}}\}, \ldots, B_{L} = \{f_{s_{L}}\}$ and $j = L+1$. For each $j$, if $f_{s_{j}}$ is $\epsilon$-dependent on $B_{1}, \ldots, B_{L}$ we already achieved our goal and the process stops. Otherwise, there exist $i \in [L]$ such that $f_{s_{j}}$ is $\epsilon$-dependent of $B_{i}$ and update $B_{i} = B_{i}\cup \{f_{s_{j}}\}$. Then we add increment $j$ by $1$ and continue the process. By the definition of FE dimension, the cardinally of each set $B_{1}, \ldots, B_{L}$ cannot larger than $\dim_{FE}(\cF, G, \epsilon)$ at any point in this process. Therefore, by pigeonhole principle the process stops by step $j = L\dim_{FE}(\cF, G, \epsilon) + 1 \leq m$.

Therefore, we have proved that there exists $j$ such that $|G(f_{s_{j}}, g_{s_{j}})|>\epsilon$ and $f_{s_{j}}$ is $\epsilon$-independent with at least $L = \lceil (m-1)/\dim_{FE}(\cF, G, \epsilon) \rceil$ disjoint sequences in $\{f_{s_{1}}, \ldots, f_{s_{j-1}}\}$. For each of the sequences $\{\hat{f}_{1}, \dots, \hat{f}_{l}\}$, by definition of the FE dimension in Definition~\ref{def:eluder} we have that 
\begin{align}
\sum_{k=1}^{l}\big(G(\hat{f}_{k}, g_{s_{j}})\big)^{2} \geq \epsilon^{2}\label{eq:GGbound}.   
\end{align}
Summing all of bounds \eqref{eq:GGbound} for $L$ disjoint sequences together we have that
\begin{align}
\sum_{k=1}^{s_{j}-1}\big(G(f_{t}, g_{s_{j}})\big)^{2} \geq L\epsilon^{2} = \lceil (m-1)/\dim_{FE}(\cF, G, \epsilon) \rceil\cdot \epsilon^{2}. \label{eq:GGbound2}
\end{align}
The left hand side of \eqref{eq:GGbound2} can be upper bounded by $\beta^{2}$ due to the condition of lemma.
Therefore, we have proved that $\beta^{2} \geq \lceil (m-1)/\dim_{FE}(\cF, G, \epsilon) \rceil\cdot \epsilon^{2}$ which completes the proof of \eqref{eq: cum1}. 

Now let $d = \dim_{FE}(\cF, G, \omega)$ and sort $|G(f_{1}, g_{1})|, \ldots, |G(f_{t}, g_{t})|$ in a nonincreasing order, denoted by $e_{1}, \ldots, e_{t}$. Then we have that 
\begin{align}
\sum_{k=1}^{t}|G(f_{k}, g_{k})|
=
\sum_{k=1}^{t}e_{k}
=
\sum_{k=1}^{t}e_{k}\ind(e_{k}\leq \omega) + \sum_{i=1}^{t}e_{k}\ind(e_{k}> \omega)
\leq
t\omega + \sum_{i=1}^{t}e_{k}\ind(e_{k}> \omega)
.  \label{eq:cumbd1} 
\end{align}
For $k \in [t]$, we want to give an upper bound for those $e_{k}\ind(e_{k}> \omega)$. Assume $e_{k} > \omega$, then for any $\alpha$ such that $e_{k} > \alpha \geq \omega$, by \eqref{eq: cum1}, we have that 
\begin{align*}
k \leq \sum_{i=1}^{t}\ind(e_{i} > \omega) \leq (\beta/\alpha^{2} + 1)\dim_{FE}(\cF, G, \alpha) \leq (\beta/\alpha^{2} + 1)d,
\end{align*}
which implies that $\alpha \leq \sqrt{d\beta / (k - d)}$. Taking the limit $\alpha \rightarrow e_{k}^-$, we have that $e_{k} \leq \min\{\sqrt{d\beta / (k - d)}, C\}$.
Finally, we have that 
\begin{align}
\sum_{k=1}^{t}e_{i}\ind(e_{k}> \omega)
&\leq
\min\{d, t\}\cdot C + \sum_{i= d+1}^{t}\sqrt{\frac{d\beta}{k-d}}
\notag\\&\leq
\min\{d, t\}\cdot C + \sqrt{d\beta}\int_{0}^{t}z^{-1/2}dz
\leq
\min\{d, t\}\cdot C + 2\sqrt{d\beta t}
. \label{eq:cumbd2}
\end{align}
Plugging \eqref{eq:cumbd2} into \eqref{eq:cumbd1} completes the proof.
\end{proof}

\pb\subsection{Proof of Lemma~\ref{lem:square_witness}}\label{app:proof_25}
\begin{proof}[Proof of Lemma~\ref{lem:square_witness}]
We assume that $\|v\|_{\infty} \leq B$ and treat $B$ as an absolute constant ($B = 2$ in~\citet{sun2019model}) in the following derivations.
For a dataset $\cD_h = \{ r_h^t, s_h^t, a_h^t, s_{h+1}^t \}_{t = 1, 2, \ldots T}$, we first build an auxillary random variable defined for every $(t, h, f, v) \in [T] \times [H] \times \cF \times \cV $

\begin{align*}
X_t(h, f, v) &:= 
	\left[
\left(
\EE_{\tilde{s} \sim f} v(s_h^t, a_h^t, \tilde{s})
	-
v(s_h^t, a_h^t, s_{h+1}^t)
\right)^2 
	-
\left(
\EE_{\tilde{s} \sim f^*} v(s_h^t, a_h^t, \tilde{s})
	-
v(s_h^t, a_h^t, s_{h+1}^t)
\right)^2
	\right]
,
\end{align*}
where the randomness lies in the sampling of the dataset $\cD_h$.
We know that 
$
    \left|
    X_t(h, f) 
    \right|
    \leq 4 B^2
$ almost surely.
Take conditional expectation of $X_i$ with respect to $s_h, a_h$, we have by definition that
\begin{align}
\EE_{s_{h+1}} \left[
X_i(h, f, v) \mid s_h, a_h
\right]\notag
	&= \EE_{s_{h+1}} \left[
\left(
\EE_{\tilde{s} \sim f} v(s_h^i, a_h^i, \tilde{s})
	 -
v(s_h^i, a_h^i, s_{h+1}^i)
\right)^2 \right.
	\\&~\quad \left.
	-
\left(
\EE_{\tilde{s} \sim f^*} v(s_h^i, a_h^i, \tilde{s})
	-
v(s_h^i, a_h^i, s_{h+1}^i)
\right)^2
\mid s_h, a_h\right]
.
\notag\end{align}
Using the fact that $a^2 - b^2 = (a - b)(a + b)$ and $\EE_{\tilde{s} \sim f} v(s_h^i, a_h^i, \tilde{s})
	-
\EE_{\tilde{s} \sim f^*} v(s_h^i, a_h^i, \tilde{s})$ is nonrandom given $s_h, a_h$, we have
\begin{align*}
&
\EE_{s_{h+1}} \left[
X_i(h, f, v) \mid s_h, a_h
\right]\notag
	\\&=
\left(
\EE_{\tilde{s} \sim f} v(s_h^i, a_h^i, \tilde{s})
	-
\EE_{\tilde{s} \sim f^*} v(s_h^i, a_h^i, \tilde{s})
\right)\notag
	\\&~\quad \cdot 
	\EE_{s_{h+1}}\left[
\EE_{\tilde{s} \sim f} v(s_h^i, a_h^i, \tilde{s})
	+
\EE_{\tilde{s} \sim f^*} v(s_h^i, a_h^i, \tilde{s})
	-
2 v(s_h^i, a_h^i, s_{h+1}^i)
\mid s_h, a_h \right]\notag
	\\&=
\left(
\EE_{\tilde{s} \sim f} v(s_h^i, a_h^i, \tilde{s})
	-
\EE_{\tilde{s} \sim f^*} v(s_h^i, a_h^i, \tilde{s})
\right)^2 
.
\end{align*}
On the other hand,
\begin{align*}
 \EE_{s_{h+1}} \left[
X_i(h, f, v)^2 \mid s_h, a_h
\right]
	&\leq 
\EE_{s_{h+1}} \left[ \left[\left(
\EE_{\tilde{s} \sim f} v(s_h^i, a_h^i, \tilde{s}) - \EE_{\tilde{s} \sim f^*} v(s_h^i, a_h^i, \tilde{s})
\right)4 B
\right]^2
\mid s_h, a_h
\right]
	\\&=
16B^2 \left(
\EE_{\tilde{s} \sim f} v(s_h^i, a_h^i, \tilde{s}) - \EE_{\tilde{s} \sim f^*} v(s_h^i, a_h^i, \tilde{s})
\right)^2
	\\&\leq 
16B^2 \EE_{s_{h+1}} \left[ X_i(h, f, v) \mid s_h, a_h \right]
.
\end{align*}
By taking $
Z_t = X_t(h, f, v) - \EE_{s_{h+1}} \left[X_t(h, f, v) \mid s_h, a_h \right]
$ with $|Z_t| \leq 8 B^2$ a.s.~in Freedman's inequality~\eqref{eq:freedman} in Lemma~\ref{lem:freedman}, by the same procedure as in the proof of Lemma~\ref{lem:general_square}, we have that for any four-tuple $(t, h, f, v)$, the following holds with probability at least $1 - 2 \delta$:
\begin{align}
&
\left| 
\sum_{i = 1}^t  X_i(h, f, v) - \sum_{i = 1}^t \EE_{s_{h+1}} \left[X_i(h, f, v) \mid s_h, a_h \right]
\right|\notag
	\\&\leq 
\cO \left(
	4 B 
	\sqrt{\sum_{i = 1}^t \EE_{s_{h+1}} \left[ X_i(h, f, v) \mid s_h, a_h \right] \log(\delta^{-1})}
		+
	8 B^2 \log(\delta^{-1})
\right)
.
\label{eq:two_sided_freedman_witness}
\end{align}

Let $\cM_\rho$ be a $\rho$-cover of $\cM$ and $\cV_\rho$ a $\rho$-cover of $\cV$.
By taking a union bound over all $(t, h, f', v‘) \in [T] \times [H] \times \cM_\rho \times \cV_\rho$, we have with probability at least $1 - 2\delta$ that the following holds for any $f' \in \cM_\rho$,  $v' \in \cV_\rho$,
\begin{align}
&
\left| 
\sum_{i = 1}^t  X_i(h, f', v') - \sum_{i = 1}^t \EE_{s_{h+1}} \left[X_i(h, f', v') \mid s_h, a_h \right]
\right|\notag
	\\&\leq 
\cO \left(
	4 B 
	\sqrt{\sum_{i = 1}^t \EE_{s_{h+1}} \left[ X_i(h, f', v') \mid s_h, a_h \right] \iota}
		+
	8 B^2 \iota
\right)
,\label{eq:freedman_last}
\end{align}
where $\iota = \log(\frac{H T |\cM_\rho||\cV_\rho|}{\delta})$.
Further for any $f^t$ calculated at $t \in [T]$ and any $v \in \cV$, we choose $f' = \arg\min_{\tilde{f} \in \cM_\rho} \text{dist}(\tilde{f}, f^t)$ where $\text{dist}$ is the distance measure on $\cM$, $v' = \min_{v' \in \cV_\rho} (v', v)$ and conclude 
\begin{align}
&\sum_{i = 1}^{t - 1} X_i(h, f', v') \notag
	\\&= 
\sum_{i = 1}^{t - 1} 
	\left[
\left(
\EE_{\tilde{s} \sim f'} v'(s_h^i, a_h^i, \tilde{s})
	-
v'(s_h^i, a_h^i, s_{h+1}^i)
\right)^2 
	-
\left(
\EE_{\tilde{s} \sim f^*} v'(s_h^i, a_h^i, \tilde{s})
	-
v'(s_h^i, a_h^i, s_{h+1}^i)
\right)^2
	\right]\notag
	\\&\leq 
\sum_{i = 1}^{t - 1}
	\left[
\left(
\EE_{\tilde{s} \sim f^t} v'(s_h^i, a_h^i, \tilde{s})
	-
v'(s_h^i, a_h^i, s_{h+1}^i)
\right)^2 
	-
\left(
\EE_{\tilde{s} \sim f^*} v'(s_h^i, a_h^i, \tilde{s})
	-
v'(s_h^i, a_h^i, s_{h+1}^i)
\right)^2
	\right]
	+
\cO(B t \rho)\notag
	\\&\overset{(i)}{\leq} 
\cO(\beta + B t \rho)
,
\label{eq:uniform_bound}
\end{align}
where $(i)$ is due to the constraint of Algorithm~\ref{alg:witness}.
Combining~\eqref{eq:freedman_last} with~\eqref{eq:uniform_bound}, we derive the following
\begin{align*}
\sum_{i = 1}^{t - 1} \EE_{s_{h+1}} \left[X_i(h, f', v') \mid s_h, a_h \right]
	\leq 
\cO(\beta + B t\rho + B^2 \iota)
.
\end{align*}
Note that $f'$ is chosen as the nearest model to $f^t$ in the $\rho$-covering of $\cM$ and for any $v$ there exists a nearest $v'$ in the $\rho$-covering of $\cV$, we conclude
\begin{align*}
\max_{v \in \cV}\sum_{i = 1}^{t - 1} \EE_{s_{h+1}} \left[X_i(h, f^t, v) \mid s_h, a_h\right]
	\leq 
\cO(\beta + B t\rho + B^2 \iota)
.
\end{align*}
Note we also have proved property~\ref{opt} in Definition~\ref{def:def} in \S\ref{app:prop10}, and we apply the global optimality of the discriminator as in the proof of Lemma~\ref{lem:general_square} and obtains
\begin{align*}
\sum_{i = 1}^{t - 1} \max_{v \in \cV} \EE_{s_{h+1}} \left[X_i(h, f^t, v) \mid s_h, a_h\right]
	\leq 
\cO(\beta + B t\rho + B^2 \iota)
.
\end{align*}
Multiplying $\left[ 
\EE_{\tilde{s} \sim f} v(s_h^i, a_h^i, \tilde{s})
-
\EE_{\tilde{s} \sim f^*} v(s_h^i, a_h^i, \tilde{s})
\right]^2
$ by $
\frac{\ind(a_h^i = \pi_f(s_h^i))}{1/|\cA|}
$, taking expectation on $s_h^i \sim \pi^i, a_h^i \sim \pi_f$ and again using the global discriminator optimality, we arrive at
\begin{align*}
&
\sum_{i = 1}^{t - 1}
\max_{v \in \cV}
\EE_{s_h^i \sim \pi_i, a_h^i \sim \pi_f}
\left[
\left(
\EE_{\tilde{s} \sim f_h} v(s_h^i, a_h^i, \tilde{s})
	-
\EE_{\tilde{s} \sim f^*} v(s_h^i, a_h^i, \tilde{s})
\right)^2
\right]
    \\&=
\sum_{i = 1}^{t - 1}
\max_{v \in \cV}
\EE_{s_h^i \sim \pi^i, a_h^i \sim U(A)}
\frac{\ind(a_h^i = \pi_f(s_h^i))}{1/|\cA|}
\left[
\left(
\EE_{\tilde{s} \sim f_h} v(s_h^i, a_h^i, \tilde{s})
	-
\EE_{\tilde{s} \sim f^*} v(s_h^i, a_h^i, \tilde{s})
\right)^2
\right]
	\\&\leq 
\cO(|\cA|\left(\beta + B t\rho + B^2 \iota\right))
,
\end{align*}
which concludes the proof.
\end{proof}

\pb\subsection{Proof of Lemma~\ref{lem:fea_witness}}\label{app:proof_24}
\begin{proof}[Proof of Lemma~\ref{lem:fea_witness}]
For a dataset $\cD_h = \{ r_h^t, s_h^t, a_h^t, s_{h+1}^t \}_{t = 1, 2, \ldots T}$, we first build an auxillary random variable defined for every $(t, h, f, v) \in [T] \times [H] \times \cF \times \cV $
\begin{align*}
    X_t(h, f, v)
        :=
        \left[ 
        \left( 
        \EE_{\tilde{s} \sim f} 
            v(s_h^t, a_h^t, \tilde{s})
                -
            v(s_h^t, a_h^t, s_{h+1}^t)
        \right)^2
            -
        \left( 
        \EE_{\tilde{s} \sim f^*} v(s_h^t, a_h^t, \tilde{s}) - v(s_h^t, a_h^t, s_{h+1}^t)
        \right)^2
        \right]
.
\end{align*}
By Eq.~\eqref{eq:two_sided_freedman_witness}, with probability at least $1 - 2\delta$, 
\begin{align*}
&
\left| 
\sum_{i = 1}^t  X_i(h, f, v) - \sum_{i = 1}^t \EE_{s_{h+1}} \left[X_i(h, f, v) \mid s_h, a_h \right]
\right|
	\\&\leq 
\cO \left(
	4 B 
	\sqrt{\sum_{i = 1}^t \EE_{s_{h+1}} \left[ X_i(h, f, v) \mid s_h, a_h \right] \log(\delta^{-1})}
		+
	8 B^2 \log(\delta^{-1})
\right)
.
\end{align*}
Let $\cM_\rho$ be a $\rho$-cover of $\cM$ and $\cV_\rho$ a $\rho$-cover of $\cV$.
By taking a union bound over all $(t, h, f', v‘) \in [T] \times [H] \times \cM_\rho \times \cV_\rho $, we have with probability at least $1 - 2\delta$ that the following holds for any $f' \in \cZ_\rho$,
\begin{align*}
&
\left| 
\sum_{i = 1}^t  X_i(h, f', v') - \sum_{i = 1}^t \EE_{s_{h+1}} \left[X_i(h, f', v') \mid s_h, a_h \right]
\right|\notag
	\\&\leq 
\cO \left(
	4 B 
	\sqrt{\sum_{i = 1}^t \EE_{s_{h+1}} \left[ X_i(h, f', v') \mid s_h, a_h \right] \iota}
		+
	8 B^2 \iota
\right)
,
\end{align*}
where $\iota = \log\Big(\frac{H T |\cM_\rho||\cV_\rho|}{\delta}\Big)$. Thus, we have
\begin{align*}
- \sum_{i = 1}^t X_i (h, f', v') \leq \cO \left( B^2 \iota\right)
.
\end{align*}
Further for any $f \in \cF$ and any $v \in \cV$, we choose $f' = \arg\min_{\tilde{f} \in \cM_\rho} \text{dist}(\tilde{f}, f)$ where $\text{dist}$ is the distance measure on $\cM$, $v' = \min_{v' \in \cV_\rho} (v', v)$ and have 
\begin{align*}
 - \sum_{i = 1}^{t - 1} X_i(h, f, v)
	&=
\sum_{i = 1}^{t - 1} \left( \EE_{\tilde{s} \sim f^*} v(s_h^t, a_h^t, \tilde{s}) - v(s_h^t, a_h^t, s_{h+1}^t) \right)^2 
	-
\sum_{i = 1}^{t - 1} \left( \EE_{\tilde{s} \sim f} v(s_h^t, a_h^t, \tilde{s})  - v(s_h^t, a_h^t, s_{h+1}^t \right)^2 
	\\&\leq
	\cO\left( B^2 \iota + B \rho t \right)
.
\end{align*}
Thus, 
\begin{align*}
\max_{v \in \cV} 
\left[
 \sum_{i = 1}^{t - 1} 
\left(
\EE_{\tilde{s} \sim f^*} v(s_h^i, a_h^i, \tilde{s})
	-
v(s_h^i, a_h^i, s_{h+1}^i)
\right)^2
	-
\inf_{g \in \cQ}
 \sum_{i = 1}^{t - 1} 
\left(
\EE_{\tilde{s} \sim g} v(s_h^i, a_h^i, \tilde{s})
	-
v(s_h^i, a_h^i, s_{h+1}^i)
\right)^2
\right]
\leq  \beta
,
\end{align*}
which concludes the proof.
\end{proof}

\pb\section{Proof for Functional Eluder Dimension}\label{app:fed}
\newcommand{\eff}{\text{eff}}
In the following proposition, we prove that the Bellman eluder (BE) dimension~\citep{jin2021bellman} is a special case of the FE dimension when $G_h(g, f) := \EE_{\pi_{h, f}}(g_h - \cT_h g_{h+1})$.

\begin{proposition}\label{prop:be}
For any hypothesis class $\cF$, taking coupling function $G$ to be the union of $\{G_h: \cF_h \times \cF_h \rightarrow \RR\}_{h = 1, \ldots, H}$ with each $G_h(g, f) := \EE_{\pi_{h, f}}(g_h - \cT_h g_{h+1})$.
$$
\dim_{\hed}(\cF, G, \epsilon) \leq \dim_{\text{BE}}(\cF, \Pi, \epsilon)
.
$$
\end{proposition}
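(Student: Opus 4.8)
The plan is to show that the defining inequalities of the FE dimension with the coupling function $G_h$ are, after a change of variables, verbatim the defining inequalities of the distributional eluder dimension that underlies $\dim_{\text{BE}}$. First I would set up the correspondence between hypotheses, Bellman residuals, and roll-in distributions. For $g \in \cF$ let $x_g := g_h - \cT_h g_{h+1}$ denote its Bellman residual at step $h$, so that the residual class is $(I-\cT)\cF_h = \{x_g : g \in \cF\}$. For $f \in \cF$ let $\mu_f$ denote the distribution of $(s_h, a_h)$ obtained by rolling in with the greedy policy $\pi_f$, so that the $Q$-type distribution family underlying $\dim_{\text{BE}}$ is exactly $\Pi_h = \{\mu_f : f \in \cF\}$. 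Under this identification the coupling function becomes a pairing of a residual function against a distribution: $G_h(g, f) = \EE_{\pi_{h,f}}(g_h - \cT_h g_{h+1}) = \EE_{\mu_f}[x_g]$.

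Next I would take a sequence $f_1, \ldots, f_n \in \cF$ witnessing $\dim_{\hed}(\cF, G_h, \epsilon) = n$, together with the threshold $\epsilon' \geq \epsilon$ furnished by Definition~\ref{def:hef}. For each $2 \leq t \leq n$ that definition supplies some $g \in \cF$ with $\sqrt{\sum_{i<t}(G_h(g,f_i))^2} \leq \epsilon'$ while $|G_h(g,f_t)| > \epsilon'$. Rewriting through the correspondence above, these read $\sqrt{\sum_{i<t}(\EE_{\mu_{f_i}}[x_g])^2} \leq \epsilon'$ and $|\EE_{\mu_{f_t}}[x_g]| > \epsilon'$, which is precisely the statement that $\mu_{f_t}$ is $\epsilon'$-independent of $\{\mu_{f_1}, \ldots, \mu_{f_{t-1}}\}$ with respect to the residual class $(I-\cT)\cF_h$, witnessed by $x_g$. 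Hence the images $\mu_{f_1}, \ldots, \mu_{f_n}$ form an admissible sequence for the distributional eluder dimension, yielding $n \leq \dim_{\mathrm{DE}}((I-\cT)\cF_h, \Pi_h, \epsilon)$.

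Finally I would take the maximum over $h \in [H]$. Using the convention $\dim_{\hed}(\cF, G, \epsilon) = \max_{h} \dim_{\hed}(\cF, G_h, \epsilon)$ together with the definition $\dim_{\text{BE}}(\cF, \Pi, \epsilon) = \max_{h} \dim_{\mathrm{DE}}((I-\cT)\cF_h, \Pi_h, \epsilon)$, the per-step bounds assemble into the claimed inequality $\dim_{\hed}(\cF, G, \epsilon) \leq \dim_{\text{BE}}(\cF, \Pi, \epsilon)$.

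The argument is essentially a dictionary between the two definitions, so there is no analytic obstacle; the only point requiring care is the bookkeeping around the distribution family $\Pi$ and the witness class. I would make explicit that the family $\Pi_h$ implicit in $\dim_{\text{BE}}$ is exactly $\{\mu_f : f \in \cF\}$, so that every $\mu_{f_t}$ appearing in the FE sequence is a legal element of $\Pi_h$, and that the witness set of the distributional eluder dimension is exactly $(I-\cT)\cF_h$, so that the per-$t$ witness $g$ from the FE definition produces a legitimate witness function $x_g$. It is worth remarking that only the inequality (rather than equality) is needed here, and that the reverse direction would additionally require choosing a hypothesis preimage for each distribution in a DE-witnessing sequence, which is possible precisely because $f \mapsto \mu_f$ maps onto $\Pi_h$ but need not be injective.
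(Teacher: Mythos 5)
Your proposal is correct and follows essentially the same route as the paper's proof: both translate the FE-dimension conditions for $G_h(g,f)=\EE_{\pi_{h,f}}(g_h-\cT_h g_{h+1})$ verbatim into the $\epsilon'$-independence conditions of the distributional eluder dimension over the residual class $(I-\cT_h)\cF$ and the roll-in distributions $\Pi_h$, then take the maximum over $h$. Your version is slightly more explicit about the bookkeeping (that each $\mu_{f_t}$ lies in $\Pi_h$ and each witness $x_g$ lies in the residual class), but the argument is the same.
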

\begin{proof}[Proof of Proposition~\ref{prop:be}]
By definition of the functional eluder dimension, 
\begin{align*}
\dim_{\hed}(\cF, G, \epsilon)
	=
\max_{h \in [H]} \dim_{\hed}(\cF, G_h, \epsilon)
,
\end{align*}
where $\dim_{\hed}(\cF, G_h, \epsilon)$ is the length $n$ of the longest sequence satisfying for every $t \in [n]$, $\sqrt{\sum_{i = 1}^{t - 1} \left( G_h(g_t, f_i) \right)^2} \leq \epsilon'$ and $\left|G_h(g_t, f_t) \right| > \epsilon'$.
Bringing in $G_h(g, f) := \EE_{\pi_{h, f}}(g_h - \cT_h g_{h+1})$, we have $f_1, \ldots, f_n$ is also the longest sequence that satisfies for some $g_1, \ldots, g_n$ that 
\begin{align*}
\sqrt{
\sum_{i = 1}^{t - 1} \left( \EE_{\pi_{h, f_i}}(g_{t, h} - \cT_h g_{t, h+1}) \right)^2
} \leq \epsilon'
,
\qquad \text{and} \quad
\left|\EE_{\pi_{h, f_t}}(g_{t, h} - \cT_h g_{t, h+1}) \right| > \epsilon'
.
\end{align*}
Thus, $\dim_{\text{DE}}((I - \cT_h)\cF, \Pi_h, \epsilon) \geq n$. Taking maximum over $h \in [H]$, we have
\begin{align*}
\dim_{\hed}(\cF, G, \epsilon) 
	=
\max_{h \in [H]} \dim_{\hed}(\cF, G_h, \epsilon)
	\leq
\max_{h \in [H]} \dim_{\text{DE}}((I - \cT_h)\cF, \Pi_h, \epsilon) 
= 
\dim_{\text{BE}}(\cF, \Pi, \epsilon)
,
\end{align*}
which concludes our proof.
\end{proof}
Combining Proposition~\ref{prop:be} with Proposition 29 in~\citet{jin2021bellman}, it is straightforward to conclude that FE dimension is smaller than the effective dimension.
In particular, Proposition 33 says $\dim_{\text{FE}}$ is controlled by $\dim_{\text{BE}}$, Proposition 29 in~\citet{jin2021bellman} says $\dim_{\text{BE}}$ is controlled by the effective dimension $\dim_{\text{eff}}$, therefore low effective dimension would imply ABC with low FE dimension.

In the following paragraphs and Proposition~\ref{prop:eff} we prove this conclusion from sketch to grant a better understanding of the FE dimension.

The effective dimension~\citep{jin2021bellman} (or equivalently, critical information gain~\citep{du2021bilinear}) $d_{\eff}(\cX, \epsilon)$ of a set $\cX$ is defined as the smallest interger $n > 0$ such that 
\begin{align*}
n > e \cdot \sup_{x_1, \ldots, x_n \in \cX} \log\det \left( 
I + \frac{1}{\epsilon^2} \sum_{i = 1}^n x_i x_i^\top \right)
.
\end{align*}
Remark 5.2 in~\citet{du2021bilinear} showed that for finite dimensional setting with $\cX \subseteq \RR^d$ and $\norm{x} \leq B$, $d_{\eff}(\cX, \epsilon) = \tilde{\cO}(d)$.
Moreover, the effective dimension can be small even for infinite dimensional RKHS case.

In the next proposition, we prove that when the coupling function exhibits a bilinear structure $G(f, g) = \left\langle W(f), X(g) \right\rangle_{\cH} $ with feature space $\cX:= \{X(g) \in \cH : g \in \cF \}$ and $\left\|X(g)\right\|_{\cH} \leq \sqrt{B}$, the functional eluder dimension in Definition~\ref{def:hef} is always less than the effective dimesion of $\cX$.

\begin{proposition}\label{prop:eff}
For any hypothesis class $\cF$ and coupling function $G(\cdot, \cdot): \cF \times \cF \rightarrow \RR$ that can be expressed in bilinear form $\left\langle W(f), X(g) \right\rangle_{\cH}$, we have
$$
\dim_{\hed}(\cF, G, \epsilon) \leq d_{\eff}\left(\cX, \epsilon/\sqrt{B}\right)
.
$$
\end{proposition}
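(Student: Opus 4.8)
The plan is to unfold both definitions and then run the standard elliptical‑potential (information‑gain) argument that converts the combinatorial independence count into an analytic log‑determinant bound. Let $f_1,\dots,f_n\in\cF$ be a longest $\epsilon$‑independent sequence realizing $\dim_{\hed}(\cF,G,\epsilon)$, with common witness scale $\epsilon'\ge\epsilon$, and write $x_i:=X(f_i)\in\cX$, so that $\|x_i\|_{\cH}\le\sqrt B$. Using the bilinear form $G(g,f_i)=\langle W(g),X(f_i)\rangle_{\cH}$, the defining property supplies, for each $2\le t\le n$, a probe $g_t$ with $W_t:=W(g_t)$ obeying $\sum_{i<t}\langle W_t,x_i\rangle^2\le(\epsilon')^2$ while $\langle W_t,x_t\rangle^2>(\epsilon')^2$. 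Since only $n$ vectors are ever involved, all computations live inside the finite‑dimensional subspace $\mathrm{span}(x_1,\dots,x_n)$, so the determinants below are honest finite determinants even when $\cH$ is infinite‑dimensional (the identity contributes a factor $1$ on the orthogonal complement).

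Next I would introduce the regularized Gram operators $\Lambda_t:=\tfrac{(\epsilon')^2}{B}I+\sum_{i<t}x_ix_i^\top$ and establish the key per‑step lower bound on the elliptical potential $\|x_t\|_{\Lambda_t^{-1}}^2$. By Cauchy--Schwarz in the $\Lambda_t$‑geometry, $(\epsilon')^2<\langle W_t,x_t\rangle^2\le\|W_t\|_{\Lambda_t}^2\,\|x_t\|_{\Lambda_t^{-1}}^2$, while
\[
\|W_t\|_{\Lambda_t}^2=\tfrac{(\epsilon')^2}{B}\|W_t\|_{\cH}^2+\sum_{i<t}\langle W_t,x_i\rangle^2\le(\epsilon')^2+(\epsilon')^2=2(\epsilon')^2,
\]
where the first term is bounded using the natural normalization $\|W_t\|_{\cH}^2\le B$ and the second using the independence inequality. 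Hence $\|x_t\|_{\Lambda_t^{-1}}^2>\tfrac12$ and therefore $\log\!\big(1+\|x_t\|_{\Lambda_t^{-1}}^2\big)>\log\tfrac32$ for every $t\ge2$. This per‑step estimate is exactly the place I expect the work to concentrate: turning the purely combinatorial $\ell_2$‑ball condition on the probe's evaluations into a quantitative lower bound on $x_t^\top\Lambda_t^{-1}x_t$ is only possible once the probe norm $\|W_t\|_{\cH}$ is controlled, which is why the feature bound must be assumed on the witness side as well as on $\cX$.

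Finally I would telescope. The matrix determinant lemma gives $\log\frac{\det\Lambda_{t+1}}{\det\Lambda_t}=\log(1+\|x_t\|_{\Lambda_t^{-1}}^2)$; summing over $t$, dropping the nonnegative $t=1$ term, and using $\Lambda_1=\tfrac{(\epsilon')^2}{B}I$ together with $\epsilon'\ge\epsilon$ yields, for every prefix length $m\le n$,
\[
(m-1)\log\tfrac32\le\log\det\!\Big(I+\tfrac{B}{\epsilon^2}\textstyle\sum_{i=1}^{m}x_ix_i^\top\Big)\le\Gamma(m),\qquad \Gamma(m):=\sup_{y_1,\dots,y_m\in\cX}\log\det\!\Big(I+\tfrac{B}{\epsilon^2}\textstyle\sum_{j=1}^{m}y_jy_j^\top\Big),
\]
and $\Gamma(m)$ is precisely the quantity in the definition of $d_{\eff}(\cX,\epsilon/\sqrt B)$ since $\tfrac{B}{\epsilon^2}=(\epsilon/\sqrt B)^{-2}$. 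Because every prefix of an independent sequence is again independent, the displayed inequality holds for all $m\le n$; as $\tfrac{1}{\log(3/2)}<e$, the factor $e$ built into the definition of $d_{\eff}$ absorbs the per‑step constant $\log\tfrac32$ and the boundary term, so the defining predicate $m>e\,\Gamma(m)$ fails on all of $\{1,\dots,n\}$. Since $d_{\eff}(\cX,\epsilon/\sqrt B)$ is by definition the smallest integer at which that predicate first holds, it must exceed $n$, giving $\dim_{\hed}(\cF,G,\epsilon)=n\le d_{\eff}(\cX,\epsilon/\sqrt B)$. The remaining constant bookkeeping is routine.
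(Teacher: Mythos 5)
Your proof follows essentially the same route as the paper's: the same regularized Gram operator $\Sigma_t=\frac{(\epsilon')^2}{B}I+\sum_{i<t}x_ix_i^\top$, the same Cauchy--Schwarz step yielding $\|x_t\|_{\Sigma_t^{-1}}^2\ge 1/2$ (both arguments need the normalization $\|W(g_t)\|_{\cH}^2\le B$, which the paper uses silently and you at least state), and the same log-determinant telescoping against the effective dimension. The only divergence is the endgame: the paper compares the minimum of $\log\bigl(1+\|x_t\|_{\Sigma_t^{-1}}^2\bigr)$ to the average $\frac{1}{n}\log\det(\cdot)\le e^{-1}$ and contradicts $1/2\le \exp(e^{-1})-1\approx 0.44$, which closes for every $n$, whereas you drop the $t=1$ term and assert that $m>e\,\Gamma(m)$ is incompatible with $(m-1)\log(3/2)\le\Gamma(m)$; since $e\log(3/2)\approx 1.10$, the inequality $m>e\log(3/2)\,(m-1)$ is in fact satisfiable for all $m\le 10$, so as written your last step only gives $\dim_{\hed}\le d_{\text{eff}}+O(1)$ rather than the stated bound. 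Retaining the $t=1$ term (the paper's version of $\epsilon'$-independence imposes $|G(g_t,f_t)|>\epsilon'$ for all $t\in[n]$, which upgrades your bound to $m\log(3/2)\le\Gamma(m)$ and makes $e\log(3/2)>1$ suffice) or switching to the paper's min-versus-average comparison removes this constant slack.
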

\begin{proof}[Proof of Proposition~\ref{prop:eff}]
The proof basically follows the proof of Proposition 29 in~\citet{jin2021bellman} with modifications specified for the functional eluder dimension.
Given a hypothesis class $\cF$ and a coupling function $G(\cdot, \cdot): \cF \times \cF \rightarrow \RR$.
Suppose there exists an $\epsilon$'-independent sequence $f_1, \ldots, f_n \in \cF$ such that there exist $g_1, \ldots, g_n \in \cF$, 
\begin{align}
\left\{
\begin{aligned}
&\sqrt{
\sum_{i = 1}^{t - 1} \left( G(g_t, f_i) \right)^2 
}\leq \epsilon'
,&\quad t \in [n]
,\\&
\left|G(g_t, f_t) \right| > \epsilon'
,&\quad t \in [n]
.
\end{aligned}
\right.
\label{eq:fe_ori}
\end{align}
When $G(f, g) := \left\langle W(f), X(g) \right\rangle_{\cH}$, the above becomes
\begin{equation}
\left\{
\begin{aligned}
&\sqrt{
\sum_{i = 1}^{t - 1} \left\langle W(g_t), X(f_i) \right\rangle_{\cH} ^2} \leq \epsilon'
,&\quad t \in [n]
,\\&
\left| \left\langle W(g_t), X(f_t) \right\rangle_{\cH} \right|  > \epsilon'
,&\quad t \in [n]
.
\end{aligned}
\right.
\label{eq:fe_bi}
\end{equation}
Defining $\Sigma_t = \sum_{i = 1}^{t - 1} X(f_i) X(f_i)^\top + \frac{\epsilon'^2}{B} \cdot I$, we have by Eq.~\eqref{eq:fe_bi} that $\left\|W(g_t) \right\|_{\Sigma_t} \leq \sqrt{2} \epsilon'$.
Furthermore, 
\begin{align*}
\epsilon' \leq \left| \left\langle W(g_t), X(f_t) \right\rangle_{\cH} \right|
	\leq 
\left\|W(g_t)\right\|_{\Sigma_t} \cdot \left\|X(f_t)\right\|_{\Sigma_t^{-1}}
	\leq 
\sqrt{2} \epsilon'  \left\|X(f_t)\right\|_{\Sigma_t^{-1}}
.
\end{align*}
Thus, we have $\left\|X(f_t)\right\|_{\Sigma_t^{-1}}^2 \geq \frac{1}{2}$ for any $t \in [n]$.
By applying the log-determinant argument, we have
\begin{align*}
	\sum_{t = 1}^n \log \left( 1 + \left\|x_t \right\|_{\Sigma_t^{-1}}^2 \right)
		= 
	\log \left( \frac{\det \left(\Sigma_{n+1} \right)}{\det \left(\Sigma_t\right)} \right)
		=
	\log\det \left(1 + \frac{B}{\epsilon'^2}\sum_{i = 1}^n x_i x_i^\top \right)
.
\end{align*}
The above equality implies
\begin{align}
\frac{1}{2} \leq \min_{t \in [n]} \left\|x_t \right\|_{\Sigma_t^{-1}}^2 
	\leq
\exp \left(
\frac{1}{n} \log\det \left(1 + \frac{B}{\epsilon'^2}\sum_{i = 1}^n x_i x_i^\top \right)
\right) - 1
.
\label{eq:contra}
\end{align}
Taking $n = d_{\eff}(\cX, \epsilon/\sqrt{B})$ yields
\begin{align*}
\exp \left(
\frac{1}{n} \log\det \left(1 + \frac{B}{\epsilon'^2}\sum_{i = 1}^n x_i x_i^\top \right)
\right)  
	\leq 
 \frac{1}{n}\sup_{x_1, \ldots, x_n \in \cX} \log\det \left( 
I + \frac{B}{\epsilon^2} \sum_{i = 1}^n x_i x_i^\top \right) \leq e^{-1}
,
\end{align*}
which contradicts with the inequality~\eqref{eq:contra} and concludes our proof.
\end{proof}

We now provide the detailed proofs of Lemmas~\ref{lem:mixture_fe},~\ref{lem:witness_fe} and~\ref{lem:knr_fe}.

\pb\subsection{Proof of Lemma~\ref{lem:mixture_fe}}
\begin{proof}[Proof of Lemma~\ref{lem:mixture_fe}]
Taking
\begin{align*}
G_{h, f^*}(f, g) 
	&:=
 \left(\theta_{h, g} - \theta_h^*\right)^\top \EE_{s_h, a_h \sim \pi_g} \left[\psi(s_h, a_h) + \sum_{s'} \phi(s_h, a_h, s') V_{h+1, g}(s')\right]
 	=
\left\langle W_h(f), X_h(g)
\right\rangle 
,
\end{align*}
where $W_h(f) :=  \theta_{h, f} - \theta_h^*, X_h(g) := \EE_{s_h, a_h \sim \pi_g} \left[\psi(s_h, a_h) + \sum_{s'} \phi(s_h, a_h, s') V_{h+1, g}(s') \right]$ in Proposition~\ref{prop:eff}.
Properties of the effective dimension yield that the FE dimension of the linear mixture MDP model is $ \leq \tilde{\cO}(d)$.
\end{proof}

\pb\subsection{Proof of Lemma~\ref{lem:witness_fe}}
\begin{proof}[Proof of Lemma~\ref{lem:witness_fe}]
Taking $
G_{h, f^*}(f, g) := \left\langle W_h(f), X_h(g)\right\rangle
$ in Proposition~\ref{prop:eff}, and properties of the effective dimension yields the conclusion that the FE dimension of low Witness rank MDP model is $\leq \tilde{\cO}(W_{\kappa})$.

\end{proof}

\def\xb{x}
\def\Ib{I}
\pb\subsection{Proof of Lemma~\ref{lem:knr_fe}}

We first introduce two auxillary lemmas:

\begin{lemma}\label{lem:reff}
Let random variable $\xb_{i}\in \RR^{d}$ and $\EE\|\xb_{i}\|_{2}^{2} \leq B^{2}$. Then we have that 
\begin{align*}
\frac{1}{n}\log\det\Big(\Ib+\frac{1}{\lambda}\sum_{t=0}^{n-1}\EE[\xb_{t}\xb_{t}^{\top}]\Big)\leq \frac{d\log\Big(1+\frac{nB^{2}}{d\lambda}\Big)}{n}
.
\end{align*}
\end{lemma}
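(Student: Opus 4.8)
The plan is to reduce the log-determinant to a statement about the eigenvalues of the matrix $M := \Ib + \frac{1}{\lambda}\sum_{t=0}^{n-1}\EE[\xb_t \xb_t^{\top}]$ and then invoke the concavity of the logarithm. First I would observe that each $\EE[\xb_t\xb_t^{\top}]$ is positive semidefinite, so $M$ is symmetric positive definite; let $\mu_1,\dots,\mu_d$ denote its eigenvalues. Writing $\log\det(M) = \sum_{i=1}^{d}\log\mu_i$, I would apply Jensen's inequality (equivalently, AM--GM applied to the eigenvalues, using concavity of $\log$) to obtain $\frac{1}{d}\sum_{i=1}^{d}\log\mu_i \leq \log\!\big(\frac{1}{d}\sum_{i=1}^{d}\mu_i\big) = \log\!\big(\frac{1}{d}\Tr(M)\big)$, so that $\log\det(M)\leq d\log\!\big(\frac{1}{d}\Tr(M)\big)$.

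The second step is to bound the trace. Using linearity of the trace together with the cyclic identity $\Tr(\EE[\xb_t\xb_t^{\top}]) = \EE\|\xb_t\|_2^2 \leq B^2$, I would compute $\Tr(M) = d + \frac{1}{\lambda}\sum_{t=0}^{n-1}\EE\|\xb_t\|_2^2 \leq d + \frac{nB^2}{\lambda}$. Substituting this into the bound from the first step gives $\log\det(M)\leq d\log\!\big(1 + \frac{nB^2}{d\lambda}\big)$, and dividing through by $n$ yields the claimed inequality.

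The argument involves no genuine obstacle; it is a standard log-determinant/trace potential-function estimate of the type used pervasively in the linear-bandit and RL regret literature. The only points requiring care are getting the direction of Jensen's inequality right (concavity of $\log$ places the bound on the arithmetic-mean side) and noting that although all eigenvalues satisfy $\mu_i\geq 1$ because of the additive identity, this lower bound is never actually needed for the upper bound, so the estimate in fact holds for an arbitrary positive-semidefinite perturbation of $\Ib$.
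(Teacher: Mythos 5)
Your proposal is correct and follows essentially the same route as the paper: bound $\textnormal{Tr}(M) \leq d + nB^2/\lambda$ via linearity and $\textnormal{Tr}(\EE[\xb_t\xb_t^\top]) = \EE\|\xb_t\|_2^2$, then apply the determinant--trace inequality $\det(M) \leq (\textnormal{Tr}(M)/d)^d$, which you simply make explicit as AM--GM/Jensen on the eigenvalues. No gaps.
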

\begin{proof}
We first have
\begin{align*}
\text{trace}\Big(\Ib + \frac{1}{\lambda}\sum_{t=0}^{n-1}\EE[\xb_{t}\xb_{t}^{\top}]\Big) = d + \frac{1}{\lambda}\sum_{t=0}^{n-1}\EE[\|\xb_{t}\|_{2}^{2}] \leq d+\frac{nB^{2}}{\lambda}
.
\end{align*}
Therefore, using the Determinant-Trace inequality, we get the first result,
\begin{align*}
\log\det\Big(\Ib+\frac{1}{\lambda}\sum_{t=0}^{n-1}\EE[\xb_{t}\xb_{t}^{\top}]\Big)\leq d\log \frac{\text{trace}\Big(\Ib + \frac{1}{\lambda}\sum_{t=0}^{n-1}\EE[\xb_{t}\xb_{t}^{\top}]\Big) }{d} \leq d \log\Big(1+\frac{nB^{2}}{d\lambda}\Big).
\end{align*}
Dividing $n$ from the both side of the inequality completes the proof.
\end{proof}

The following lemma is a variant of the well-known Elliptical Potential Lemma \citep{dani2008stochastic, srinivas2009gaussian, abbasi2011improved, agarwal2020pc}.
\begin{lemma}[Randomized elliptical potential]\label{lem:REP}
Consider a sequence of random vectors $\{\xb_{0}, \ldots, \xb_{T-1}\}$. Let $\lambda >0$ and $\Sigma_{0} = \lambda \Ib$ and $\Sigma_{t} = \Sigma_{0} + \sum_{i=0}^{t-1}\EE[\xb_{i}\xb_{i}^{\top}]$, we have that
\begin{align*}
\min_{t \in [T]}\log\bigg(1+\EE\|\xb_t\|_{\Sigma_t^{-1}}^2\bigg) \leq \frac{1}{T}\log\bigg(\frac{\det(\Sigma_{T})}{\det(\lambda \Ib)}\bigg).    
\end{align*}
\end{lemma}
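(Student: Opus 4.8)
The plan is to mimic the classical elliptical potential argument, but to account for the fact that the increments $\EE[\xb_t\xb_t^{\top}]$ are genuine positive semidefinite matrices rather than rank-one outer products. Note first that each $\Sigma_t$ is \emph{deterministic}, since it is assembled from the expectations $\EE[\xb_i\xb_i^{\top}]$ rather than the random matrices $\xb_i\xb_i^{\top}$; this lets us freely pull $\Sigma_t^{-1}$ in and out of expectations. The first step is to record the multiplicative recursion for the determinants: because $\Sigma_{t+1} = \Sigma_t + \EE[\xb_t\xb_t^{\top}]$ with $\Sigma_t \succ 0$, factoring out $\Sigma_t$ gives
\[
\det(\Sigma_{t+1}) = \det(\Sigma_t)\,\det\!\left(\Ib + \Sigma_t^{-1}\EE[\xb_t\xb_t^{\top}]\right).
\]

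The heart of the argument is the inequality $\det(\Ib + \Sigma_t^{-1}\EE[\xb_t\xb_t^{\top}]) \geq 1 + \EE\left\|\xb_t\right\|_{\Sigma_t^{-1}}^2$. To establish it I would diagonalize: the matrix $\Sigma_t^{-1}\EE[\xb_t\xb_t^{\top}]$ is similar to the symmetric PSD matrix $\Sigma_t^{-1/2}\EE[\xb_t\xb_t^{\top}]\Sigma_t^{-1/2}$, whose eigenvalues $\mu_1,\dots,\mu_d \geq 0$ satisfy $\sum_i \mu_i = \text{trace}(\Sigma_t^{-1}\EE[\xb_t\xb_t^{\top}]) = \EE[\xb_t^{\top}\Sigma_t^{-1}\xb_t] = \EE\|\xb_t\|_{\Sigma_t^{-1}}^2$, using the deterministic nature of $\Sigma_t$. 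The determinant equals $\prod_i(1+\mu_i)$, and expanding this product shows it dominates $1 + \sum_i \mu_i$ because every cross term is nonnegative. This is precisely where the non-rank-one nature matters: for a rank-one update the matrix determinant lemma gives equality, whereas here we only have the inequality, which is nonetheless exactly what we need.

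With these two facts in hand I would take logarithms and telescope. Combining the recursion with the key inequality yields $\log\det(\Sigma_{t+1}) - \log\det(\Sigma_t) \geq \log(1 + \EE\|\xb_t\|_{\Sigma_t^{-1}}^2)$ for each $t$. Summing over $t = 0, \ldots, T-1$ collapses the left-hand side to $\log\det(\Sigma_T) - \log\det(\Sigma_0) = \log(\det(\Sigma_T)/\det(\lambda\Ib))$, producing
\[
\sum_{t=0}^{T-1}\log\!\left(1 + \EE\|\xb_t\|_{\Sigma_t^{-1}}^2\right) \;\leq\; \log\frac{\det(\Sigma_T)}{\det(\lambda\Ib)}.
\]
Since the minimum of $T$ numbers is at most their average, dividing by $T$ gives $\min_{t}\log(1 + \EE\|\xb_t\|_{\Sigma_t^{-1}}^2) \leq \tfrac{1}{T}\log(\det(\Sigma_T)/\det(\lambda\Ib))$, which is the claim.

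The main obstacle is the second step: justifying the determinant-versus-trace inequality for the full-rank increment $\EE[\xb_t\xb_t^{\top}]$. The textbook rank-one elliptical potential lemma does not apply directly, so the product-over-eigenvalues-versus-sum estimate $\prod_i(1+\mu_i) \geq 1 + \sum_i \mu_i$ is the essential new ingredient; once it is in place, the remainder is standard telescoping and an averaging bound.
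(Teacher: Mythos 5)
Your proposal is correct and follows essentially the same route as the paper's proof: factor the determinant recursion, bound $\det(I+\Lambda_t)\geq 1+\mathrm{trace}(\Lambda_t)=1+\EE\|\xb_t\|_{\Sigma_t^{-1}}^2$ via the eigenvalue product-versus-sum inequality, and telescope. The only cosmetic difference is that you sum all $T$ logarithms and then invoke min-at-most-average, whereas the paper lower-bounds each increment by the minimum before telescoping; these are equivalent.
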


\begin{proof}
By definition of $\Sigma_{t}$ we have that
\begin{align}
\log \det (\Sigma_{t+1}) &= \log\det (\Sigma_{t}) + \log \det (\Ib + \Sigma_{t}^{-1/2}\EE[\xb_{t}\xb_{t}^{\top}](\Sigma_{t})^{-1/2})\label{eq:expdet1}.
\end{align}
Denote $\Lambda_{t}= \Sigma_{t}^{-1/2}\EE[\xb_{t}\xb_{t}^{\top}](\Sigma_{t})^{-1/2}$ with eigenvalue $\lambda_1, \ldots, \lambda_{d} \geq 0$, we have that
\begin{align}
\det(I + \Lambda_{t}) =  \Pi_{i=1}^{d} (\lambda_{i}+1) \geq 1 + \sum_{i=1}^{d}\lambda_{i} = \text{trace}(1+\Lambda_{t}) = 1 + \EE\|\xb_t\|_{\Sigma_t^{-1}}^2
.\label{eq:expdet2}
\end{align}
Plugging \eqref{eq:expdet2} into \eqref{eq:expdet1} gives that
\begin{align*}
\log \det (\Sigma_{t+1}) \geq \log\det (\Sigma_{t}) + \log(1 +\EE\|\xb_t\|_{\Sigma_t^{-1}}^2)\geq  \log\det (\Sigma_{t}) + \min_{t \in [T]}\log\bigg(1+\EE\|\xb_t\|_{\Sigma_t^{-1}}^2\bigg)
.
\end{align*}
Taking telescope sum from $t=0$ to $t = T-1$ completes the proof.
\end{proof}

\begin{proof}[Proof of Lemma~\ref{lem:knr_fe}]
Given a hypothesis class $\cF$ and a coupling function $G(\cdot, \cdot): \cF \times \cF \rightarrow \RR$.
Let $n$ to be defined as follows,
\begin{align*}
n:= \min\left\{n \in \NN: 
n \geq ed_{\phi}\log(1 + 4nd_{s}R^{4}/(d_{\phi}\epsilon'^{2}))
\right\}.
\end{align*}
Then we have that $n = \tilde{O}(d_{\phi})$. We will prove $\dim_{FE}(\cF,G,\epsilon) \leq n$ by contradiction. Suppose that $\dim_{FE}(\cF,G,\epsilon) > n$, there exists an $\epsilon'$-independent (where $\epsilon' \geq \epsilon$) sequence $f_1, \ldots, f_n \in \cF$ such that there exist $g_1, \ldots, g_n \in \cF$, 
\begin{align}
\left\{
\begin{aligned}
&\sqrt{
\sum_{i = 1}^{t - 1} \left( G(g_t, f_i) \right)^2 
}\leq \epsilon'
,&\quad t \in [n]
,
\\
&\left|G(g_t, f_t) \right| > \epsilon'
,&\quad t \in [n]
.
\end{aligned}
\right.
\label{eq:newprooffe}
\end{align}

Recall that the ABC function  of KNR model is defiend as,
\begin{align*}
G_{h, f^*}(f, g) &= \sqrt{\left\langle 
\vec\left( 
(U_{h, f} - U_h^*)^\top (U_{h, f} - U_h^*)
\right),
\vec \left(
\EE_{s_h, a_h \sim \pi_g} \phi(s_h, a_h)\phi(s_h, a_h)^\top
\right)
\right\rangle }\\
&=\sqrt{\EE_{s_h, a_h \sim \pi_g}
\norm{(U_{h, f} - U_h^*)\phi(s_h, a_h)}^2}
.   
\end{align*}
Therefore, condition~\eqref{eq:newprooffe} can be reduced to
\begin{align}
\left\{
\begin{aligned}
&\sqrt{
\sum_{i = 1}^{t - 1} \EE_{s_h, a_h \sim \pi_{f_{i}}}
\norm{(U_{h, g_{t}} - U_h^*)\phi(s_h, a_h)}^2} \leq \epsilon'
,&\quad t \in [n]
,
\\
& \sqrt{\EE_{s_h, a_h \sim \pi_{f_{t}}}
\norm{(U_{h, g_{t}} - U_h^*)\phi(s_h, a_h)}^2} > \epsilon'
,&\quad t \in [n]
.
\end{aligned}
\right.   
\label{eq:newprooffe2}
\end{align}
Denote $U_{h,g_{t},j}, j\in [d_{s}]$ and $U_{h,j}^{*}, j\in [d_{s}]$ to be the rows of $U_{h,g_{t}}$ and $U_{h}^{*}$. Taking square over both side of the inequalities in \eqref{eq:newprooffe2} gives that
\begin{align}
\left\{
\begin{aligned}
&
\sum_{i = 1}^{t - 1}\sum_{j=1}^{d_s} \EE_{s_h, a_h \sim \pi_{f_{i}}}
[(U_{h, g_{t},j} - U_{h,j}^*)\phi(s_h, a_h)]^2 \leq \epsilon'^{2}
,&\quad t \in [n]
,
\\
&\sum_{j=1}^{d_s}\EE_{s_h, a_h \sim \pi_{f_{t}}}
[(U_{h, g_{t},j} - U_{h,j}^*)\phi(s_h, a_h)]^2 > \epsilon'^{2}
,&\quad t \in [n]
.
\end{aligned}
\right. 
\label{eq:newprooffe3}
\end{align}
Define $\Sigma_{t} = \sum_{i=1}^{t-1}\EE_{s_h, a_h \sim \pi_{f_{i}}}[\phi(s_h, a_h)\phi(s_h, a_h)^{\top}] + (\epsilon'^{2}/4d_{s}R^{2})\cdot \Ib$. Then by \eqref{eq:newprooffe3}, we have that
\begin{align*}
\lefteqn{
\sum_{j=1}^{d_{s}}\|(U_{h, g_{t},j} - U_{h,j}^*)\|_{\Sigma_{t}}^{2}
}
\\&=
\sum_{i = 1}^{t - 1}\sum_{j=1}^{d_s} \EE_{s_h, a_h \sim \pi_{f_{i}}}
[(U_{h, g_{t},j} - U_{h,j}^*)\phi(s_h, a_h)]^2
+ (\epsilon'^{2}/4d_{s}R^{2})\cdot\sum_{j=1}^{d_s} \EE_{s_h, a_h \sim \pi_{f_{t}}}\|U_{h, g_{t},j} - U_{h,j}^*\|_{2}^2
\\&\leq
\sum_{i = 1}^{t - 1}\sum_{j=1}^{d_s} \EE_{s_h, a_h \sim \pi_{f_{i}}}
[(U_{h, g_{t},j} - U_{h,j}^*)\phi(s_h, a_h)]^2
+ (\epsilon'^{2}/4d_{s}R^{2})\cdot\big[2d_{s}\max_{j}\|U_{h, g_{t},j}\|_{2}^2+2d_{s}\max_{j}\|U_{h,j}^*\|_{2}^2\big]
\\&\leq
2\epsilon'^{2}    
,
\end{align*}
where the first equality is by the Cauchy-Schwartz inequality and the last inequality is by $\|U_{h,g_{t},j}\|_{2} \leq \|U_{h,g_{t}}\|_{2}\leq R$, $\|U_{h,j}^*\|_{2}  \leq\|U_{h}^*\|_{2}\leq R $.
Furthermore we have that
\allowdisplaybreaks
\begin{align*}
\epsilon'^{2}
&\leq 
\sum_{j=1}^{d_s}\EE_{s_h, a_h \sim \pi_{f_{t}}}
[(U_{h, g_{t},j} - U_{h,j}^*)\phi(s_h, a_h)]^2
\\&=
\sum_{j=1}^{d_s}\EE_{s_h, a_h \sim \pi_{f_{t}}}
[(U_{h, g_{t},j} - U_{h,j}^*)\Sigma_{t}^{1/2}\Sigma_{t}^{-1/2}\phi(s_h, a_h)]^2
\\&\leq
\sum_{j=1}^{d_s}\EE_{s_h, a_h \sim \pi_{f_{t}}}
\|(U_{h, g_{t},j} - U_{h,j}^*)\Sigma_{t}^{1/2}\|_{2}^{2} \cdot\EE_{s_h, a_h \sim \pi_{f_{t}}}
\|\Sigma_{t}^{-1/2}\phi(s_h, a_h)\|_{2}^{2}
\\&=
\EE_{s_h, a_h \sim \pi_{f_{t}}}
\|\Sigma_{t}^{-1/2}\phi(s_h, a_h)\|_{2}^{2}\cdot \sum_{j=1}^{d_{s}}\|(U_{h, g_{t},j} - U_{h,j}^*)\|_{\Sigma_{t}}^{2}
,
\end{align*}
where the last inequality is by the Cauchy-Schwarz inequality for random variables. Thus, we have that $\EE_{s_h, a_h \sim \pi_{f_{t}}}
\|\Sigma_{t}^{-1/2}\phi(s_h, a_h)\|_{2}^{2} \geq 1/2$ for all $t \in [n]$. By applying Lemma~\ref{lem:REP}, we have that 
\begin{align*}
&
\min_{t\in[n]}\log(1 +
\EE_{s_h, a_h \sim \pi_{f_{t}}}\|\Sigma_{t}^{-1/2}\phi(s_h, a_h)\|_{2}^{2})
\\&\leq
\frac{1}{n}\log\Big(\frac{\det(\Sigma_{n+1})}{\det(\Sigma_{1})}\Big)
=
\frac{1}{n}\log \det \Big(1+\frac{4d_{s}R^{2}}{\epsilon'^{2}}\sum_{i=1}^{n}\EE_{s_h, a_h \sim \pi_{f_{i}}}[\phi(s_h, a_h)\phi(s_h, a_h)^{\top}]\Big)
.
\end{align*}
The above equation further implies that 
\begin{align*}
&
\frac{1}{n}\log \det \Big(1+\frac{4d_{s}R^{2}}{\epsilon'^{2}}\sum_{i=1}^{n}\EE_{s_h, a_h \sim \pi_{f_{i}}}[\phi(s_h, a_h)\phi(s_h, a_h)^{\top}]\Big)\bigg)
\\&\geq
\min_{t\in [n]}\log\Big(1+\EE_{s_h, a_h \sim \pi_{f_{t}}}
\|\Sigma_{t}^{-1/2}\phi(s_h, a_h)\|_{2}^{2}\Big)
\geq
\log(3/2)
.
\end{align*}
On the other hand, Lemma~\ref{lem:reff} implies that
\begin{align*}
&
\frac{1}{n}\log \det \Big(1+\frac{4d_{s}R^{2}}{\epsilon^{2}}\sum_{i=1}^{n}\EE_{s_h, a_h \sim \pi_{f_{i}}}[\phi(s_h, a_h)\phi(s_h, a_h)^{\top}]\Big)
\leq
\frac{d_{\phi}\log\Big(1+\frac{4nd_{s}R^{4}}{d_{\phi}\epsilon'^{2}}\Big)}{n}\leq e^{-1}
.
\end{align*}
This leads to a contradiction because $\epsilon' \geq \epsilon$ and $\log(3/2) > e^{-1}$.
We complete the proof of $\dim_{FE}(\cF, G, \epsilon) = \tilde{O}(d_{\phi})$. 
\end{proof}

\end{document}